\newcommand{\grad}{{\mathrm{grad}}}
\newcommand{\stiefel}{{\mathrm{St}(p, n)}}
\newcommand{\sk}{{\mathrm{skew}}}
\newcommand{\sym}{{\mathrm{sym}}}
\newcommand{\tangent}{{\mathrm{T}_X\stiefel}}
\newcommand{\bR}{\mathbb{R}}    %
\newcommand{\Jacob}[2]{\mathcal{J}_{#2}\left(#1\right)}
\newcommand{\vect}[1]{\mathrm{vec}\left(#1\right)}
\newcommand{\cN}{\mathcal{N}} %
\newcommand{\cL}{\mathcal{L}} %
\newcommand{\vectHess}[1]{H_{#1}} %
\newcommand{\nfX}{\nabla f(X)} %
\newcommand{\inner}[2]{\left\langle#1,\,#2\right\rangle} %
\newcommand{\Lc}{L} %
\newcommand{\gradc}{L'} %
\newcommand{\revision}[1]{{#1}}
\newcommand{\finalrevision}[1]{{#1}}
\begin{document}

\title{Infeasible Deterministic, Stochastic, and Variance-Reduction Algorithms for Optimization under Orthogonality Constraints}

\author{\name Pierre Ablin \email pierre.ablin@apple.com \\
       \addr Apple
       \AND
       \name Simon Vary \email simon.vary@uclouvain.be\\
       \addr ICTEAM/INMA, UCLouvain
       \AND
       \name Bin Gao \email gaobin@lsec.cc.ac.cn \\
       \addr ICMSEC/LSEC, AMSS, Chinese Academy of Sciences
       \AND
       \name P.-A. Absil \email pa.absil@uclouvain.be \\
       \addr ICTEAM/INMA, UCLouvain
       }

\editor{}

\maketitle

\begin{abstract}%
Orthogonality constraints naturally appear in many machine learning problems, from principal component analysis to robust neural network training. They are usually solved using Riemannian optimization algorithms, which minimize the objective function while enforcing the constraint.
However, enforcing the orthogonality constraint can be the most time-consuming operation in such algorithms. 
Recently, \citet{ablin2022fast} proposed the landing algorithm, a method with cheap iterations that does not enforce the orthogonality constraints but is attracted towards the manifold in a smooth manner.
This article provides new practical and theoretical developments for the landing algorithm.
First, the method is extended to the Stiefel manifold, the set of rectangular orthogonal matrices. We also consider stochastic and variance reduction algorithms when the cost function is an average of many functions.
We demonstrate that all these methods have the same rate of convergence as their Riemannian counterparts that exactly enforce the constraint, \revision{and converge to the manifold}.
Finally, our experiments demonstrate the promise of our approach to an array of machine-learning problems that involve orthogonality constraints.
\end{abstract}

\begin{keywords}
  Orthogonal manifold, Stiefel manifold, stochastic optimization, variance reduction
\end{keywords}

\section{Introduction}

Letting $f$ a function from $\mathbb{R}^{n\times p}$ to $\mathbb{R}$, we consider the problem of optimizing $f$ with orthogonality constraints:
\begin{equation}
    \min_{X\in\bR^{n\times p}} f(X), \qquad \text{s.t.}\quad X \in \stiefel = \{X\in\mathbb{R}^{n\times p}|\enspace X^\top X = I_p \}, \label{eq:optim_stiefel}
\end{equation}
where $\stiefel$ is \emph{the Stiefel manifold}.
Such a problem appears naturally in many machine learning problems as a way to control dissimilarity between learned features, e.g.~in principal component analysis (PCA)~\citep{hotelling1933analysis}, independent component analysis (ICA)~\citep{hyvarinen1999fast,TheCasAbs2009,ablin2018faster}, canonical correlation analysis~\citep{hotelling1936relations}, and more recently, for the training of neural networks to improve their stability \citep{arjovsky2015unitary, zhang2021orthogonality, wang2020orthogonal} and robustness against adversarial attacks \citep{cisse2017parseval, li2019orthogonal, li2019preventing, singla2021skew}. %

Riemannian optimization techniques are based on the observation that the orthogonality constraints in \eqref{eq:optim_stiefel} define a smooth matrix manifold \citep{absil2008optimization, boumal2023introduction} called the Stiefel manifold. The geometry of the manifold constraint allows for the extension of optimization techniques from the Euclidean to the manifold setting, including second-order methods~\citep{absil2007trust}, stochastic gradient descent~\citep{bonnabel2013stochastic}, and variance-reduced methods \citep{zhang2016riemannian, tripuraneni2018averaging, zhou2019faster}. 

A crucial part of Riemannian optimization methods is the use of \emph{retraction} \citep{ADM2002, absil2012projection}, which is a projection map on the manifold preserving the first-order information, and ensures that the iterates remain on the manifold. Computing retractions with orthogonality constraints involves linear algebra operations, such as matrix exponentiation, inverse, or QR decomposition. In some applications, e.g.,~when evaluating the gradient is relatively cheap, computing the retraction is the dominant cost of the optimization method. 
Unlike Euclidean optimization, ensuring that the iterates move on the manifold can be more costly than computing the gradient direction.

Additionally, the need to perform retractions, and more generally, to take the manifold's curvature into account, causes challenges in developing accelerated techniques in Riemannian optimization that the community has just started to overcome~\citep{, becigneul2018riemannian, alimisis2021momentum, criscitiello2022negative}. As a result, practitioners in deep learning sometimes rely on the use of adding a squared penalty term and minimize $f(X) + \lambda \cN(X)$ with $\cN(X) = \| X^\top X - I_p \|^2/4$ in the Frobenius norm,  which does not perfectly enforce the constraint. 

Unlike Riemannian techniques, where the constraint is exactly enforced in every iteration, and the squared penalty method, where the optimum of the problem is not exactly on the manifold, we employ a method that is in between the two. Motivated by the previous work of \cite{ablin2022fast} for the orthogonal matrix manifold, we design an algorithm that does not enforce the constraint exactly at every iteration but instead controls the distance to the constraint employing inexpensive matrix multiplication.
The iterates \emph{land} on the manifold exactly at convergence with the same convergence guarantees as standard Riemannian methods for solving~\eqref{eq:optim_stiefel}.

The following subsection provides a brief prior on the current optimization techniques for solving \eqref{eq:optim_stiefel}. The rest of the paper is organized as follows:
\begin{itemize}
  \item{In Section~\ref{sec:landing} we extend the landing algorithm to $\stiefel$. By bounding the step size, we guarantee that the iterates remain close to the manifold. We show that a function introduced in~\citep{gao2019parallelizable} based on the augmented Lagrangian is a merit function for the landing algorithm. \revision{This forms the cornerstone of our improved analysis over that of \cite{ablin2022fast}.}}
  \item{We use these tools for the theoretical analysis of the basic and variants of the landing algorithm in Section~\ref{sec:algorithms}. In Subsection~\ref{subsec:landing_gradient}, thanks to the
  new merit function, we greatly improve the theoretical results of \cite{ablin2022fast}: we demonstrate that the basic landing method with constant step size achieves $\inf_{k\leq K}\|\grad f(X_k)\|^2 =O(K^{-1})$. In Subsection~\ref{subsec:sgd}, we introduce a stochastic algorithm dubbed stochastic landing algorithm. We show that with a step size decaying in $K^{-\frac12}$, it achieves $\inf_{k\leq K}\|\grad f(X_k)\|^2 =O(K^{-\frac12})$. In Subsection~\ref{subsec:SAGA}, we extend the SAGA algorithm and show that the SAGA landing algorithm achieves $\inf_{k\leq K}\|\grad f(X_k)\|^2 =O(K^{-1})$. We recover each time the same convergence rate and sample complexity as the classical Riemannian counterpart of the methods---that uses retractions. \revision{We also demonstrate the convergence of all these methods to the Stiefel manifold.}}
  \item{In Section \ref{sec:experiments}, we numerically demonstrate the merits of the method when computing retractions is a bottleneck of classical Riemannian methods.}
\end{itemize}

Regarding the convergence speed results, the reader should be aware that we use the \emph{square norm} of the gradient as a criterion, while some readers might be more familiar with results stated without a square.

\revision{This paper improves on the results of \cite{ablin2022fast} in several important and non-trivial directions: i) the landing algorithm is extended to the Stiefel manifold, while \cite{ablin2022fast} only consider the square $n=p$ case, ii) stochastic and variance reduced extensions of the landing algorithm are proposed, while \cite{ablin2022fast} only consider gradient descent, and iii) we greatly improve the theoretical results, thanks to a new analysis. We essentially prove that all proposed algorithms converge at the same rate as their Riemannian counterparts using the same step-size schedule, and converge to the manifold. \cite{ablin2022fast} could only prove convergence of the base landing method using \emph{decaying} step sizes, leading to a sub-optimal $K^{-\frac13}$ convergence rate, while our result \autoref{prop:convergence_landing} proves that the base landing algorithm converges at a $K^{-1}$ rate with constant step-size, heavily improving the rate.}\footnote{When this work was ready to be made public, the preprint~\citep{schechtman2023orthogonal} appeared that pursues similar goals. It addresses the more general problem of equality-constrained optimization, it uses a different merit function than the one we introduce in Section~\ref{sec:merit} and it does not consider variance reduction as we do in Section~\ref{subsec:SAGA}. The numerical experiments also differ considerably, as they only consider a Procrustes problem, while we experiment with deep neural networks.}
\paragraph{Notation} The norm of a matrix $\|M\|$ denotes the Frobenius norm, i.e., the vectorized $\ell_2$ norm. We let $I_p$ denote the $p\times p$ identity matrix, and $\stiefel$ denote the Stiefel manifold, which is the set of $n\times p$ matrices such that $X^\top X = I_p$. The tangent space of the Stiefel manifold at $X\in \stiefel$ is denoted by $\tangent$. The projection on the set of $p \times p$ skew-symmetric matrices denoted by $\sk_p\subset \bR^{p\times p}$ is $\sk(M) = \frac12(M - M^\top)$ and on the set of symmetric matrices is $\sym(M) = \frac12(M + M^\top)$. The exponential of a matrix is $\exp(M)$. The gradient of a function $f:\mathbb{R}^{n\times p}\to \mathbb{R}$ is denoted by $\nabla f$ and we define its Riemannian gradient as $\grad f(X) = \sk(\nabla f(X)X^\top)X$ for all $X\in\mathbb{R}^{n\times p}$ as explained in detail in Section~\ref{sec:landing}. We say that a function is $\Lc$-smooth if it is differentiable and its gradient is $\Lc$-Lipschitz. The constant $\Lc$ is then the smoothness constant of $f$.

\subsection{Prior Work on Optimization with Orthogonality Constraints}

Equation~\eqref{eq:optim_stiefel}
is an optimization problem over a matrix manifold. In the literature, we find two main approaches to solving this problem, reviewed next.
\subsubsection{Trivializations} This approach proposed by \citet{lezcano2019cheap, lezcano2019trivializations} consists in finding a differentiable surjective function $\phi:E\to \stiefel$ where $E$ is a
Euclidean space, and to solve
\begin{equation}
  \label{eq:trivialization}
  \min_{A\in E}f(\phi(A))\enspace.
\end{equation}
The main advantage of this method is that it turns a Riemannian optimization problem on $\stiefel$ into an optimization on a Euclidean space,
for which we can apply all existing Euclidean methods, such as
gradient descent, stochastic gradient descent, or Adam. This is especially convenient in deep learning, where most standard optimizers are not meant to handle Riemannian constraints.
However, this method has two major drawbacks. First, it can drastically change the optimization landscape. Second, the gradient of the function that is optimized is, following the chain rule, $\nabla (f \circ \phi) = \left(\frac{\partial \phi}{\partial z}\right)^\top\nabla f \circ \phi$, and the Jacobian-vector product can be very expensive to compute.

To give a concrete example, we consider the parametrization $\phi$ used by \citet{singla2021skew}: $\phi(A) = \exp(A)\begin{pmatrix}
  I_p \\ 0
\end{pmatrix}$, where $A\in\sk_n$, with $\sk_n$ the set of $n\times n$ skew symmetric matrices. We see that computing this trivialization requires computing the exponential of a potentially large $n\times n$ matrix. Furthermore, when computing the gradient of $f\circ \phi$, one needs to compute the Jacobian-vector product with a matrix exponential, which requires computing a larger $2n \times 2n$ matrix exponential~\citep{lezcano2019cheap}: this becomes prohibitively costly when $n$ is large.
\subsubsection{Riemannian optimization}
\label{subsec:riem_optim}
This approach consists in extending the classical Euclidean methods such as
gradient descent or stochastic gradient descent to
the Riemannian setting. For instance, consider Euclidean gradient descent to minimize $f$ in the Euclidean setting, which iterates $X_{k+1}= X_k - \eta \nabla f(X_k)$ where $\eta > 0$ is a step size. There are two ingredients to transform it into a Riemannian method. First, the Euclidean gradient $\nabla f(X)$ is replaced by the Riemannian gradient $\grad f(X)$. Second, the subtraction is replaced by a retraction $\mathcal{R}$, which allows moving while staying on the manifold. We obtain the iterations of the Riemannian gradient descent:
\begin{equation}
  \label{eq:riemannian_gradient_descent}
  X_{k+1} = \mathcal{R}(X_k, -\eta \grad f(X_k))\enspace.
\end{equation}
In the case of $\stiefel$~\citep{edelman1998geometry}, the tangent space at $X$ is the set
\begin{equation}
  \label{eq:tangent_space}
  \tangent = \{X\Omega + X_{\perp}K:\Omega \in\sk_p, K\in\mathbb{R}^{n - p \times p}\} = \{WX: W \in \sk_n\}
\end{equation}
and the Riemannian gradient with respect to the canonical metric~\citep{edelman1998geometry} is 
\begin{equation}
  \label{eq:riemannian_gradient}
  \grad f(X) = \sk(\nabla f(X)X^\top)X
\end{equation}
where $\sk(M) = \frac12(M - M^\top)$ is the skew-symmetric part of a matrix. In~\eqref{eq:riemannian_gradient}, for convenience, we have omitted a factor of $2$; compare with~\cite[\S 3]{gao2022optimization}. This omission is equivalent to magnifying the canonical metric by a factor of $2$.
From a computational point of view,
computing this gradient is cheap: it can be rewritten, for $X\in\stiefel$, as $\grad f(X) = \frac12(\nabla f(X) - X\nabla f(X)^\top X)$, which can be computed with one matrix-matrix product of size $p\times n$ and $n\times p$, and one matrix-matrix product of size $n \times p$ and $p\times p$.

A retraction $\mathcal{R}(X, Z) = Y$ is a mapping that takes as inputs $X\in\stiefel$ and $Z\in\tangent$, and outputs $Y\in \stiefel$, such that it ``goes in the direction of $Z$ at the first order'', i.e.,
we have $Y = X + Z + o(\|Z\|)$. It defines a way to move on the manifold $\stiefel$. We give several examples, where we write $Z =X\Omega + X_{\perp}K = WX$ in view of~\eqref{eq:tangent_space}.
\begin{enumerate}
  \item \emph{Exponential retraction}: \begin{equation}
    \label{eq:exp_retraction}
    \mathcal{R}(X, Z) =   \begin{pmatrix}
    X & X_\perp
  \end{pmatrix}\exp\begin{pmatrix}
    \Omega & -K^\top \\
    K & 0
  \end{pmatrix}
  \begin{pmatrix}
    I_p  \\
    0
  \end{pmatrix}\enspace .
\end{equation}
  This is the exponential map---that follows geodesics---for the canonical metric on the manifold~\citep[(10)]{ZimmermannHueper2022}. The most expensive computations are a matrix exponentiation of a matrix of size $2p\times 2p$ and a matrix-matrix product between matrices of size $n\times 2p$ and $2p\times 2p$.
  \item \emph{Cayley retraction}:
  \begin{equation*}
      \label{eq:cayley_retraction}
      \mathcal{R}(X, Z) = (I_n - \frac{W}{2})^{-1}(I_n + \frac{W}{2}) X  \enspace.
  \end{equation*}
  Though the inverse exists for any $W\in\sk_n$, it requires solving a $n\times n$ linear system that dominates the cost. When $Z$ takes the form~\eqref{eq:riemannian_gradient} and $\nabla f(X)$ is available, this retraction can be computed in $8np^2 + O(p^3)$ flops, see~\citet[\S 2.2]{WenYin2012}
  \item \emph{Projection retraction}:\begin{equation}
    \label{eq:proj_retraction}
    \mathcal{R}(X, Z) = \mathrm{Proj}(X + Z), \enspace \text{with }\mathrm{Proj}(Y) = Y(Y^\top Y)^{-\frac12}\enspace.
\end{equation}
Computing this retraction requires computing the inverse-square root of a matrix, which is also a costly linear algebra operation.
\end{enumerate}

These operations allow us to implement Riemannian gradient descent. Many variants have then been proposed to accelerate convergence, among which trust-region
methods~\citep{absil2007trust} and Nesterov acceleration~\citep{ahn2020nesterov}.

In most machine learning applications, the function $f$ corresponds to empirical risk minimization, and so it has a sum structure. It can be written as:
\begin{equation}
  \label{eq:erm}
  f(X) = \frac1N\sum_{i=1}^Nf_i(X)\enspace,
\end{equation}
where $N$ is the number of samples and each $f_i$ is a ``simple'' function. In the Euclidean case, stochastic gradient descent
(SGD)~\citep{robbins1951stochastic} is the algorithm of choice to minimize such a function. At each iteration, it takes a random function $f_i$ and goes in the direction opposite to its gradient. Similarly, in the Riemannian case, we can implement Riemannian-SGD~\citep{bonnabel2013stochastic} by iterating
\begin{equation}
  \label{eq:sgd}
  X_{k+1} = \mathcal{R}(X_k, -\eta_k\grad f_i(X_k)),\enspace \text{where } i \sim \mathcal{U}[1, N] 
\end{equation}
with $i \sim \mathcal{U}[1, N] $ meaning that index $i$ is drawn from the discrete uniform distribution between $1$ and $N$ at each iteration. 
This method only requires one sample at each iteration; hence it scales gracefully with $N$. However, its convergence is quite slow and typically requires diminishing step sizes.

Variance reduction techniques~\citep{johnson2013accelerating, schmidt2017minimizing, defazio2014saga} are classical ways to mitigate this problem and allow to obtain algorithms that are stochastic (i.e.,
use only one sample at a time) and
that provably converge while having a constant step size, with a faster rate of convergence than SGD.

\finalrevision{
    \subsection{Infeasible methods}
    Like the present work, several infeasible methods have been proposed to solve the optimization problem, while not enforcing the orthogonality constraint at each iteration. 
    \citet{ablin2018faster} propose a method we extend in this work in several directions. 
    \citet{gao2019parallelizable} propose an approximated augmented Lagrangian method, taking steps in directions that approximate the gradient of the augmented Lagrangian. 
    We draw inspiration from this work in order to perform our theoretical analysis; a detailed account of the differences with our algorithm is given in \autoref{sec:merit}.
    Finally, \citet{liu2024penalty} propose a proximal method to solve possibly non-smooth problems on the manifold, with an approximate orthogonalization scheme at each iteration. 
    Owing to the more complicated problem structure they consider, the proposed method invokes at each step an inner iterative solver to compute a proximal operator.
}

\section{Geometry of the Landing Field and its Merit Function}
\label{sec:landing}
For $\lambda > 0$ and $X\in\mathbb{R}^{n\times p}$, we define the landing field as 
\begin{equation}
  \label{eq:landing_field}
  \Lambda(X) =\grad f(X) + \lambda X(X^\top X - I_p),
\end{equation}
where $\grad f(X) = \sk(\nabla f(X)X^\top)X$.
This field will be used to define iterates $\tilde{X} = X - \eta \Lambda(X)$ and different variants in the next sections.
We define 
\begin{equation}
\label{eq:norm_function}
    \mathcal{N}(X) = \frac14\|X^\top X - I_p\|^2
\end{equation}
where $\|\cdot\|$ is the Frobenius norm so that the second term in~\eqref{eq:landing_field} 
is $\lambda \nabla \mathcal{N}(X)$. 

Here, $\grad f(X)$ denotes the extension to all $X\in\mathbb{R}^{n\times p}$ of formula~\eqref{eq:riemannian_gradient}. It thus coincides with the Riemannian gradient when $X\in\stiefel$. This extension has several attractive properties.
First, for all full-rank $X\in\mathbb{R}^{n\times p}$, $\grad f(X)$ can still be seen as a Riemannian gradient of $f$ on the manifold $\{Y\in\mathbb{R}^{n\times p} \mid Y^\top Y = X^\top X\}$ with respect to a canonical-type metric, as shown in~\cite[Proposition~4]{gao2022optimization}.
Second, this formula yields orthogonality between the two terms of the field $\Lambda$, for any $X$. Indeed, we have $\langle\grad f(X), X(X^\top X - I_p)\rangle = \langle\sk(\nabla f(X)X^\top),X(X^\top X - I_p)X^\top\rangle$, which cancels since it is the scalar product between a skew-symmetric and a symmetric matrix.

The intuition behind the landing direction is fairly simple: the component $-\grad f(X)$ is here to optimize the function, while the term $-X(X^\top X - I_p)$ attracts $X$
towards $\stiefel$.
More formally, since these two terms are orthogonal, the field cancels if and only if both terms cancel. The fact that $X(X^\top X - I_p)=0$ gives, assuming $X$ injective,  $X^\top X = I_p$, hence $X\in \stiefel$. Then, the fact that $\grad f(X) =0$ combined with $X\in\stiefel$ shows that $X$ is a first-order critical point of the function $f$ on the manifold.
This reasoning is qualitative: the next part formalizes this geometrical intuition.

\subsection{Geometrical Results and Intuitions}

In the remainder of the paper, we will always consider algorithms whose iterates stay close to the manifold $\stiefel$. 
We measure this closeness with the function $\mathcal{N}(X)$ (introduced in~\eqref{eq:norm_function}), and define the \emph{safe region} as 
\begin{equation}
    \label{eq:safe_region}
    \stiefel^{\varepsilon} = \{X\in\mathbb{R}^{n\times p}|\enspace \mathcal{N}(X)\leq \frac14 \varepsilon^2\} = \{X\in\mathbb{R}^{n\times p}|\enspace \|X^\top X - I_p\|\leq \varepsilon\} 
\end{equation}
for some $\varepsilon$ between $0$ and $1$. A critical part of our work is to ensure that the iterates of our algorithms remain in $\stiefel^\varepsilon$, which in turn guarantees the following bounds on the singular values of $X$.
All proofs are deferred to the appendix.
\begin{lemma}
\label{lemma:singular_values}
    For all
    $X\in \stiefel^\varepsilon$, the singular values of $X$ are between $\sqrt{ 1 - \varepsilon}$ and $ \sqrt{1 + \varepsilon}$.
\end{lemma}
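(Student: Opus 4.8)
The plan is to relate the singular values of $X$ directly to the eigenvalues of the symmetric matrix $X^\top X$, and then control those eigenvalues using the hypothesis $\|X^\top X - I_p\| \leq \varepsilon$. Recall that if $\sigma$ is a singular value of $X$, then $\sigma^2$ is an eigenvalue of $X^\top X$, and conversely every eigenvalue of $X^\top X$ is nonnegative and of this form. So it suffices to show that every eigenvalue $\mu$ of $X^\top X$ satisfies $1 - \varepsilon \leq \mu \leq 1 + \varepsilon$, and then take square roots (which is valid since $\varepsilon < 1$ ensures $1 - \varepsilon > 0$, so the square roots are real and the inequality is preserved by monotonicity of $\sqrt{\cdot}$ on the nonnegatives).

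The key estimate is that the eigenvalues of $X^\top X - I_p$ are exactly $\mu - 1$ as $\mu$ ranges over the eigenvalues of $X^\top X$, since $X^\top X$ and $I_p$ commute. Because $X^\top X - I_p$ is symmetric, each of its eigenvalues is bounded in absolute value by its spectral (operator) norm, which in turn is bounded by its Frobenius norm: $|\mu - 1| \leq \|X^\top X - I_p\|_{\mathrm{op}} \leq \|X^\top X - I_p\| \leq \varepsilon$. This gives $1 - \varepsilon \leq \mu \leq 1 + \varepsilon$ for every eigenvalue $\mu$, hence $\sqrt{1-\varepsilon} \leq \sigma \leq \sqrt{1+\varepsilon}$ for every singular value $\sigma$ of $X$, as claimed.

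There is no serious obstacle here; the only mild point of care is making the chain of norm comparisons explicit — namely that for a symmetric (or more generally normal) matrix the largest eigenvalue in modulus equals the operator norm, and that the operator norm is dominated by the Frobenius norm — and confirming that the hypothesis $\varepsilon \in (0,1)$ is what licenses taking real square roots while keeping the lower bound strictly positive. One could alternatively phrase the whole argument via the Weyl / Hoffman–Wielandt perturbation bound for eigenvalues of symmetric matrices, but the direct commuting-matrices argument above is the cleanest and requires nothing beyond elementary linear algebra.
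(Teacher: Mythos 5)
Your proof is correct and follows essentially the same route as the paper: both reduce the claim to bounding each $|\sigma_i^2-1|$ by $\|X^\top X - I_p\|\leq\varepsilon$, the paper doing so by writing $\|X^\top X - I_p\|^2=\sum_i(\sigma_i^2-1)^2$ directly, you by passing through the spectral-norm-versus-Frobenius-norm comparison for the symmetric matrix $X^\top X - I_p$. The difference is purely cosmetic, so nothing further is needed.
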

Note that when $\varepsilon=0$, the singular values of $X$ are all equal to $1$, thus making the columns of the matrix orthogonal and ensuring that $X\in\stiefel$.

As noted before, a critical feature of the landing field is the orthogonality of the two components, which holds between $\nabla \cN(X)$ and any direction $AX$ with a skew-symmetric matrix $A$. In order for the results to generalize to the stochastic and variance reduction setting, in the rest of this section we consider a more general form of the landing field as
\begin{equation}
\label{eq:skew_field}
    F(X, A) = AX + \lambda \nabla \cN(X),
\end{equation}
where $A$ is an $n\times n $ skew-symmetric matrix. The formula of the landing field in \eqref{eq:landing_field} is recovered by taking $A = \sk(\nabla f(X)X^\top)$ in the above equation \eqref{eq:skew_field}, that is to say $\Lambda(X) = F(X, \sk(\nabla f(X)X^\top))$.
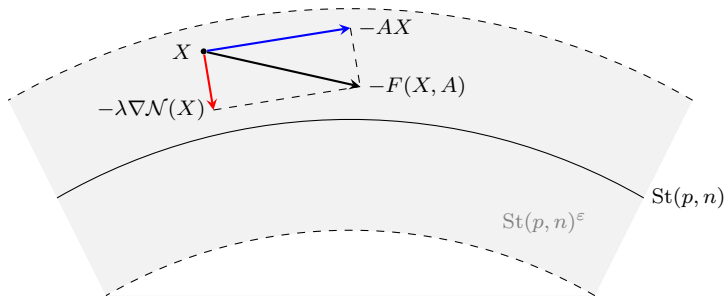
\begin{figure}[t]
	\scriptsize
	\centering
		{ 
	\begin{tikzpicture}[scale=.65]
		\fill[gray!10] (5,-2) -- (7,2) -- (7,2) arc (60:120:14) -- (-7,2) -- (-5,-2) -- (5,-2) arc (60:120:10);
		\fill[white] (5,-2) -- (5,-2) arc (60:120:10) -- (5,-2);
				
		\coordinate [fill=black,inner sep=.8pt,circle,label=180:{$X$}] (X) at (-3,3);
		\coordinate [label=0:{$-AX$}] (G) at (0,3.48);
		\coordinate [label=180:{$-\lambda\nabla\mathcal{N}(X)$}] (N) at (-2.8,1.8);
		\coordinate [label=0:{$-F(X,A)$}] (L) at ($(G)-(X)+(N)-(X)+(X)$);
				
		\coordinate [label=0:{$\mathrm{St}(p,n)$}] (Stiefel) at (6,0);
		\coordinate [label=180:{\color{gray}$\mathrm{St}(p,n)^{\varepsilon}$}] (eps) at (5,-0.5);
				
		\draw[-{stealth[black]},black,thick] (X) --  (L);
		\draw[-{stealth[blue]},blue,thick] (X) --  (G);
		\draw[-{stealth[red]},red,thick] (X) --  (N);
		\draw[dashed] (G) -- (L);
		\draw[dashed] (N) -- (L);
				
%		\draw [help lines] (-6,-5) grid (6,5);
		\draw (6,0) arc (60:120:12);
		\draw[dashed] (7,2) arc (60:120:14);
		\draw[dashed] (5,-2) arc (60:120:10);
	\end{tikzpicture}
	}
	\caption{Illustration of the geometry of the field $F$. Note the orthogonality of the two components.\label{fig:diagram_landing}}
\end{figure}

Fields of the form~\eqref{eq:skew_field} will play a key role in our analysis, exhibiting interesting geometrical properties which stem from the orthogonality of the two terms: $AX$ and $\nabla\cN(X) = X(X^\top X - I_p)$ are orthogonal. Figure~\ref{fig:diagram_landing} illustrates the geometry of the problem and of the field $F$. We have the following inequalities:
\begin{proposition}
\label{prop:bound_landing_norm}
  For all $X\in\stiefel^\varepsilon$ and $A \in \sk_n$, the norm of the field~\eqref{eq:skew_field} admits the following bounds:
  $$
 \|AX\|^2 + 4\lambda^2(1-\varepsilon)\mathcal{N}(X) \leq \|F(X,A)\|^2\leq\|AX\|^2 + 4\lambda^2(1+\varepsilon)\mathcal{N}(X)
  $$
\end{proposition}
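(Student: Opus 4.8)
The plan is to exploit the orthogonality of the two components of $F(X,A)$ to split $\|F(X,A)\|^2$ via Pythagoras, and then to sandwich the remaining term $\lambda^2\|\nabla\mathcal{N}(X)\|^2$ between multiples of $\mathcal{N}(X)$ using the singular value bounds of Lemma~\ref{lemma:singular_values}.

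First I would record the orthogonality $\langle AX,\,\nabla\mathcal{N}(X)\rangle = 0$ already noted below \eqref{eq:skew_field}: since $\nabla\mathcal{N}(X) = X(X^\top X - I_p)$, we have $\langle AX,\,X(X^\top X - I_p)\rangle = \langle A,\,X(X^\top X - I_p)X^\top\rangle$, the Frobenius inner product between the skew-symmetric matrix $A$ and the symmetric matrix $X(X^\top X - I_p)X^\top$, hence zero. Therefore $\|F(X,A)\|^2 = \|AX\|^2 + \lambda^2\|\nabla\mathcal{N}(X)\|^2$, and it remains to bound $\|\nabla\mathcal{N}(X)\|^2$.

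Next I would pass to the spectral decomposition. Set $S = X^\top X - I_p$, a symmetric $p\times p$ matrix, and diagonalize $X^\top X = Q\,\mathrm{diag}(\sigma_1^2,\dots,\sigma_p^2)\,Q^\top$ where $\sigma_1,\dots,\sigma_p$ are the singular values of $X$; note $S = Q\,\mathrm{diag}(\sigma_i^2-1)\,Q^\top$ commutes with $X^\top X$. Then $\|\nabla\mathcal{N}(X)\|^2 = \|XS\|^2 = \Tr(S X^\top X S) = \sum_{i=1}^p \sigma_i^2(\sigma_i^2-1)^2$, while $4\mathcal{N}(X) = \|S\|^2 = \sum_{i=1}^p(\sigma_i^2-1)^2$. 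By Lemma~\ref{lemma:singular_values}, for $X\in\stiefel^\varepsilon$ each $\sigma_i^2$ lies in $[1-\varepsilon,\,1+\varepsilon]$, so termwise $(1-\varepsilon)(\sigma_i^2-1)^2 \leq \sigma_i^2(\sigma_i^2-1)^2 \leq (1+\varepsilon)(\sigma_i^2-1)^2$; summing gives $4(1-\varepsilon)\mathcal{N}(X) \leq \|\nabla\mathcal{N}(X)\|^2 \leq 4(1+\varepsilon)\mathcal{N}(X)$. Multiplying by $\lambda^2$ and adding $\|AX\|^2$ yields the claimed two-sided bound.

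I do not expect a genuine obstacle here; the only mild care needed is to observe that $S$ and $X^\top X$ are simultaneously diagonalizable (they differ by $I_p$), which is exactly what lets the bound on $\Tr(SX^\top X S)$ be applied eigenvalue by eigenvalue.
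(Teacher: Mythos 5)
Your proposal is correct and follows essentially the same route as the paper: orthogonality of the two components gives $\|F(X,A)\|^2 = \|AX\|^2 + \lambda^2\|X(X^\top X - I_p)\|^2$, and the singular-value bounds of Lemma~\ref{lemma:singular_values} sandwich $\|X(X^\top X - I_p)\|^2$ between $4(1-\varepsilon)\mathcal{N}(X)$ and $4(1+\varepsilon)\mathcal{N}(X)$. You merely spell out the spectral computation that the paper leaves implicit.
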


The orthogonality of the two terms also ensures that going in the direction of $-F(X, A)$ allows us to remain in the safe region $\stiefel^\varepsilon$ as long as the step size is small enough.

\begin{lemma}[Step-size safeguard] \label{lemma:safe_step}
 Let $X\in \stiefel^\varepsilon$, $A \in \sk_n$, and consider the update $\tilde{X} =X - \eta F(X, A)$, where $\eta > 0$ is a step size and $F(X,A)$ is the field~\eqref{eq:skew_field}. Define $g = \| F(X, A)\|$ and $d = \|X^\top X - I_p\|$. If the step size satisfies 
  \begin{equation} \label{eq:safe_step size}
      \eta \leq \eta(X) := \min \left\{ \frac{\lambda d (1-d) + \sqrt{\lambda^2 d^2 (1-d)^2 + g^2 (\varepsilon - d)}}{g^2}, \frac{1}{2\lambda} \right\},
  \end{equation}
  then the next iterate $\tilde{X}$ remains in $\stiefel^\varepsilon$.
  \end{lemma}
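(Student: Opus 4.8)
The plan is to turn the membership condition $\tilde X\in\stiefel^\varepsilon$, i.e.\ $\|\tilde X^\top\tilde X-I_p\|\le\varepsilon$, into a scalar quadratic inequality in the step size $\eta$, whose largest root will be exactly the first term in the minimum defining $\eta(X)$. Throughout I abbreviate $N:=X^\top X-I_p$, a symmetric matrix with $\|N\|=d$, so that $\nabla\mathcal{N}(X)=XN$ and $F(X,A)=AX+\lambda XN$.

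First I would expand $\tilde X^\top\tilde X-I_p = N - \eta(X^\top F+F^\top X)+\eta^2 F^\top F$ and simplify the cross term. Here the structural property of the landing field stressed earlier is used: because $A^\top=-A$, the matrix $X^\top A X$ is skew-symmetric, so the $AX$ part of $F$ contributes nothing to the symmetric combination $X^\top F+F^\top X$; what remains is $X^\top F+F^\top X=\lambda\big((N+I_p)N+N(N+I_p)\big)=2\lambda(N^2+N)$, using that $N=N^\top$ commutes with $I_p$. This gives the identity $\tilde X^\top\tilde X-I_p=(1-2\eta\lambda)N-2\eta\lambda N^2+\eta^2 F^\top F$. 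Taking Frobenius norms, applying the triangle inequality together with the elementary bounds $\|N^2\|\le\|N\|^2=d^2$ (valid since $N$ is symmetric) and $\|F^\top F\|\le\|F\|^2=g^2$, and restricting to $\eta\le\tfrac1{2\lambda}$ so that $1-2\eta\lambda\ge0$, I obtain
\[
  \|\tilde X^\top\tilde X-I_p\|\;\le\;(1-2\eta\lambda)\,d+2\eta\lambda\,d^2+\eta^2 g^2\;=\;d-2\lambda d(1-d)\,\eta+g^2\eta^2.
\]

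It then remains to impose that the right-hand side be at most $\varepsilon$, i.e.\ that $q(\eta):=g^2\eta^2-2\lambda d(1-d)\eta+(d-\varepsilon)$ be nonpositive. If $g=0$ then $F(X,A)=0$ and $\tilde X=X$, so there is nothing to prove; otherwise $q$ is a genuine upward parabola, and since $X\in\stiefel^\varepsilon$ gives $d\le\varepsilon<1$ we have $q(0)=d-\varepsilon\le0$, hence $q$ has one root $\le0$ and one root $\ge0$. Its larger root is precisely $\eta_+=\big(\lambda d(1-d)+\sqrt{\lambda^2 d^2(1-d)^2+g^2(\varepsilon-d)}\big)/g^2$, so $q(\eta)\le0$ for every $\eta\in[0,\eta_+]$; combined with the earlier restriction, any $\eta\le\min\{\eta_+,\tfrac1{2\lambda}\}=\eta(X)$ forces $\tilde X\in\stiefel^\varepsilon$.

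The calculation is short, and I do not expect a real obstacle --- only three points need care: (i) the cancellation of $X^\top AX$, which is what makes the residual depend on $A$ only through $g$ and not through the direction of $AX$; (ii) the use of $\eta\le1/(2\lambda)$ to legitimately drop the absolute value on $1-2\eta\lambda$ before the triangle inequality; and (iii) the use of $d\le\varepsilon$ (rather than just $d<1$) to guarantee $q(0)\le0$, so that the admissible interval $[0,\eta_+]$ is nonempty and the formula for $\eta_+$ is real.
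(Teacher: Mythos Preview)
Your proof is correct and follows essentially the same route as the paper: expand $\tilde X^\top\tilde X-I_p$, use the skew-symmetry of $A$ to reduce the cross term to $2\lambda(N^2+N)$, apply the triangle inequality under the restriction $\eta\le 1/(2\lambda)$, and then solve the resulting quadratic in $\eta$. Your treatment is slightly more explicit than the paper's (you separate out the $g=0$ case and spell out why $q(0)\le0$ guarantees a nonnegative root), but the argument is the same.
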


This lemma is of critical importance, both from a practical and theoretical point of view. In practice, at each iteration of our algorithms introduced later, we always compute the safeguard step~\eqref{eq:safe_step size}, and if the safeguard step is smaller than the prescribed step size, we use the safeguard step instead. 
Note that the formula for the safeguard step only involves quantities that are readily available when the field $F$ has been computed: computing $\eta(X)$ at each iteration does not add a significant computational overhead.

We can furthermore lower bound the safeguard step in~\eqref{eq:safe_step size} 
by a quantity independent of $X$:
\begin{lemma}[A lower-bound on the step-size safeguard]\label{lemma:safe_step size_lower}
    Assuming that $\|A X\|_F\leq \tilde{a}$, we have that the upper-bound in Lemma~\ref{lemma:safe_step} is lower-bounded by
    \begin{equation}
        \eta(X) \geq \eta^*(\tilde{a}, \varepsilon, \lambda) 
    \end{equation}
    where the quantity $ \eta^*(\tilde{a}, \varepsilon, \lambda)$, given in Appendix~\ref{app:proof:safe_step}, is positive for any $\lambda, \varepsilon$ and $\tilde{a}$.
\end{lemma}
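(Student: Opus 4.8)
The plan is to take the explicit expression for $\eta(X)$ in \eqref{eq:safe_step size} and bound each of the two arguments of the $\min$ from below by quantities that depend only on $\tilde a$, $\varepsilon$, and $\lambda$. The second argument, $\frac{1}{2\lambda}$, is already independent of $X$, so all the work goes into the first argument
\[
r(X) := \frac{\lambda d(1-d) + \sqrt{\lambda^2 d^2(1-d)^2 + g^2(\varepsilon - d)}}{g^2},
\]
where $d = \|X^\top X - I_p\|$ and $g = \|F(X,A)\|$. Since $X\in\stiefel^\varepsilon$ we have $0\le d\le \varepsilon < 1$, so the quantity $\varepsilon - d$ under the square root is nonnegative and the whole numerator is nonnegative; the only way $r(X)$ can be small is if $g$ is large relative to the numerator, or if $d$ is close to $\varepsilon$ while $g$ is also not small.

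First I would get an upper bound on $g$. By Proposition~\ref{prop:bound_landing_norm}, $g^2 \le \|AX\|^2 + 4\lambda^2(1+\varepsilon)\mathcal N(X) \le \tilde a^2 + \lambda^2(1+\varepsilon)\varepsilon^2$, using $\mathcal N(X) = \tfrac14 d^2 \le \tfrac14\varepsilon^2$ and the hypothesis $\|AX\|\le\tilde a$. Call this upper bound $G^2 := \tilde a^2 + \lambda^2(1+\varepsilon)\varepsilon^2$, which depends only on $\tilde a,\varepsilon,\lambda$. Next I would handle the numerator. The tempting move — just drop the terms — fails because the numerator genuinely vanishes as $d\to 0$ and as $d\to\varepsilon$; the point is that when the numerator is small, $g$ is also correspondingly small, and the ratio stays bounded away from zero. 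Concretely, I would split into two regimes in $d$, say $d \le \varepsilon/2$ and $d \ge \varepsilon/2$. In the regime $d\le\varepsilon/2$: then $\varepsilon - d \ge \varepsilon/2$, so the square-root term alone gives numerator $\ge g\sqrt{\varepsilon/2}$, hence $r(X) \ge \sqrt{\varepsilon/2}/g \ge \sqrt{\varepsilon/2}/G$. In the regime $d\ge\varepsilon/2$: here I use the lower bound from Proposition~\ref{prop:bound_landing_norm}, $g^2 \ge 4\lambda^2(1-\varepsilon)\mathcal N(X) = \lambda^2(1-\varepsilon)d^2$, and also the crude bound $g^2 \le G^2$; combining these with $d(1-d)\ge d(1-\varepsilon)\ge \tfrac{\varepsilon}{2}(1-\varepsilon)$ in the numerator gives
\[
r(X) \ge \frac{\lambda d(1-d)}{g^2} \ge \frac{\lambda \cdot \tfrac{\varepsilon}{2}(1-\varepsilon)}{G^2}.
\]
Taking the minimum of the two regime bounds and of $\tfrac{1}{2\lambda}$ yields an explicit
\[
\eta^*(\tilde a,\varepsilon,\lambda) := \min\left\{\frac{1}{2\lambda},\ \frac{1}{G}\sqrt{\frac{\varepsilon}{2}},\ \frac{\lambda\varepsilon(1-\varepsilon)}{2G^2}\right\}, \qquad G^2 = \tilde a^2 + \lambda^2(1+\varepsilon)\varepsilon^2,
\]
which is manifestly positive for all $\lambda>0$, $\varepsilon\in(0,1)$, $\tilde a\ge 0$.

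The main obstacle is getting the case analysis clean enough that the bounds in the two regimes are genuinely valid and genuinely independent of $X$ — in particular, correctly exploiting the lower bound $g^2 \ge \lambda^2(1-\varepsilon)d^2$ from Proposition~\ref{prop:bound_landing_norm} in the regime where $d$ is bounded away from $0$, rather than trying to keep the square-root term, which is what makes the $d\to\varepsilon$ degeneracy harmless. There is also a minor subtlety: the denominator $g^2$ could in principle be zero (if $F(X,A)=0$), but then the update does not move $X$ at all, so $\tilde X = X\in\stiefel^\varepsilon$ trivially and the statement is vacuous; I would dispatch that degenerate case at the outset. Everything else is routine algebra, and the displayed $\eta^*$ above (up to the exact packaging chosen in the appendix) is the desired $X$-independent lower bound.
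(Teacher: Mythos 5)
Your argument is correct, and it reaches the conclusion by a genuinely more explicit route than the paper. The paper also starts from the formula in Lemma~\ref{lemma:safe_step} and uses the orthogonality bounds of Proposition~\ref{prop:bound_landing_norm} on $g$, but it keeps \emph{both} numerator terms, replaces $g$ under the square root by its lower bound, applies $\sqrt{x^2+y^2}\ge (x+y)/\sqrt{2}$, and arrives at a function $K(\lambda,\varepsilon,a,d)=\bigl(\lambda(1-\varepsilon)d + a\sqrt{(\varepsilon-d)/2}\bigr)/\bigl(a^2+\lambda^2(1+\varepsilon)d^2\bigr)$ that still depends on $a$ and $d$; the quantity $\eta^*$ is then defined \emph{implicitly} as the infimum of $K$ over $a\in(0,\tilde a]$, $d\in(0,\varepsilon]$, and its positivity is argued by expanding $K$ near the boundaries $a\to 0$ and $d\to 0$. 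You instead split into the regimes $d\le\varepsilon/2$ and $d\ge\varepsilon/2$, keep only the dominant numerator term in each, and bound the denominator by the $X$-independent constant $G^2=\tilde a^2+\lambda^2(1+\varepsilon)\varepsilon^2$, which yields a fully closed-form $\eta^*=\min\{\tfrac{1}{2\lambda},\,\sqrt{\varepsilon/2}/G,\,\lambda\varepsilon(1-\varepsilon)/(2G^2)\}$. Your constant is likely more conservative than the paper's infimum, but the lemma only requires positivity, and the explicit expression spares the limiting argument; it also handles the $d\to 0$ and $d\to\varepsilon$ degeneracies transparently. Two small remarks: the lower bound $g^2\ge\lambda^2(1-\varepsilon)d^2$ you invoke in the second regime is never actually used in your displayed chain (only $g^2\le G^2$ is), so you can drop that mention; and your dispatch of the degenerate case $g=0$ is fine and is in fact slightly more careful than the paper, which implicitly excludes it by taking the infimum over $a>0$, $d>0$.
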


This lemma serves to prove that the iterates of our algorithms all stay in the safe region provided that the step size is small enough. We have:
\begin{proposition}
\label{prop:recursive_step_size}
    Consider a sequence of iterates $X_k$ defined by recursion, starting from $X_0\in \stiefel^\varepsilon$. We assume that there is a family of maps $\mathcal{A}_k(X_0, \dots, X_k) = A_k\in\sk_n$ such that $X_{k+1} = X_k-\eta_kF(X_k, A_k)$ where $\eta_k > 0$ is a step size. In addition, we assume that there is a constant $\tilde{a}>0$ such that for all $X_0, \dots, X_k \in \stiefel^\varepsilon$, we have $\|\mathcal{A}_k(X_0, \dots, X_k)X_k\|\leq \tilde{a}$. Then, if all $\eta_k$ are such that $\eta_k\leq \eta^*(\tilde{a}, \varepsilon, \lambda)$ with $\eta^*$
    defined in Lemma~\ref{lemma:safe_step size_lower}, we have that all iterates satisfy 
    $X_k\in\stiefel^\varepsilon$.
\end{proposition}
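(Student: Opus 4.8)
The plan is to prove \autoref{prop:recursive_step_size} by induction on $k$, using \autoref{lemma:safe_step} and \autoref{lemma:safe_step size_lower} as the inductive step. The base case $X_0 \in \stiefel^\varepsilon$ holds by assumption. For the inductive step, suppose $X_0, \dots, X_k \in \stiefel^\varepsilon$; I want to show $X_{k+1} \in \stiefel^\varepsilon$. Since $X_{k+1} = X_k - \eta_k F(X_k, A_k)$ with $A_k = \mathcal{A}_k(X_0,\dots,X_k) \in \sk_n$, this is exactly an update of the form treated in \autoref{lemma:safe_step}, so it suffices to check that $\eta_k \leq \eta(X_k)$, the safeguard step at $X_k$ defined in \eqref{eq:safe_step size}.

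First I would invoke the hypothesis that $\|\mathcal{A}_k(X_0,\dots,X_k) X_k\| = \|A_k X_k\| \leq \tilde{a}$, which is available precisely because all of $X_0, \dots, X_k$ lie in $\stiefel^\varepsilon$ (the regime in which the bound is assumed to hold). Then \autoref{lemma:safe_step size_lower}, applied at the point $X = X_k$ with $\tilde{a}$ as the bound on $\|A_k X_k\|$, gives $\eta(X_k) \geq \eta^*(\tilde{a}, \varepsilon, \lambda)$. Combining this with the standing assumption $\eta_k \leq \eta^*(\tilde{a}, \varepsilon, \lambda)$ yields $\eta_k \leq \eta(X_k)$. \autoref{lemma:safe_step} then applies verbatim and gives $X_{k+1} = \tilde{X}_k \in \stiefel^\varepsilon$, closing the induction.

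One subtlety I would make explicit: \autoref{lemma:safe_step size_lower} requires the hypothesis $\|A X\| \leq \tilde a$ at the point $X$ where it is applied, and here that point is $X_k$, which by the inductive hypothesis is in $\stiefel^\varepsilon$ together with all earlier iterates, so the assumed uniform bound $\|\mathcal{A}_k(X_0,\dots,X_k) X_k\| \leq \tilde a$ does apply — there is no circularity because the bound is only ever needed on the set $\stiefel^\varepsilon$, which the induction has already certified for indices $0,\dots,k$ before we use it to certify index $k+1$. I would also note that the conclusion of \autoref{lemma:safe_step} is stated for the single update $\tilde X = X - \eta F(X,A)$ from any $X \in \stiefel^\varepsilon$, so no additional work is needed beyond matching notation ($X \leftarrow X_k$, $\eta \leftarrow \eta_k$, $A \leftarrow A_k$, $\tilde X \leftarrow X_{k+1}$).

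There is essentially no hard part here: the proposition is a routine bookkeeping consequence of the two preceding lemmas, and the only thing to be careful about is the order of quantifiers in the induction — ensuring that membership of $X_0,\dots,X_k$ in $\stiefel^\varepsilon$ is established before the uniform bound on $\|A_k X_k\|$ is invoked. The entire proof is two or three sentences of induction once that is arranged.
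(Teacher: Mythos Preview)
Your proof is correct and follows exactly the same approach as the paper: a straightforward induction using \autoref{lemma:safe_step size_lower} (which in turn rests on \autoref{lemma:safe_step}) for the inductive step, with the base case $X_0\in\stiefel^\varepsilon$ given by assumption. If anything, you are more explicit than the paper about the non-circularity of invoking the bound $\|A_kX_k\|\leq\tilde a$ only after the inductive hypothesis has placed $X_0,\dots,X_k$ in $\stiefel^\varepsilon$.
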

This proposition shows that an algorithm that follows a direction of the form~\eqref{eq:skew_field} with sufficiently small steps will stay within the safe region $\stiefel^\varepsilon$.
The definition of the maps $\mathcal{A}_k$ is cumbersome as it depends
on the past iterates, but it is needed to handle the variance reduction algorithm that we study later. 
This result is central to this article since all algorithms considered in Section~\ref{sec:algorithms} produce sequences that satisfy the hypothesis of Proposition~\ref{prop:recursive_step_size}.

\subsection{A Merit Function}
\label{sec:merit}
The next proposition defines a smooth merit function for the landing 
field $\Lambda(X)$ defined in~\eqref{eq:landing_field}. The existence of such a merit function is central to a simple analysis of the landing algorithm and its different extensions. 
We consider as in~\citep{gao2019parallelizable}:
\begin{equation} \label{eq:merit_function}
    \mathcal{L}(X) = f(X) + h(X) + \mu \mathcal{N}(X),
\end{equation}
where 
$h(X) = -\frac12\langle \sym(X^\top \nabla f(X)), X^\top X - I_p\rangle$ and $\mu>0$, which is suitably chosen in the following result.

\begin{proposition}[Merit function bound] \label{prop:lyapunov}
    Let $\cL(X)$ be the merit function defined in \eqref{eq:merit_function}. For all $X \in \stiefel^\varepsilon$ we have with $\nu = \lambda \mu$:%
    \begin{equation}
        \inner{\nabla\cL(X)}{\Lambda(X)} \geq \frac12 \| \grad f(X)\|^2 + \nu \cN(X),
    \end{equation}
    for the choice of 
    \begin{equation}
        \mu \geq \frac{2}{3-4\varepsilon}\left( \Lc(1-\varepsilon) + 3s + \finalrevision{\hat{L}^2 \frac{(1+\varepsilon)^2}{\lambda(1-\varepsilon)}}  \right)\enspace,
    \end{equation}
    where $s = \sup_{X\in\stiefel^\varepsilon} \| \sym(X^\top \nabla f(X)) \|$, $\Lc>0$ is the smoothness constant of $f$ over $\stiefel^\varepsilon$, $\hat{L} = \max(L, L')$ with $L' = \max_{X\in\stiefel^\varepsilon}\|\nabla f(X)\|$,
    and $\varepsilon<\frac34$.
\end{proposition}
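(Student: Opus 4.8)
The plan is to expand $\inner{\nabla\cL(X)}{\Lambda(X)}$ using the decomposition $\nabla\cL = \nabla f + \nabla h + \mu\nabla\cN$ and the decomposition $\Lambda(X) = \grad f(X) + \lambda\nabla\cN(X)$, and then to bound each of the resulting cross terms, keeping $\|\grad f(X)\|^2$ and $\cN(X)$ as the ``good'' terms. First I would record the elementary identity $\inner{\nabla f(X)}{\grad f(X)} = \|\grad f(X)\|^2$: since $\grad f(X) = \sk(\nabla f(X)X^\top)X$, we have $\inner{\nabla f(X)}{\sk(\nabla f(X)X^\top)X} = \inner{\nabla f(X)X^\top}{\sk(\nabla f(X)X^\top)} = \|\sk(\nabla f(X)X^\top)\|^2$, and separately $\|\grad f(X)\|^2 = \inner{\sk(\nabla f(X)X^\top)X}{\sk(\nabla f(X)X^\top)X} = \inner{\sk(\nabla f(X)X^\top)X X^\top}{\sk(\nabla f(X)X^\top)}$; using $XX^\top$ versus $I_n$ introduces an error controlled by $\cN(X)$ via \autoref{lemma:singular_values}. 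Next, $\inner{\mu\nabla\cN(X)}{\lambda\nabla\cN(X)} = \mu\lambda\|\nabla\cN(X)\|^2 = 4\nu(\text{something})\cN(X)$ — more precisely $\|\nabla\cN(X)\|^2 = \|X(X^\top X - I_p)\|^2 \ge 4(1-\varepsilon)\cN(X)$ by the singular value bound — this is the main source of the $\nu\cN(X)$ term. Then $\inner{\nabla f(X)}{\lambda\nabla\cN(X)}$ and $\inner{\mu\nabla\cN(X)}{\grad f(X)}$ must be handled: the latter is a scalar product of a (nearly) skew term with the symmetric $X(X^\top X-I_p)X^\top$ structure and is small / controllable, while the former is the term that forces a lower bound on $\mu$.

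The technically delicate pieces are the terms involving $\nabla h(X)$, where $h(X) = -\tfrac12\langle \sym(X^\top\nabla f(X)), X^\top X - I_p\rangle$. I would compute $\nabla h(X)$ by the product rule, getting one contribution from differentiating $\sym(X^\top\nabla f(X))$ (which brings in the Hessian of $f$ and hence the smoothness constant $\Lc$, explaining the $\Lc(1-\varepsilon)$ term in the bound on $\mu$) and one contribution from differentiating $X^\top X - I_p$ (which brings in a factor like $X\,\sym(X^\top\nabla f(X))$, whose norm is bounded by $s\sqrt{1+\varepsilon}$, explaining the $3s$ term). The key structural point to exploit is that $h$ is designed precisely so that $\inner{\nabla h(X)}{\grad f(X)}$ largely cancels the bad part of $\inner{\nabla f(X)}{\lambda\nabla\cN(X)}$ — this is the augmented-Lagrangian mechanism of \citet{gao2019parallelizable}; I expect to see that $\inner{\nabla f(X) + \nabla h(X)}{\,\text{(symmetric part)}\,}$ combines into something proportional to $\cN(X)$ up to a Hessian remainder. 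The $\hat L^2 (1+\varepsilon)^2/(\lambda(1-\varepsilon))$ term in $\mu$ should come from absorbing a cross term of the form $\inner{\nabla h(X)}{\lambda\nabla\cN(X)}$ or $\inner{\nabla h(X)}{\grad f(X)}$ via Young's inequality $|\inner{a}{b}| \le \tfrac{\delta}{2}\|a\|^2 + \tfrac{1}{2\delta}\|b\|^2$, trading a piece of $\|\grad f(X)\|^2$ (explaining why the coefficient ends up $\tfrac12$ rather than $1$) against a multiple of $\cN(X)$ that must then be dominated by the $\mu\lambda(1-\varepsilon)$ reservoir — dividing through by the latter is what produces the $1/(\lambda(1-\varepsilon))$ factor.

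After collecting all contributions, the inequality should read schematically
\[
\inner{\nabla\cL(X)}{\Lambda(X)} \ge \|\grad f(X)\|^2 - (\text{Young loss})\|\grad f(X)\|^2 + \Big(\mu\lambda(1-\varepsilon)\cdot c - \Lc(1-\varepsilon) - 3s - \tfrac{\hat L^2(1+\varepsilon)^2}{\lambda(1-\varepsilon)}\Big)\cN(X),
\]
and the stated lower bound on $\mu$, together with $\varepsilon < \tfrac34$ (which makes $3 - 4\varepsilon > 0$ so the division is legitimate and keeps the $\tfrac12$ coefficient on $\|\grad f(X)\|^2$ positive), is exactly what is needed to make the bracket at least $\nu = \lambda\mu$ and the gradient coefficient at least $\tfrac12$. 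The main obstacle I anticipate is the bookkeeping in differentiating $h$ and in carefully tracking every $(1\pm\varepsilon)$ factor coming from \autoref{lemma:singular_values} when replacing $XX^\top$ or $X^\top X$ by identities; getting the constants to line up with the precise coefficient $\tfrac{2}{3-4\varepsilon}$ requires being careful about which cross terms are split by Young's inequality and with what weights $\delta$. Everything else is a routine application of Cauchy–Schwarz, the singular-value bounds, and $\Lc$-smoothness of $f$ on $\stiefel^\varepsilon$.
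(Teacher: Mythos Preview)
Your plan is essentially the paper's: expand $\nabla\cL = \nabla f + \nabla h + \mu\nabla\cN$, take inner products with $\Lambda(X)$, exploit the cancellation built into $h$, and absorb the remaining cross term with Young's inequality. Two technical points, however, do not work as you describe and are handled differently in the paper.

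First, the identity $\inner{\nabla f(X)}{\grad f(X)} = \|\sk(\nabla f(X)X^\top)\|^2$ is \emph{not} close to $\|\grad f(X)\|^2$ up to ``an error controlled by $\cN(X)$''. When $n>p$ the matrix $XX^\top - I_n$ has $n-p$ eigenvalues near $-1$, so its operator norm is not small even for $X\in\stiefel$. The paper instead keeps $\|\sk(\nabla f(X)X^\top)\|^2$ as the main positive quantity throughout, and only at the very end uses the one--sided multiplicative bound $\|\sk(\nabla f(X)X^\top)\|^2 \geq \sigma_1^{-2}\|\grad f(X)\|^2 \geq (1+\varepsilon)^{-1}\|\grad f(X)\|^2$. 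This $\sigma_1^{-2}$ is what feeds into the choice of the Young parameter $\beta = \sigma_1^2(\gradc+\Lc\sigma_1)/(2-\sigma_1^2)$, and $2-\sigma_1^2\geq 1-\varepsilon$ is the source of the $(1-\varepsilon)$ in the denominator of the $\hat L^2$ term.

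Second, $\inner{\mu\nabla\cN(X)}{\grad f(X)}$ is not merely ``small/controllable'': it is \emph{exactly} zero, since $\sk(\nabla f(X)X^\top)$ is skew-symmetric and $X(X^\top X-I_p)X^\top$ is symmetric. This exact orthogonality is what makes the $\mu\lambda\|\nabla\cN(X)\|^2$ term appear cleanly. Finally, the constant $3-4\varepsilon$ arises precisely as $4(1-\varepsilon)-1$: the $4(1-\varepsilon)$ from $\mu\lambda\|\nabla\cN(X)\|^2\geq 4\mu\lambda(1-\varepsilon)\cN(X)$, and the $-1$ from demanding that the net $\cN(X)$ coefficient be at least $\nu=\lambda\mu$; it is unrelated to the $\tfrac12$ in front of $\|\grad f(X)\|^2$.
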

This demonstrates that $\mathcal{L}$ is in fact a merit function.
Indeed, the landing direction is an ascent direction for the merit function, since $\langle \nabla \mathcal{L}(X), \Lambda(X)\rangle \geq 0$.
We can then combine this proposition and Proposition~\ref{prop:bound_landing_norm} 
get the following result:
\begin{proposition}
    \label{prop:bound_scalar_with_norm}
    Under the same conditions as in Proposition~\ref{prop:lyapunov}, defining $\rho = \min(\frac12, \frac{\nu}{4\lambda^2(1+\varepsilon)})$, we have for $X\in\stiefel^\varepsilon$:
    $$
    \langle \Lambda(X), \nabla \mathcal{L}(X)\rangle \geq \rho \|\Lambda(X)\|^2.
    $$
\end{proposition}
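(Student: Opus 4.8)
The plan is to combine the two already-established inequalities, Proposition~\ref{prop:lyapunov} and Proposition~\ref{prop:bound_landing_norm}, the latter applied with $A = \sk(\nabla f(X)X^\top)$ so that $F(X,A) = \Lambda(X)$ and $AX = \grad f(X)$. From Proposition~\ref{prop:bound_landing_norm} we get the upper bound $\|\Lambda(X)\|^2 \leq \|\grad f(X)\|^2 + 4\lambda^2(1+\varepsilon)\mathcal{N}(X)$. The goal is to dominate this by a constant multiple of the right-hand side of Proposition~\ref{prop:lyapunov}, namely $\tfrac12\|\grad f(X)\|^2 + \nu\mathcal{N}(X)$.

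First I would observe that the weighting factor $\rho = \min\bigl(\tfrac12, \tfrac{\nu}{4\lambda^2(1+\varepsilon)}\bigr)$ is exactly chosen so that $\rho$ times the coefficient of $\|\grad f(X)\|^2$ in the upper bound on $\|\Lambda(X)\|^2$ is at most the coefficient $\tfrac12$ appearing in Proposition~\ref{prop:lyapunov}, and likewise $\rho$ times the coefficient $4\lambda^2(1+\varepsilon)$ of $\mathcal{N}(X)$ is at most $\nu$. Concretely, since $\rho\leq\tfrac12$ we have $\rho\|\grad f(X)\|^2 \leq \tfrac12\|\grad f(X)\|^2$, and since $\rho\leq\tfrac{\nu}{4\lambda^2(1+\varepsilon)}$ we have $\rho\cdot 4\lambda^2(1+\varepsilon)\mathcal{N}(X) \leq \nu\mathcal{N}(X)$. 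Adding these two and using the upper bound from Proposition~\ref{prop:bound_landing_norm} gives
\[
\rho\|\Lambda(X)\|^2 \leq \rho\|\grad f(X)\|^2 + \rho\cdot 4\lambda^2(1+\varepsilon)\mathcal{N}(X) \leq \tfrac12\|\grad f(X)\|^2 + \nu\mathcal{N}(X).
\]
Then chaining with Proposition~\ref{prop:lyapunov}, whose right-hand side is precisely $\tfrac12\|\grad f(X)\|^2 + \nu\mathcal{N}(X)$, yields $\rho\|\Lambda(X)\|^2 \leq \langle\nabla\mathcal{L}(X),\Lambda(X)\rangle$, which is the claim. One should also note in passing that both $\|\grad f(X)\|^2$ and $\mathcal{N}(X)$ are nonnegative, so the termwise comparison above is legitimate and no cancellation issues arise.

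There is essentially no obstacle here; the content is entirely in the prior propositions, and this statement is the routine bookkeeping step that packages them into the form needed for the descent-lemma arguments in Section~\ref{sec:algorithms}. The only thing to be careful about is that Proposition~\ref{prop:bound_landing_norm} requires $X\in\stiefel^\varepsilon$ and $A\in\sk_n$ — both hold, since $\sk(\nabla f(X)X^\top)$ is skew-symmetric and we are assuming the same hypotheses as Proposition~\ref{prop:lyapunov}, which include $X\in\stiefel^\varepsilon$ and $\varepsilon<\tfrac34$ so that $\mu$ (hence $\nu=\lambda\mu$ and hence $\rho$) is well-defined and positive.
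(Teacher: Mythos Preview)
Your proof is correct and follows essentially the same approach as the paper: combine the lower bound from Proposition~\ref{prop:lyapunov} with the upper bound on $\|\Lambda(X)\|^2$ from Proposition~\ref{prop:bound_landing_norm}, and observe that the choice of $\rho$ makes the termwise comparison go through. The paper's version is just a more compressed two-line display of the same chain of inequalities.
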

We now turn to the intuition behind the merit function.
The merit function $\mathcal{L}$ is composed of three terms. The terms
$f(X)$ and $\mu\mathcal{N}(X)$ are easy to interpret: the first one controls optimality, while the second controls
the distance to the manifold. The term $h(X)$ might be mysterious at first. Its role is best understood when $X$ is on the manifold. Indeed, for $X\in\stiefel$ we get
$$
\nabla h(X) = -X\mathrm{sym}(X^\top \nabla f(X))\enspace.
$$
This vector is, in fact, the opposite of the
projection of $\nabla f(X)$ on the normal space to $\stiefel$ at $X$.
Hence, if $X\in \stiefel$ then $\mathcal{L}(X) = f(X)$ 
and $\nabla \mathcal{L}(X)$ is a vector in the tangent space $\tangent$ which is makes an acute angle with $\grad f(X)$. Note that Fletcher’s penalty function \citep{fletcher1970class} is similar to the merit function $\mathcal{L}(X)$ while the $h(X)$ term is determined by the solution of the least squares problem. Notice that this merit function $\mathcal{L}(X)$ is the same as that of \citet{gao2019parallelizable} where $\mathcal{L}$ is constructed from the augmented Lagrangian function and $h(X)$ serves as a multiplier term since the multiplier of orthogonality constraints has a closed-form solution, $\mathrm{sym}(X^\top \nabla f(X))$, at any first-order stationary point. The main difference with the present work is that  \citet{gao2019parallelizable} solve the optimization problem by taking steps in a direction that approximates $-\nabla \mathcal{L}(X)$, but that does not satisfy
the orthogonality hypothesis and hence is not guaranteed to converge for any value of $\lambda>0$ (see also the discussion in Appendix B in~\citep{ablin2022fast}).

As a sum of smooth terms, the merit function is also smooth:
\begin{proposition}[Smoothness of the merit function]
\label{prop:smoothness_merit}
The merit function $\mathcal{L}$ is $L_g$-smooth on $\stiefel^\varepsilon$, with $L_g = L_{f+h} + \mu L_{\mathcal{N}}$ where $L_{f+h}$ is the smoothness constant of $f + h$ and $L_{\mathcal{N}}$ is that of $\mathcal{N}$, upper bounded for instance by $2 + 3\varepsilon$.
\end{proposition}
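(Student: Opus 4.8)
The plan is to exploit that smoothness constants are additive and then to control separately the smoothness constant of $\mathcal{N}$ on $\stiefel^\varepsilon$. Writing $\mathcal{L} = (f+h) + \mu\mathcal{N}$, for any $X,Y\in\stiefel^\varepsilon$ the triangle inequality gives $\|\nabla\mathcal{L}(X)-\nabla\mathcal{L}(Y)\| \le \|\nabla(f+h)(X)-\nabla(f+h)(Y)\| + \mu\|\nabla\mathcal{N}(X)-\nabla\mathcal{N}(Y)\| \le (L_{f+h}+\mu L_{\mathcal{N}})\|X-Y\|$, which is exactly $L_g = L_{f+h}+\mu L_{\mathcal{N}}$. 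The existence of a finite $L_{f+h}$ on $\stiefel^\varepsilon$ I would simply invoke: $\nabla h$ is built from $\nabla f$ and its derivative, and $\stiefel^\varepsilon$ is bounded by Lemma~\ref{lemma:singular_values}, so under the standing regularity assumptions on $f$ this constant is well defined. Hence the whole content of the proposition reduces to establishing $L_{\mathcal{N}} \le 2+3\varepsilon$.

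For that I would start from $\nabla\mathcal{N}(X) = X(X^\top X - I_p)$ and rewrite the difference between two gradients as a telescoping sum that isolates factors of $X-Y$. Concretely,
\[
\nabla\mathcal{N}(X) - \nabla\mathcal{N}(Y) = X(X^\top X - Y^\top Y) + (X-Y)(Y^\top Y - I_p),
\]
and then expanding $X^\top X - Y^\top Y = X^\top(X-Y) + (X-Y)^\top Y$ yields the three-term decomposition
\[
\nabla\mathcal{N}(X) - \nabla\mathcal{N}(Y) = XX^\top(X-Y) + X(X-Y)^\top Y + (X-Y)(Y^\top Y - I_p).
\]
Each summand is then bounded in Frobenius norm via the submultiplicativity estimates $\|AB\|\le\|A\|_{\mathrm{op}}\|B\|$ and $\|AB\|\le\|A\|\|B\|_{\mathrm{op}}$ together with the spectral bounds from Lemma~\ref{lemma:singular_values}: for $X\in\stiefel^\varepsilon$ the singular values of $X$ lie in $[\sqrt{1-\varepsilon},\sqrt{1+\varepsilon}]$, so $\|XX^\top\|_{\mathrm{op}} = \|X\|_{\mathrm{op}}^2 \le 1+\varepsilon$, while the eigenvalues of $X^\top X - I_p$ lie in $[-\varepsilon,\varepsilon]$, so $\|X^\top X - I_p\|_{\mathrm{op}} \le \varepsilon$. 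This gives $\|XX^\top(X-Y)\|\le(1+\varepsilon)\|X-Y\|$, $\|X(X-Y)^\top Y\|\le\|X\|_{\mathrm{op}}\|Y\|_{\mathrm{op}}\|X-Y\|\le(1+\varepsilon)\|X-Y\|$, and $\|(X-Y)(Y^\top Y-I_p)\|\le\varepsilon\|X-Y\|$; summing the three contributions yields $\|\nabla\mathcal{N}(X)-\nabla\mathcal{N}(Y)\|\le(2+3\varepsilon)\|X-Y\|$, as claimed.

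There is no serious obstacle here, but the one pitfall I would be careful to avoid is handling the first term by bounding $\|XX^\top - I_n\|_{\mathrm{op}}$: when $n>p$ the Gram matrix $XX^\top$ is rank-deficient, so $XX^\top - I_n$ carries an eigenvalue $-1$ and that norm is of order $1$ no matter how small $\varepsilon$ is, which would destroy the estimate. Keeping $XX^\top$ (whose spectral norm is controlled by the largest singular value of $X$) rather than $XX^\top - I_n$ is precisely what makes the three bounds add up to exactly $2+3\varepsilon$. Note also that this direct two-endpoint argument sidesteps any need for convexity of $\stiefel^\varepsilon$, which a Hessian-integration argument along the segment from $X$ to $Y$ would have required (the segment may well leave $\stiefel^\varepsilon$); the rest is routine matrix algebra.
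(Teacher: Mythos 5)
Your proof is correct, and it reaches the same constant $2+3\varepsilon$ as the paper, but by a different route. The paper computes the Hessian of $\mathcal{N}$, namely $\nabla^2\mathcal{N}(X)[E] = E(X^\top X - I_p) + X(E^\top X + X^\top E)$, and bounds it pointwise on $\stiefel^\varepsilon$ by the triangle inequality using exactly the same two spectral facts you use ($\|X\|_{\mathrm{op}}^2\leq 1+\varepsilon$ and $\|X^\top X - I_p\|_{\mathrm{op}}\leq\varepsilon$), then concludes smoothness of $\mathcal{N}$ and invokes additivity for $f+h$. You instead bound the gradient difference directly through the telescoping identity $\nabla\mathcal{N}(X)-\nabla\mathcal{N}(Y) = XX^\top(X-Y) + X(X-Y)^\top Y + (X-Y)(Y^\top Y - I_p)$, whose three terms give $(1+\varepsilon)+(1+\varepsilon)+\varepsilon$. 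The trade-off: the paper's Hessian computation is shorter and reuses an object it has anyway, but passing from a Hessian bound \emph{on the set} to a Lipschitz gradient bound implicitly relies on segments between points, and $\stiefel^\varepsilon$ is not convex --- a point the paper's proof glosses over (one could repair it by noting the Hessian bound degrades only mildly on the convex hull). Your two-endpoint argument proves the gradient-Lipschitz property on $\stiefel^\varepsilon$ exactly as the paper defines smoothness, with no convexity needed, and your remark about not bounding $\|XX^\top - I_n\|_{\mathrm{op}}$ (which is of order $1$ when $n>p$) correctly identifies the one way this computation could go wrong. Both proofs treat the $f+h$ part identically, as a named constant whose finiteness is assumed.
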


\citet{schechtman2023orthogonal} consider instead the non-smooth merit function $\mathcal{L}'(X) = f(X) + M\|X^{\top}X-I_p\|$. Our merit function decreases faster in the direction normal to the manifold, which is why the term $h(X)$ is introduced to tame the contribution of $f$ in that direction. The smoothness of our merit function renders the subsequent analysis of the algorithms particularly simple since the smoothness lemma applied on $\mathcal{L}$ directly gives a descent lemma. This forms the basic tool to analyze the landing method and its variants.

\revision{This new merit function allows us to obtain far stronger convergence results than those of \cite{ablin2022fast}, which only analyzed the deterministic method, and could only prove a slow rate with decreasing step-sizes.
In the following, we demonstrate that the classical gradient descent, SGD, and variance reduction method can be straightforwardly extended to the landing setting, provably converge to the manifold, and recover the same convergence rates at their Riemannian counterparts, all thanks to this new merit function.}

\section{A Family of Landing Algorithms, and their Convergence Properties}
\label{sec:algorithms}
In this section, we consider a family of methods all derived from a base algorithm, the landing gradient descent algorithm. All of our algorithms follow directions of the form~\eqref{eq:skew_field}.

\subsection{Landing Gradient Descent: the Base Algorithm} \label{subsec:landing_gradient}

This algorithm produces a sequence of iterates $X_k \in \mathbb{R}^{n\times p}$ by iterating
\begin{equation}
  \label{eq:landing_algorithm}
      X_{k+1} = X_k  - \eta \Lambda(X_k)
\end{equation}
where $\eta>0$ is a step size and $\Lambda$ is the landing field defined in~\eqref{eq:landing_field}. Note that this method falls into the hypothesis of Proposition~\ref{prop:recursive_step_size} with the simple maps $\mathcal{A}_k(X_0, \dots, X_k) = \grad f(X_k)$, so we can just take $\tilde{a} = \sup_{X\in\stiefel^\varepsilon}\|\grad f(X)\|$ to get a safeguard step size $\eta^*$ that guarantees that the iterates of the landing algorithm stay in $\stiefel^\varepsilon$.

We will start with the analysis of this method, where we find that it achieves a rate of convergence of $\frac1K$: we have $\frac1K\sum_{k=0}^K\|\grad f(X_k)\|^2 = O(\frac 1K)$ and $\frac1K\sum_{k=0}^K\mathcal{N}(X_k) = O(\frac 1K)$.
We, therefore, obtain the same properties as classical Riemannian gradient descent, with a reduced cost per iteration, but with different constants.

\begin{proposition}
\label{prop:convergence_landing}
  Consider the iteration~\eqref{eq:landing_algorithm} starting from $X_0\in\stiefel^\varepsilon$. Define $\tilde{a} = \sup_{X\in \stiefel^\varepsilon}\|\sk(\nabla f(X)X^\top)X\|_F$, and let $\eta^*$ be the safeguard step size chosen from Lemma~\ref{lemma:safe_step size_lower}.  Let $\mathcal{L}^*$ be a lower bound of the merit function $\mathcal{L}$ on $\stiefel^\varepsilon$. Then, for $\eta \leq \min(\frac1{2L_g}, \frac{\nu}{4\lambda^2L_g(1+\varepsilon)}, \eta^*)$, we have 
  \begin{equation*}
  \frac1K\sum_{k=1}^K\|\grad f (X_k)\|^2\leq \frac{4(\mathcal{L}(X_0) -\mathcal{L}^*)}{\eta K} \enspace \text{ and } \enspace \frac1K\sum_{k=1}^K\mathcal{N}(X_k)\leq \frac{2(\mathcal{L}(X_0) -\mathcal{L}^*)}{\eta\nu K}.
  \end{equation*}
\end{proposition}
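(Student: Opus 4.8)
The plan is to run the standard descent-lemma argument, but using the merit function $\mathcal{L}$ in place of $f$, and then convert the resulting telescoping sum into the two claimed bounds via Propositions~\ref{prop:lyapunov} and~\ref{prop:bound_scalar_with_norm}. First I would check that the iterates stay in the safe region: since the constraint $\eta\leq\eta^*$ is imposed and the maps $\mathcal{A}_k(X_0,\dots,X_k)=\sk(\nabla f(X_k)X_k^\top)$ satisfy $\|\mathcal{A}_kX_k\|\leq\tilde{a}$ by the definition of $\tilde a$, Proposition~\ref{prop:recursive_step_size} applies and gives $X_k\in\stiefel^\varepsilon$ for all $k$. This is essential because every subsequent inequality (smoothness of $\mathcal{L}$, the merit-function bounds) is only valid on $\stiefel^\varepsilon$.

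Next I would apply the descent lemma coming from Proposition~\ref{prop:smoothness_merit}: since $\mathcal{L}$ is $L_g$-smooth on $\stiefel^\varepsilon$ and both $X_k$ and $X_{k+1}=X_k-\eta\Lambda(X_k)$ lie in $\stiefel^\varepsilon$,
\begin{equation*}
\mathcal{L}(X_{k+1}) \leq \mathcal{L}(X_k) - \eta\inner{\nabla\mathcal{L}(X_k)}{\Lambda(X_k)} + \frac{L_g\eta^2}{2}\|\Lambda(X_k)\|^2.
\end{equation*}
Now I would use Proposition~\ref{prop:bound_scalar_with_norm} to absorb the quadratic term: $\|\Lambda(X_k)\|^2\leq\frac1\rho\inner{\nabla\mathcal{L}(X_k)}{\Lambda(X_k)}$, so the right-hand side is bounded by $\mathcal{L}(X_k) - \eta(1-\frac{L_g\eta}{2\rho})\inner{\nabla\mathcal{L}(X_k)}{\Lambda(X_k)}$. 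The step-size hypothesis $\eta\leq\min(\frac1{2L_g},\frac{\nu}{4\lambda^2L_g(1+\varepsilon)})$ is exactly what makes $\frac{L_g\eta}{2\rho}\leq\frac12$ — one checks this against the two cases in the definition $\rho=\min(\frac12,\frac{\nu}{4\lambda^2(1+\varepsilon)})$ — so that $1-\frac{L_g\eta}{2\rho}\geq\frac12$ and hence $\mathcal{L}(X_{k+1})\leq\mathcal{L}(X_k) - \frac\eta2\inner{\nabla\mathcal{L}(X_k)}{\Lambda(X_k)}$.

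Then I would invoke Proposition~\ref{prop:lyapunov} to lower-bound $\inner{\nabla\mathcal{L}(X_k)}{\Lambda(X_k)}\geq\frac12\|\grad f(X_k)\|^2+\nu\mathcal{N}(X_k)$, giving the one-step decrease $\mathcal{L}(X_{k+1})\leq\mathcal{L}(X_k) - \frac\eta4\|\grad f(X_k)\|^2 - \frac{\eta\nu}{2}\mathcal{N}(X_k)$. Summing over $k=1,\dots,K$ (telescoping) and using $\mathcal{L}(X_K)\geq\mathcal{L}^*$ yields $\frac\eta4\sum_{k=1}^K\|\grad f(X_k)\|^2 + \frac{\eta\nu}{2}\sum_{k=1}^K\mathcal{N}(X_k) \leq \mathcal{L}(X_0)-\mathcal{L}^*$; since both sums are nonnegative, dropping one or the other and dividing by the appropriate constants gives the two stated bounds. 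I do not expect a genuine obstacle here — the work is all front-loaded into the earlier propositions — but the one point requiring care is verifying the constant chase that the stated range of $\eta$ really gives $\frac{L_g\eta}{2\rho}\leq\frac12$ in both branches of the $\min$ defining $\rho$; getting the $1+\varepsilon$ factors to line up correctly between Proposition~\ref{prop:bound_scalar_with_norm} and the step-size condition is the only place an error could creep in.
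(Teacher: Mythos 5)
Your proof is correct and follows essentially the same route as the paper: the safeguard step size keeps the iterates in $\stiefel^\varepsilon$, the $L_g$-smoothness descent lemma is applied to $\mathcal{L}$, the merit-function bound of Proposition~\ref{prop:lyapunov} is invoked, and the result follows by telescoping. The only cosmetic difference is that you absorb the quadratic term through Proposition~\ref{prop:bound_scalar_with_norm}, while the paper expands $\|\Lambda(X_k)\|^2$ directly with Proposition~\ref{prop:bound_landing_norm}; both yield the same one-step decrease $\frac{\eta}{4}\|\grad f(X_k)\|^2+\frac{\eta\nu}{2}\mathcal{N}(X_k)\leq\mathcal{L}(X_k)-\mathcal{L}(X_{k+1})$ under the stated step-size condition.
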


This result demonstrates weak convergence to the stationary points of $f$ on the manifold at a rate $\frac1{K}$, 
just like classical Riemannian gradient descent~\citep{boumal2019global}.
\finalrevision{Some readers might be more familiar with the following consequence ``without squares'': $\inf_{k\leq K}\|\grad f(X_k)\| +\|X_k^{\top}X_k-I_p\| =O(\frac{1}{\sqrt{K}})$, i.e., there exists an iterate which has both a low gradient norm and is close to the manifold.}
This result is an important improvement over that of \cite{ablin2022fast} since we do not require decreasing step sizes to get convergence and obtain a much better convergence rate.

\subsection{Landing Stochastic Gradient Descent: Large Scale Orthogonal Optimization}
\label{subsec:sgd}
We now consider the case where the function $f$ is the average of $N$ functions:
$$
f(X) = \frac1N\sum_{i=1}^Nf_i(X)\enspace .
$$
We can define the landing field associated with each $f_i$ by
$$
\Lambda_i(X) = \grad f_i(X) + \lambda X(X^{\top}X -I_p)\enspace .
$$
This way, we have
$$
\Lambda(X)= \frac1N\sum_{i=1}^N\Lambda_i(X),
$$
and if we take an index $i$ uniformly at random between $1$ and $N$ we have 
$$
\mathbb{E}_i[\Lambda_i(X)] = \Lambda(X)\enspace.
$$
In other words, the direction $\Lambda_i$ is an unbiased estimator of the landing field $\Lambda$.
We consider the landing stochastic gradient descent (Landing-SGD) algorithm, which at iteration $k$ samples a random index $i_k$ uniformly between $1$ and $N$ and iterates
\begin{equation}
    \label{eq:stochastic_landing}
    X_{k+1} = X_k - \eta_k \Lambda_{i_k}(X_k)
\end{equation}
where $\eta_k$ is a sequence of step size.
As is customary in the analysis of stochastic optimization algorithms, we posit a bound on the variance of $\Lambda_i$:

\begin{assumption}
There exists $B> 0$ such that for all $X\in\stiefel^\varepsilon$, we have 
$\frac1N\sum_{i=1}^N\|\Lambda_i(X) -\Lambda(X)\|^2\leq B$.
\end{assumption}
This assumption is true when the $\Lambda_i$ are continuous since $\stiefel^\varepsilon$ is a compact set.
\revision{Lemma \ref{lemma:safe_step size_lower} and Prop.~\ref{prop:recursive_step_size} come in handy to define a safeguard step size.
Indeed we see that the algorithm follows the hypothesis of Prop.~\ref{prop:recursive_step_size} with 
$\tilde{a} = \sup_{X\in \stiefel^\varepsilon, \enspace i \in\{1,\dots, n\}}\|\grad f_i(X)\|$. 
This allows to define a safeguard step-size $\eta^*$ following Prop.~\ref{prop:recursive_step_size}.
}
We then obtain a simple recursive bound on the iterates using the smoothness inequality:
\begin{proposition}
\label{prop:decrease_stochastic}
  Assume that $\eta_k\leq \min(\frac1{2L_g}, \frac{\nu}{4\lambda^2L_g(1+\varepsilon)}, \eta^*)$ where $\eta^*$ is the global safeguard step size obtained as above. Then, 
  $$
  \mathbb{E}_{i_k}[\mathcal{L}(X_{k+1})] \leq \mathcal{L}(X_k) - \frac{\eta_k}4 \|\grad f(X_k)\|^2 - \frac{\eta_k\nu}2\mathcal{N}(X_k) + \frac{L_g B\eta_k^2}2,
  $$
  where the expectation is taken with respect to the random variable $i_k$.
\end{proposition}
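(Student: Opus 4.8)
The plan is to derive the descent inequality coming from $L_g$-smoothness of the merit function $\mathcal{L}$ (Proposition~\ref{prop:smoothness_merit}), take the conditional expectation over the random index $i_k$ using a bias--variance split, and then close the estimate with the two scalar-product lower bounds of Propositions~\ref{prop:lyapunov} and~\ref{prop:bound_scalar_with_norm}.

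First I would observe that the Landing-SGD recursion~\eqref{eq:stochastic_landing} fits the template of Proposition~\ref{prop:recursive_step_size} with $\mathcal{A}_k(X_0,\dots,X_k)=\grad f_{i_k}(X_k)$ and $\tilde a=\sup_{X\in\stiefel^\varepsilon,\,i}\|\grad f_i(X)\|$, so the hypothesis $\eta_k\le\eta^*$ guarantees that every iterate, in particular both $X_k$ and $X_{k+1}$, lies in $\stiefel^\varepsilon$. Hence the $L_g$-smoothness descent inequality applies at $X_k$ and gives $\mathcal{L}(X_{k+1})\le\mathcal{L}(X_k)+\langle\nabla\mathcal{L}(X_k),X_{k+1}-X_k\rangle+\frac{L_g}{2}\|X_{k+1}-X_k\|^2$. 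Substituting $X_{k+1}-X_k=-\eta_k\Lambda_{i_k}(X_k)$, taking $\mathbb{E}_{i_k}$, using unbiasedness $\mathbb{E}_{i_k}[\Lambda_{i_k}(X_k)]=\Lambda(X_k)$ and the identity $\mathbb{E}_{i_k}\|\Lambda_{i_k}(X_k)\|^2=\|\Lambda(X_k)\|^2+\frac1N\sum_i\|\Lambda_i(X_k)-\Lambda(X_k)\|^2\le\|\Lambda(X_k)\|^2+B$ (the cross term vanishes because $\frac1N\sum_i(\Lambda_i-\Lambda)=0$, and the variance is bounded by Assumption), I obtain
\[
\mathbb{E}_{i_k}[\mathcal{L}(X_{k+1})]\le\mathcal{L}(X_k)-\eta_k\langle\nabla\mathcal{L}(X_k),\Lambda(X_k)\rangle+\frac{L_g\eta_k^2}{2}\|\Lambda(X_k)\|^2+\frac{L_gB\eta_k^2}{2}.
\]

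Next I would absorb the $\|\Lambda(X_k)\|^2$ term into the scalar-product term. Proposition~\ref{prop:bound_scalar_with_norm} gives $\|\Lambda(X_k)\|^2\le\rho^{-1}\langle\nabla\mathcal{L}(X_k),\Lambda(X_k)\rangle$ with $\rho=\min(\frac12,\frac{\nu}{4\lambda^2(1+\varepsilon)})$, so the middle two terms are at most $-\eta_k\bigl(1-\frac{L_g\eta_k}{2\rho}\bigr)\langle\nabla\mathcal{L}(X_k),\Lambda(X_k)\rangle$. The assumed bound $\eta_k\le\min(\frac1{2L_g},\frac{\nu}{4\lambda^2L_g(1+\varepsilon)})=\rho/L_g$ makes $1-\frac{L_g\eta_k}{2\rho}\ge\frac12$, and since the scalar product is nonnegative (Proposition~\ref{prop:lyapunov}) this quantity is bounded by $-\frac{\eta_k}{2}\langle\nabla\mathcal{L}(X_k),\Lambda(X_k)\rangle$. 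Finally I apply the lower bound $\langle\nabla\mathcal{L}(X_k),\Lambda(X_k)\rangle\ge\frac12\|\grad f(X_k)\|^2+\nu\mathcal{N}(X_k)$ from Proposition~\ref{prop:lyapunov}, which turns this into $-\frac{\eta_k}{4}\|\grad f(X_k)\|^2-\frac{\eta_k\nu}{2}\mathcal{N}(X_k)$; collecting the remaining $\frac{L_gB\eta_k^2}{2}$ term yields exactly the claimed inequality.

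The argument is essentially routine once the tools of Section~\ref{sec:landing} are in place; the only points needing care are (i) verifying that the definition of $\rho$ is precisely what collapses the hypothesis on $\eta_k$ into $\eta_k\le\rho/L_g$, so that the coefficient $1-\frac{L_g\eta_k}{2\rho}$ stays at least $\frac12$, and (ii) ensuring both $X_k$ and $X_{k+1}$ (and the segment between them) lie in $\stiefel^\varepsilon$ so that the $L_g$-smoothness of $\mathcal{L}$ from Proposition~\ref{prop:smoothness_merit} can be invoked — this is where Proposition~\ref{prop:recursive_step_size} and the safeguard $\eta^*$ are needed. No genuinely new idea beyond the merit-function machinery is required; the stochasticity only enters through the elementary bias--variance identity and the variance Assumption.
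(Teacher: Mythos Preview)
Your proof is correct and follows essentially the same path as the paper's: smoothness of $\mathcal{L}$, unbiasedness $\mathbb{E}_{i_k}[\Lambda_{i_k}(X_k)]=\Lambda(X_k)$, the bias--variance bound $\mathbb{E}_{i_k}\|\Lambda_{i_k}(X_k)\|^2\le\|\Lambda(X_k)\|^2+B$, and then Proposition~\ref{prop:lyapunov} to close. The one minor difference is in how the quadratic term $\frac{L_g\eta_k^2}{2}\|\Lambda(X_k)\|^2$ is absorbed. The paper expands $\|\Lambda(X_k)\|^2$ directly via Proposition~\ref{prop:bound_landing_norm} into $\|\grad f(X_k)\|^2+4\lambda^2(1+\varepsilon)\mathcal{N}(X_k)$ and combines term by term with the lower bound from Proposition~\ref{prop:lyapunov}, so that each of the two step-size conditions handles one coefficient separately. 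You instead invoke Proposition~\ref{prop:bound_scalar_with_norm} to write $\|\Lambda(X_k)\|^2\le\rho^{-1}\langle\nabla\mathcal{L}(X_k),\Lambda(X_k)\rangle$ and then observe that the two step-size bounds collapse into the single condition $\eta_k\le\rho/L_g$. Since Proposition~\ref{prop:bound_scalar_with_norm} is itself just the combination of Propositions~\ref{prop:bound_landing_norm} and~\ref{prop:lyapunov}, the two arguments are the same up to this repackaging; your route is arguably a bit cleaner because it makes explicit why the hypothesis on $\eta_k$ takes exactly that form.
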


We get convergence rates of the stochastic landing algorithm with decreasing step sizes. 

\begin{proposition}
\label{prop:convergence_stochastic_decreasing_step}
  Assume that the step size is $\eta_k = \eta_0 \times (1 + k)^{-\frac12}$ with $\eta_0 = \min(\frac1{2L_g}, \frac{\nu}{4\lambda^2L_g(1+\varepsilon)}, \eta^*)$, with $\eta^*$ chosen as the safeguard step size.
  Then, we have 
  $$
  \inf_{k\leq K}\mathbb{E}[\|\grad f(X_k)\|^2] =O\left(\frac{\log(K+1)}{\sqrt{K}}\right)\enspace \text{ and } \enspace \inf_{k\leq K}\mathbb{E}[\|\mathcal{N}(X_k)\|^2] =O\left(\frac{\log(K+1)}{\sqrt{K}}\right) \enspace.
  $$
  The expectation here is taken with respect to all the random realizations of the random variables $i_k, \enspace k\leq K$.
\end{proposition}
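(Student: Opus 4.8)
The plan is to turn the one–step descent inequality of \autoref{prop:decrease_stochastic} into a summed (weighted–average) bound and then extract the infimum. First I would check that the hypotheses are in force for every $k$: since $\eta_k = \eta_0(1+k)^{-\frac12}$ is nonincreasing and $\eta_0 = \min(\frac1{2L_g}, \frac{\nu}{4\lambda^2L_g(1+\varepsilon)}, \eta^*)$, we have $\eta_k \le \eta_0 \le \eta^*$ for all $k$, so \autoref{prop:recursive_step_size} (with the choice of $\tilde a$ described just before the proposition) guarantees $X_k \in \stiefel^\varepsilon$ for all $k$, and $\eta_k \le \min(\frac1{2L_g}, \frac{\nu}{4\lambda^2L_g(1+\varepsilon)})$ makes \autoref{prop:decrease_stochastic} applicable at each step.

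Next I would take total expectation in \autoref{prop:decrease_stochastic}, rearrange, and sum over $k = 0, \dots, K$. Telescoping the $\mathbb{E}[\mathcal{L}(X_k)] - \mathbb{E}[\mathcal{L}(X_{k+1})]$ terms and using a lower bound $\mathcal{L}^*$ of $\mathcal{L}$ on $\stiefel^\varepsilon$ gives
\begin{equation*}
  \sum_{k=0}^K \frac{\eta_k}{4}\,\mathbb{E}\!\left[\|\grad f(X_k)\|^2\right] + \sum_{k=0}^K \frac{\eta_k\nu}{2}\,\mathbb{E}\!\left[\mathcal{N}(X_k)\right] \ \le\ \mathcal{L}(X_0) - \mathcal{L}^* + \frac{L_gB}{2}\sum_{k=0}^K \eta_k^2 .
\end{equation*}
Now I would plug in the explicit step sizes: $\sum_{k=0}^K \eta_k^2 = \eta_0^2\sum_{k=0}^K (1+k)^{-1} \le \eta_0^2(1+\log(K+1))$, while $\sum_{k=0}^K \eta_k = \eta_0\sum_{k=0}^K (1+k)^{-\frac12} \ge \eta_0 \cdot 2(\sqrt{K+2}-1) = \Omega(\eta_0\sqrt{K})$ by comparison with $\int_0^{K+1}(1+t)^{-\frac12}\,dt$. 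Since both sums on the left are nonnegative, bounding each in turn by the whole right-hand side and using $\inf_{k\le K}\mathbb{E}[\|\grad f(X_k)\|^2]\cdot \sum_{k=0}^K\frac{\eta_k}{4} \le \sum_{k=0}^K\frac{\eta_k}{4}\mathbb{E}[\|\grad f(X_k)\|^2]$ yields
\begin{equation*}
  \inf_{k\le K}\mathbb{E}\!\left[\|\grad f(X_k)\|^2\right] \ \le\ \frac{4\bigl(\mathcal{L}(X_0)-\mathcal{L}^* + \tfrac{L_gB}{2}\eta_0^2(1+\log(K+1))\bigr)}{\eta_0 \, 2(\sqrt{K+2}-1)} \ =\ O\!\left(\frac{\log(K+1)}{\sqrt K}\right),
\end{equation*}
and the identical argument applied to the second sum (using the factor $\frac{\nu}{2}$ instead of $\frac14$) gives $\inf_{k\le K}\mathbb{E}[\mathcal{N}(X_k)] = O(\log(K+1)/\sqrt K)$, which is the claimed $\mathcal{N}$ bound.

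There is no serious obstacle here; the only points needing care are the elementary estimates of the harmonic-type sum $\sum (1+k)^{-1}$ and the partial sum $\sum (1+k)^{-\frac12}$, and the verification that the safeguard condition $\eta_k \le \eta^*$ holds for \emph{all} $k$ (which it does because $\eta_k$ is nonincreasing), so that the iterates provably remain in $\stiefel^\varepsilon$ and \autoref{prop:decrease_stochastic} can be invoked at every step. If one prefers a cleaner statement one can phrase the conclusion as a weighted average $\big(\sum_k \eta_k\big)^{-1}\sum_k \eta_k\,\mathbb{E}[\|\grad f(X_k)\|^2]$, which is dominated by the same $O(\log(K+1)/\sqrt K)$ bound and immediately upper-bounds the infimum.
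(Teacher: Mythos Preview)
Your proof is correct and follows essentially the same route as the paper: take total expectation in \autoref{prop:decrease_stochastic}, telescope $\mathcal{L}(X_k)-\mathcal{L}(X_{k+1})$, bound $\sum\eta_k^2$ by a logarithm and $\sum\eta_k$ below by $\Omega(\sqrt K)$, then extract the infimum from the weighted average. If anything, you are slightly more careful than the paper in explicitly checking the safeguard condition and using the sharper estimate $\sum_{k=0}^K(1+k)^{-1}\le 1+\log(K+1)$.
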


This shows that our method with decreasing step size has the same convergence rate as Riemannian stochastic gradient descent with decreasing step size.
With constant step size, we get the following proposition:

\begin{proposition}
\label{prop:convergence_stochastic_fixed_step}
  Assume that the step size is fixed to $\eta = \eta_0 \times (1 + K)^{-\frac12}$ with $\eta_0 = \min(\frac1{2L_g}, \frac{\nu}{4\lambda^2L_g(1+\varepsilon)}, \eta^*)$.
  Then, we have 
  $$
  \inf_{k\leq K}\mathbb{E}[\|\grad f(X_k)\|^2] =O\left(\frac{1}{\sqrt{K}}\right)\enspace \text{ and } \enspace \inf_{k\leq K}\mathbb{E}[\|\mathcal{N}(X_k)\|^2] =O\left(\frac{1}{\sqrt{K}}\right)\enspace.
  $$
\end{proposition}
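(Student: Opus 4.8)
The plan is to promote the one-step expected-decrease inequality of Proposition~\ref{prop:decrease_stochastic} to a telescoping bound over the $K$ iterations, exactly mirroring the argument behind Proposition~\ref{prop:convergence_landing} but carrying along the extra variance term $\tfrac12 L_g B \eta_k^2$.

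\textbf{Step 1: feasibility and admissibility of the step size.} Since $\eta = \eta_0 (1+K)^{-\frac12} \leq \eta_0 = \min\!\big(\tfrac1{2L_g},\, \tfrac{\nu}{4\lambda^2 L_g(1+\varepsilon)},\, \eta^*\big) \leq \eta^*$, Proposition~\ref{prop:recursive_step_size} (with $\tilde a = \sup_{X\in\stiefel^\varepsilon,\, i}\|\grad f_i(X)\|$, which is finite by compactness of $\stiefel^\varepsilon$) guarantees $X_k \in \stiefel^\varepsilon$ for every $k$. Hence the smoothness argument underlying Proposition~\ref{prop:decrease_stochastic} applies at each iteration, and the constant $\eta$ indeed satisfies the hypothesis $\eta \leq \min(\tfrac1{2L_g}, \tfrac{\nu}{4\lambda^2 L_g(1+\varepsilon)}, \eta^*)$ required there.

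\textbf{Step 2: telescoping.} Taking total expectations in Proposition~\ref{prop:decrease_stochastic} via the tower property and summing over $k = 0, \dots, K$, the left-hand side telescopes; with $\mathcal{L}^*$ a lower bound of $\mathcal{L}$ on $\stiefel^\varepsilon$ we obtain
\[
\frac\eta4 \sum_{k=0}^{K}\mathbb{E}\big[\|\grad f(X_k)\|^2\big] + \frac{\eta\nu}2 \sum_{k=0}^{K}\mathbb{E}\big[\mathcal{N}(X_k)\big] \;\leq\; \mathcal{L}(X_0) - \mathcal{L}^* + \frac{L_g B \eta^2}{2} (K+1) .
\]
Both sums have nonnegative terms, so dividing appropriately and bounding the minimum over $k\leq K$ by the average yields
\[
\inf_{k\leq K}\mathbb{E}\big[\|\grad f(X_k)\|^2\big] \leq \frac{4(\mathcal{L}(X_0) - \mathcal{L}^*)}{\eta (K+1)} + 2 L_g B \eta,
\qquad
\inf_{k\leq K}\mathbb{E}\big[\mathcal{N}(X_k)\big] \leq \frac{2(\mathcal{L}(X_0) - \mathcal{L}^*)}{\eta\nu (K+1)} + \frac{L_g B \eta}{\nu}.
\]
Substituting $\eta = \eta_0 (1+K)^{-\frac12}$ gives $\tfrac{1}{\eta(K+1)} = \tfrac{1}{\eta_0 (1+K)^{1/2}} = O(K^{-1/2})$ and $\eta = O(K^{-1/2})$, so both right-hand sides are $O(K^{-1/2})$, which is the claim for the gradient norm and for $\mathcal{N}$. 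For the statement literally as written, with $\mathbb{E}[\mathcal{N}(X_k)^2]$, I would additionally use $\mathcal{N}(X_k) \leq \tfrac14 \varepsilon^2 \leq \tfrac14$ on $\stiefel^\varepsilon$, whence $\mathcal{N}(X_k)^2 \leq \tfrac14\mathcal{N}(X_k)$ and the bound transfers at the same rate.

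I do not foresee a genuine obstacle here: all the analytic content is already packaged in Propositions~\ref{prop:decrease_stochastic} and~\ref{prop:recursive_step_size}. The only points needing care are (i) checking feasibility of every iterate so that the per-step inequality applies uniformly, (ii) correctly using conditional versus total expectation when summing, and (iii) the bookkeeping when substituting the horizon-dependent step size — in particular, this is a fixed-horizon guarantee, the optimal constant step depending on the prescribed number $K$ of iterations.
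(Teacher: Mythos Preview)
Your proposal is correct and follows essentially the same route as the paper's own proof: sum the per-step decrease inequality from Proposition~\ref{prop:decrease_stochastic}, take total expectations, telescope, bound the infimum by the average, and substitute $\eta=\eta_0(1+K)^{-1/2}$ to balance the two terms. If anything you are more careful than the paper, since you explicitly justify that every iterate stays in $\stiefel^\varepsilon$ via Proposition~\ref{prop:recursive_step_size} and you address the apparent typo $\mathbb{E}[\mathcal{N}(X_k)^2]$ in the statement.
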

\paragraph{Sample complexity}
The sample complexity of the algorithm is readily obtained from the bound: in order to find an $\varepsilon$-critical point of the problem and get both $\inf_{k\leq K}\|\grad f(X_k)\|^2 \leq \varepsilon$ and $\inf_{k\leq K}\mathcal{N}(X_k)\leq \varepsilon$, we need $O(\varepsilon^{-2})$ iterations.
The $O$ here only hides constants of the problem, like conditioning of $f$ and hyperparameter $\lambda$, but this quantity is independent of the number of samples $N$.
This matches the classical sample complexity results obtained with SGD in the Euclidean and Riemannian non-convex settings\finalrevision{~\citep{zhang2016riemannian}}.

\subsection{Landing SAGA: Variance Reduction for Faster Convergence}
\label{subsec:SAGA}

In this section, we are in the same finite-sum setting as in Section~\ref{subsec:sgd}.
As in classical optimization, SGD suffers from the high variance of its gradient estimator, leading to sub-optimal convergence rates.
A classical strategy to overcome this issue consists in using variance reduction algorithms, which build an estimator of the gradient whose variance goes to $0$ as training progresses.
Such algorithms have also been proposed in a Riemannian setting, but like most other methods, they also require retractions~\citep{zhang2016riemannian}.

We propose a retraction-free variance-reduction algorithm that is a crossover between the celebrated SAGA algorithm~\citep{defazio2014saga} and the landing algorithm, called the landing SAGA algorithm.
The algorithm keeps a memory of the last gradient seen for each sample, $\Phi^1_k, \dots, \Phi_k^N$ where each $\Phi_k^i \in \mathbb{R}^{n\times p}$. At iteration $k$, we sample at random an index $i_k$ between $1$ and $N$, and compute the direction
$\Lambda^{i_k}_k = \grad f_{i_k}(X_k) - \sk(\Phi_k^{i_k}X_k^\top)X_k + \frac1N \sum_{j=1}^N\sk(\Phi_k^j X_k^\top)X_k+\lambda X_k(X_k^\top X_k - I_p)$. We update the memory corresponding to sample $i_k$ by doing $\Phi_{k+1}^{i_k} = \nabla f_{i_k}(X_k)$, and $\Phi_{k+1}^j = \Phi_{k}^j$ for all $j\neq i_k$. We then move in the direction
$$
X_{k+1} = X_k -\eta \Lambda_k^{i_k}.
$$
It is important to note that variance reduction is only applied on the ``Riemannian'' part $\grad f_i(X)$. The other term $X(X^\top X- I_p)$ is treated as usual. Like in the classical SAGA algorithm, we have the unbiasedness property:
$$
\mathbb{E}_i[\Lambda_k^i] = \Lambda(X_k)\enspace.
$$
This means that, on average, the direction we take is the landing field, computed over the whole dataset.
The gist of this method is that we can have fine control on $\mathbb{E}_i[\|\Lambda^k_i\|^2]$. Indeed, letting $D_k^i = \grad f_i(X_k) -\sk(\Phi_k^iX_k^\top)X_k$, we have
\finalrevision{
\begin{equation}
\Lambda_k^i=\Lambda(X_k) + \underbrace{D_k^i - \mathbb{E}_j[D_k^j]}_{\text{zero-mean}} ,
\end{equation}
so that a bias-variance decomposition gives}
\begin{align}
  \mathbb{E}_i[\|\Lambda^i_k\|^2] &= \|\Lambda(X_k)\|^2 + \mathbb{E}_i[\|D_k^i - \mathbb{E}_j[D_k^j]\|^2]\\
  &\leq \|\Lambda(X_k)\|^2 + \mathbb{E}_i[\|D_k^i\|^2],
\end{align}
and $D_k^i$ can also be controlled since
\begin{align}
     \|D_k^i\|&= \|\sk((\nabla f_i(X_k) - \Phi_k^i)X_k^\top)X_k\| \\
     &\leq(1+\varepsilon)\|\nabla f_i(X_k) - \Phi_k^i\|\\
     &\leq(1+\varepsilon) L_f \|X_k - X_k^i\|^2
\end{align}
where $X_k^i$ is the last iterate that what chosen for the index $i$ (so $X_k^i$ is such that $\nabla f(X_k^i) = \Phi_k^i$). We, therefore, recover that we need to control the distance from the memory $X_k^i$ to the current point $X_k$, as is customary in the analysis of SAGA.

We have the following convergence theorem, which is obtained by combining the merit function and the proof technique of~\citet{reddi2016fast}:
\begin{proposition}
\label{prop:convergence_saga}
  Define $\rho$ as in Proposition~\ref{prop:bound_scalar_with_norm}, $L_g$ as in Proposition~\ref{prop:smoothness_merit} and $L_f$ the smoothness constant of $f$ on $\stiefel^\varepsilon$. Assume that the step size is such that  
  $$\eta\leq \min\left(\eta^*, \frac{\rho}{L_g}, \frac1{\sqrt{8N(1+\varepsilon)}L_f}, \left(\frac \rho{8N(4N+2)L_gL_f^2(1+\varepsilon)^2}\right)^{1/3}\right)\enspace.$$
  Then, we have 
  $$
  \inf_{k\leq K}\mathbb{E}[\|\grad f(X_k)\|^2] =O\left(\frac{1}{\eta K}\right)\enspace \text{ and } \enspace \inf_{k\leq K}\mathbb{E}[\|\mathcal{N}(X_k)\|^2] =O\left(\frac{1}{\eta K}\right)\enspace .
  $$
\end{proposition}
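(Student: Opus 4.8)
The plan is to mimic the SAGA analysis of \citet{reddi2016fast}, but using the merit function $\mathcal{L}$ in place of the objective $f$. First I would set up a Lyapunov function of the form $R_k = \mathbb{E}[\mathcal{L}(X_k)] + c_k \, \mathbb{E}\big[\frac1N\sum_{i=1}^N \|X_k - X_k^i\|^2\big]$ for a suitable sequence of positive coefficients $c_k$. The single-step bound is obtained by applying the smoothness lemma to $\mathcal{L}$ (Proposition~\ref{prop:smoothness_merit}) along the update $X_{k+1} = X_k - \eta \Lambda_k^{i_k}$, which gives in expectation over $i_k$
\[
\mathbb{E}_{i_k}[\mathcal{L}(X_{k+1})] \leq \mathcal{L}(X_k) - \eta \inner{\nabla\mathcal{L}(X_k)}{\Lambda(X_k)} + \frac{L_g\eta^2}{2}\mathbb{E}_{i_k}[\|\Lambda_k^{i_k}\|^2],
\]
using the unbiasedness $\mathbb{E}_i[\Lambda_k^i]=\Lambda(X_k)$. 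On the cross term I apply Proposition~\ref{prop:bound_scalar_with_norm} to get $\inner{\nabla\mathcal{L}(X_k)}{\Lambda(X_k)} \geq \rho\|\Lambda(X_k)\|^2$, and on the second-moment term I use the bias-variance decomposition already written in the excerpt, $\mathbb{E}_i[\|\Lambda_k^i\|^2] \leq \|\Lambda(X_k)\|^2 + \mathbb{E}_i[\|D_k^i\|^2]$, together with $\|D_k^i\| \leq (1+\varepsilon)L_f\|X_k - X_k^i\|^2$.

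Next I would handle the memory term. As in SAGA, $\mathbb{E}_{i_k}\big[\frac1N\sum_i\|X_{k+1}-X_{k+1}^i\|^2\big] = (1-\tfrac1N)\frac1N\sum_i\|X_{k+1}-X_k^i\|^2 + \frac1N\|X_{k+1}-X_k\|^2$, and then I expand $\|X_{k+1}-X_k^i\|^2 \leq (1+\beta)\|X_k-X_k^i\|^2 + (1+\beta^{-1})\|X_{k+1}-X_k\|^2$ for a free parameter $\beta>0$, with $\|X_{k+1}-X_k\|^2 = \eta^2\|\Lambda_k^{i_k}\|^2$. Plugging these into $R_{k+1}$ and collecting terms, the coefficient of $\|\Lambda(X_k)\|^2$ should be made $\leq 0$ (this is where $\eta \leq \rho/L_g$ and the dependence of $c_k$ on $\eta$ enter), and the coefficient of $\frac1N\sum_i\|X_k-X_k^i\|^2$ should telescope favorably — this forces the choice of $\beta \sim 1/N$ and $c_k$ of order $N^2 L_g L_f^2 \eta^2 (1+\varepsilon)^2$, which is exactly why the step-size bound $\eta \leq (\rho/(8N(4N+2)L_gL_f^2(1+\varepsilon)^2))^{1/3}$ and $\eta\leq 1/(\sqrt{8N(1+\varepsilon)}L_f)$ appear. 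The upshot is a clean inequality of the form $R_{k+1} \leq R_k - c\,\eta\big(\|\grad f(X_k)\|^2 + \nu\mathcal{N}(X_k)\big)$ for some absolute constant $c>0$, where I have used Proposition~\ref{prop:bound_landing_norm} and Proposition~\ref{prop:lyapunov}-type bounds to convert $\rho\|\Lambda(X_k)\|^2$ into a multiple of $\|\grad f(X_k)\|^2 + \nu\mathcal{N}(X_k)$.

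Finally I would sum this telescoping inequality from $k=0$ to $K-1$. Since $R_0 = \mathcal{L}(X_0)$ (the memory is initialized at $X_0$, so the extra term vanishes) and $R_K \geq \mathcal{L}^* $ is bounded below, we get $\frac1K\sum_{k=0}^{K-1}\big(\mathbb{E}\|\grad f(X_k)\|^2 + \nu\,\mathbb{E}\mathcal{N}(X_k)\big) = O\big(\frac{\mathcal{L}(X_0)-\mathcal{L}^*}{\eta K}\big)$, and since every summand is nonnegative, the infimum over $k\leq K$ of each piece is $O(1/(\eta K))$. Throughout, $\eta \leq \eta^*$ together with Proposition~\ref{prop:recursive_step_size} (the hypotheses of which are satisfied here with $\tilde a = \sup_{X\in\stiefel^\varepsilon, i}\|\grad f_i(X)\|$ and the history-dependent maps $\mathcal{A}_k$ built from the SAGA memory) guarantees that all iterates stay in $\stiefel^\varepsilon$, so all the earlier propositions are applicable along the trajectory.

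I expect the main obstacle to be the bookkeeping in the Lyapunov argument: choosing the coupling constant $c_k$ (or verifying a constant $c$ works), the splitting parameter $\beta$, and checking that all the cross terms — in particular the $\eta^2\|\Lambda_k^{i_k}\|^2$ contributions that appear both from the smoothness step and from expanding the memory term — can simultaneously be absorbed, which is precisely what dictates the somewhat intricate four-way minimum in the step-size condition. The history dependence of $X_k^i$ (hence of the maps $\mathcal{A}_k$) also requires some care to fit the template of Proposition~\ref{prop:recursive_step_size}, but that is conceptual rather than computational.
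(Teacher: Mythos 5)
Your plan is essentially the paper's own proof: a Lyapunov function $\mathcal{G}_k = \mathcal{L}(X_k) + c\,S_k$ coupling the merit function with the memory distance, a descent step from the smoothness of $\mathcal{L}$ plus Proposition~\ref{prop:bound_scalar_with_norm}, the bias--variance bound $\mathbb{E}_i[\|\Lambda_k^i\|^2]\leq\|\Lambda_k\|^2 + L_f^2(1+\varepsilon)^2 S_k$, Young splittings with parameters of order $1/N$, the choice $c\sim N\eta^2 L_g L_f^2(1+\varepsilon)^2$, telescoping, and finally Proposition~\ref{prop:bound_landing_norm} to split $\|\Lambda_k\|^2$ into the gradient and $\mathcal{N}$ parts --- exactly the combination of the merit function with the technique of \citet{reddi2016fast} used in the appendix. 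The only detail to fix in execution is that the same index $i_k$ drives both the step and the memory refresh, so the memory recursion must be derived by conditioning on $i_k$ (as the paper does) rather than quoted in the independent-sampling SAGA form, and the safeguard constant $\tilde a$ must also cover the memory terms $\sk(\Phi_k^jX_k^\top)X_k$ in the SAGA field; neither changes the argument.
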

As in classical optimization, using the variance reduction of SAGA recovers a $\frac1K$ rate with a stochastic algorithm.

\paragraph{Sample complexity}
When the number of samples $N$ is large, the last term in the above ``$\min$'' for the choice of step size is the smallest; hence the step size scales as $N^{-2/3}$. This shows that to get to a $\varepsilon-$critical point such that $\|\grad f(X)\|^2\leq\varepsilon$, we need $O(N^{\frac23}\varepsilon^{-1})$ iterations. 
This matches the sample complexity of classical Euclidean SAGA in the non-convex setting~\citep{reddi2016fast}.

\subsection{Comparison to Penalty Methods}
\label{sec:penalty}

It is a common practice in deep learning applications that the orthogonality is favored by adding an $\ell_2$ regularization term and minimizing
\begin{equation*}
    f(X) + \lambda\mathcal{N}(X),
\end{equation*}
for example in \citep{balestriero2018spline, xie2017all, bansal2018can}.
This method leads to a small computational overhead compared to the simple unconstrained minimization of $f$, and it allows the use of standard optimization algorithms tailored for deep learning. However, it provides no guarantee that the orthogonality will be satisfied. Generally, there are two possible outcomes based on the choice of $\lambda>0$. If $\lambda$ is small, then the final point is far from orthogonality, defeating the purpose of the regularization. If $\lambda$ is too large, then optimization becomes too hard, as the problem becomes ill-conditioned: its smoothness constant grows with $\lambda$ while its strong convexity constant does not, since $\mathcal{N}(X)$ is not strongly convex (indeed, it is constant in the direction tangent to $\stiefel$).

In order to have a more formal statement than the above intuition, we consider the simple case of a linear problem:
\begin{proposition}
    \label{prop:linear_problem_with_constraint}
    Let $M = U\Sigma V^\top$ be a
    singular value decomposition of $M$, where $U\in\stiefel$, $\Sigma$ is a diagonal matrix of positive entries, and $V\in \mathrm{St}(p, p)$ is an orthogonal matrix. Let $\sigma_1\leq\dots\leq \sigma_p$ be the singular values of $M$. Then, the minimizer of $g(X) = \langle M, X\rangle + \lambda \mathcal{N}(X)$ is $X^* = - U \Sigma^*V^{\top}$ where $\Sigma^*$ is the diagonal  matrix of entries $\sigma^*_i$ where $\sigma^*_i$ is the minimizer of the scalar function $x\to -\sigma_i x + \frac\lambda4(x^2 - 1)^2$. Furthermore, we have two properties:
    \begin{itemize}
        \item The distance between $X^*$ and the constrained solution $X_{\stiefel} = UV^{\top}$ is 
        of the order of $\lambda^{-1}$ when $\lambda$ goes to $+\infty$.
        \item The maximal and minimal eigenvalues of the Hessian $H$ of $g$ at $X^*$ satisfy $\lambda_{\min}(H) \leq \sigma_p + \frac{\sigma_p^2}{4\lambda}$ and $\lambda_{\max}(H)\geq 2\lambda$. Hence the conditioning of $H$ is at least $\frac{2\lambda}{\sigma_p + \frac{\sigma_p^2}{4\lambda}}$ which behaves like
        $\frac{2\lambda}{\sigma_p}$ as $\lambda$ goes to $+\infty$. 
    \end{itemize}
\end{proposition}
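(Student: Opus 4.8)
The plan is to exploit the SVD structure to reduce the matrix problem to $p$ decoupled scalar quartic minimizations, read off $X^*$, and then derive both bullet points purely from the resulting scalar first-order condition. The key observation is that $\mathcal{N}$ depends only on the singular values $s_1,\dots,s_p$ of $X$, namely $\mathcal{N}(X)=\tfrac14\sum_{i=1}^p(s_i^2-1)^2$, while the von Neumann trace inequality gives $\langle M,X\rangle\ge-\sum_i\sigma_i s_i$ with equality iff $X=-U\,\mathrm{diag}(s_1,\dots,s_p)\,V^\top$ with the $s_i$ sorted like the $\sigma_i$. Hence
\begin{equation*}
 g(X)\ \ge\ \sum_{i=1}^p\Big(-\sigma_i s_i+\tfrac\lambda4(s_i^2-1)^2\Big)\ \ge\ \sum_{i=1}^p\min_{x\in\bR}\phi_i(x),\qquad \phi_i(x):=-\sigma_i x+\tfrac\lambda4(x^2-1)^2 .
\end{equation*}
Each $\phi_i$ is a coercive quartic with global minimizer $\sigma_i^*$; from $\phi_i'(x)=-\sigma_i+\lambda x(x^2-1)$ one checks that $\sigma_i^*$ is the unique root of $\lambda x(x^2-1)=\sigma_i$ on $(1,\infty)$ (so $\sigma_i^*>1$), and that it is the global minimizer since $\phi_i(\sigma_i^*)<0$ while $\phi_i$ is nonnegative at any other critical point. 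Because $x\mapsto\lambda(x^3-x)$ is strictly increasing on $(1,\infty)$, the map $\sigma\mapsto\sigma^*$ is increasing, so $\sigma_1^*\le\dots\le\sigma_p^*$ — exactly the order required for equality in von Neumann. By coercivity $g$ attains its minimum, and the two chained inequalities are simultaneously tight precisely at $X^*=-U\Sigma^*V^\top$, which is therefore the minimizer.

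For the first bullet, note that $X^*=-U\Sigma^*V^\top$ has singular vectors $U,V$ and singular values $\sigma_i^*>1$, so its nearest point on $\stiefel$ (equivalently the minimizer of $\langle M,\cdot\rangle$ over $\stiefel$) is $X_{\stiefel}=-UV^\top$, whence $X^*-X_{\stiefel}=U(I_p-\Sigma^*)V^\top$ and $\|X^*-X_{\stiefel}\|=\big(\sum_i(\sigma_i^*-1)^2\big)^{1/2}$. The first-order condition rearranges to $\sigma_i^*-1=\sigma_i\big/\big(\lambda\sigma_i^*(\sigma_i^*+1)\big)$, which is $\le\sigma_i/(2\lambda)$ since $\sigma_i^*>1$, and is $\sim\sigma_i/(2\lambda)$ because $\sigma_i^*\to1$ as $\lambda\to\infty$; thus $\|X^*-X_{\stiefel}\|$ is $\bigO(\lambda^{-1})$, of exact order $\lambda^{-1}$.

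For the second bullet, the linear term has zero Hessian, so $H=\nabla^2 g(X^*)=\lambda\,\nabla^2\mathcal{N}(X^*)$, and differentiating $\nabla\mathcal{N}(X)=X(X^\top X-I_p)$ twice yields, for any $\Xi\in\bR^{n\times p}$,
\begin{equation*}
 \langle\nabla^2\mathcal{N}(X)[\Xi],\Xi\rangle=\langle X^\top X-I_p,\ \Xi^\top\Xi\rangle+\|X^\top\Xi\|^2+\Tr\big((X^\top\Xi)^2\big).
\end{equation*}
I will bound the extreme eigenvalues by Rayleigh quotients on unit-norm test directions at $X=X^*$. Taking $\Xi=Ue_ie_i^\top V^\top$ (so $X^{*\top}\Xi=-\sigma_i^*Ve_ie_i^\top V^\top$) a short computation gives quotient $3(\sigma_i^*)^2-1>2$, hence $\lambda_{\max}(H)\ge\lambda\big(3(\sigma_i^*)^2-1\big)>2\lambda$. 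Taking $\Xi=u_0v_i^\top$ with $u_0$ a unit vector orthogonal to the column space of $U$ and $v_i$ the $i$-th column of $V$ (so $X^{*\top}\Xi=0$) the quotient is $(\sigma_i^*)^2-1$, hence $\lambda_{\min}(H)\le\lambda\big((\sigma_1^*)^2-1\big)$; writing $(\sigma_i^*)^2-1=(\sigma_i^*-1)(\sigma_i^*+1)$ and using $\sigma_i^*-1\le\sigma_i/(2\lambda)$, $\sigma_i^*\le1+\sigma_i/(2\lambda)$ gives $\lambda_{\min}(H)\le\sigma_1+\sigma_1^2/(4\lambda)\le\sigma_p+\sigma_p^2/(4\lambda)$. (When $n=p$, replace the direction $u_0v_i^\top$ by the skew direction $U(e_1e_2^\top-e_2e_1^\top)V^\top$, whose quotient $(\sigma_1^*)^2+(\sigma_2^*)^2-1-\sigma_1^*\sigma_2^*$ is again of order $\lambda^{-1}$.) Combining, the condition number of $H$ is at least $2\lambda\big/\big(\sigma_p+\sigma_p^2/(4\lambda)\big)$, which behaves like $2\lambda/\sigma_p$ as $\lambda\to\infty$.

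The main obstacle I anticipate is the first reduction step: making the von Neumann argument airtight, in particular checking that the $p$ independent scalar minimizers are \emph{automatically} ordered compatibly with $\Sigma$ so that the trace inequality is attained by a single matrix of the form $-U\Sigma^*V^\top$, and that the relevant critical point of each $\phi_i$ (not merely a stationary point) is its global minimizer. One must also remark that $U,V$ need not be unique when $M$ has repeated singular values, but that all admissible choices yield the same $X^*$. Everything after that — both the distance estimate and the Hessian bounds — is routine once one has the scalar identity $\lambda\sigma_i^*\big((\sigma_i^*)^2-1\big)=\sigma_i$ and the quadratic-form formula for $\nabla^2\mathcal{N}$.
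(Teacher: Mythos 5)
Your proof is correct and, for the reduction and the distance estimate, follows essentially the same route as the paper: von Neumann's trace inequality to decouple the problem into $p$ scalar quartics, the stationarity condition $\lambda\sigma_i^*\bigl((\sigma_i^*)^2-1\bigr)=\sigma_i$, and the bound $\sigma_i^*-1\le \sigma_i/(2\lambda)$ (your factorized derivation of this bound is in fact slightly more direct than the paper's monotonicity/implicit-function argument, and you are more careful than the paper about the ordering needed for equality in von Neumann and about non-uniqueness of $U,V$). Note that, like the paper's own proof, you implicitly take the constrained solution to be $-UV^\top$ rather than the $UV^\top$ written in the statement; the distance $\bigl(\sum_i(\sigma_i^*-1)^2\bigr)^{1/2}$ is the intended quantity. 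Where you genuinely diverge is the Hessian: the paper works with the operator form $H[E]=\lambda\bigl(E((X^*)^\top X^*-I_p)+X^*(E^\top X^*+(X^*)^\top E)\bigr)$ and tests $E=X^*$ for $\lambda_{\max}\ge 2\lambda$ and any $E$ with $E^\top X^*+(X^*)^\top E=0$ for $\lambda_{\min}\le \sigma_p+\sigma_p^2/(4\lambda)$, which handles $n>p$ and $n=p$ uniformly; you instead derive the quadratic form of $\nabla^2\mathcal N$ and use rank-one singular-vector directions, $Ue_ie_i^\top V^\top$ (giving the slightly stronger $\lambda(3(\sigma_i^*)^2-1)>2\lambda$) and $u_0v_i^\top$ with $u_0\perp\mathrm{range}(U)$, which requires $n>p$. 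Your parenthetical fix for $n=p$ via the direction $U(e_1e_2^\top-e_2e_1^\top)V^\top$ is the only under-justified spot: its Rayleigh quotient $\lambda\bigl((\sigma_1^*)^2+(\sigma_2^*)^2-1-\sigma_1^*\sigma_2^*\bigr)$ does satisfy the claimed bound, but this needs a short extra estimate (e.g.\ using $\alpha^2+\beta^2-\alpha\beta\le\beta^2$ for $0\le\alpha\le\beta$ with $\alpha=\sigma_1^*-1$, $\beta=\sigma_2^*-1$) rather than the bare remark that it is of order $\lambda^{-1}$; alternatively, the paper's skew-compatible test direction avoids the case split altogether.
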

\begin{figure}[t]
    \centering
        \includegraphics[width=.8\textwidth]{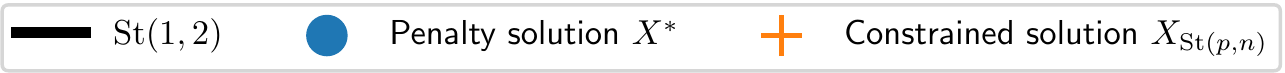}\\
    \includegraphics[width=.32\textwidth]{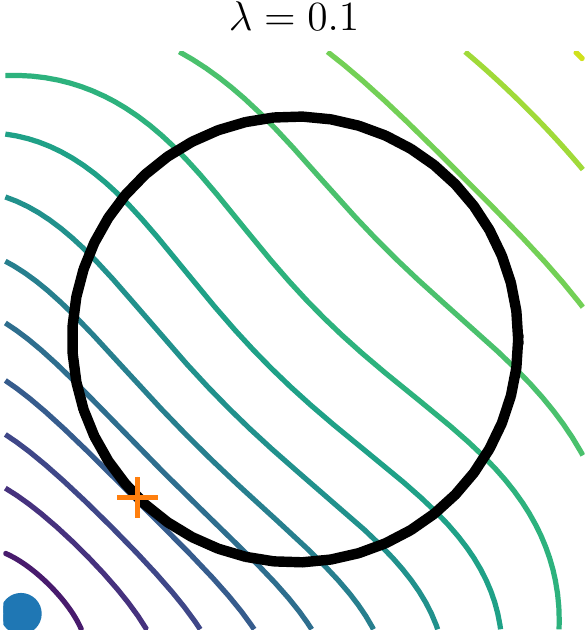}
    \includegraphics[width=.32\textwidth]{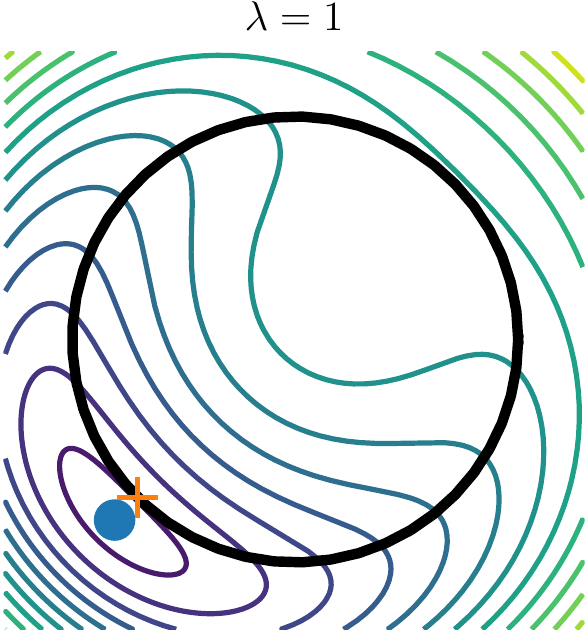}
    \includegraphics[width=.32\textwidth]{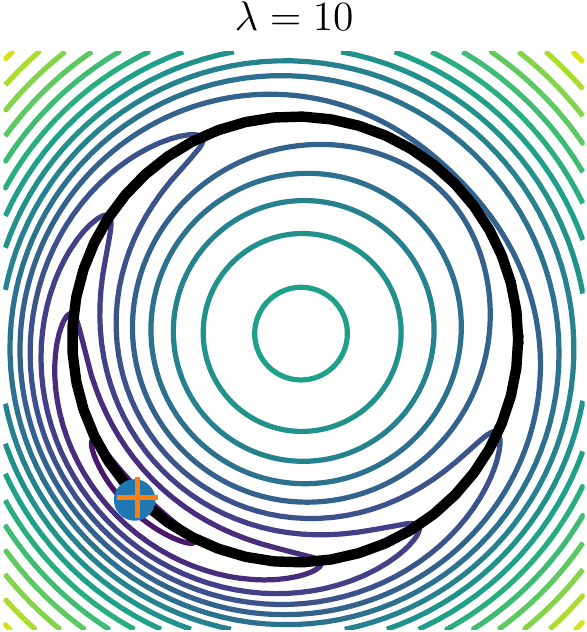}
    \caption{Contours of the function $g(X)  = f(X) + \lambda \mathcal{N}(X)$ in dimensions $(n, p) = (2, 1)$. When $\lambda$ is small, its minimizer is far from $\stiefel$, and when $\lambda$ is large, $g$ is badly conditioned around its optimum.  \label{fig:penalty}}
\end{figure}
This unfortunate trade-off is illustrated in Figure~\ref{fig:penalty}, where we display the contours of $g$ for different values of $\lambda$.
In practical terms, these two points mean that :
\begin{itemize}
    \item In order to get a solution close to the correct one $X_{\stiefel}$, we have to take a large $\lambda$.
    \item But, taking a large $\lambda$ makes the conditioning of the problem go to infinity. The number of iterations of gradient descent with fixed step-size on $g$ to achieve $\|X - X^*\|\leq \epsilon$ is locally of the order $\frac{2\lambda}{\sigma_p}\log(\epsilon^{-1})$, which is linear in $\lambda$.
\end{itemize}
In order to better formalize the second point, consider that we start gradient descent from a point $X = X^* +\delta E$ where $\delta \ll 1$ and $E$ is an eigenvector of $H$ corresponding to its smallest eigenvalue, i.e., so that $H[E] = \lambda_{\min}(H)E$. We take the step size as the inverse of the local smoothness constant $\eta =1/\lambda_{\max}(H)$, and define $Y = X  -\eta \nabla g(X)$ the output of one iteration of gradient descent. Then, as $\delta \to 0$, we find 
$$
Y - X^* = (1 - \eta \lambda_{\min}(H))(X - X^*) + o(\delta)
$$
which means that we have (approximately in $\delta$) a linear convergence toward $X^*$ with a rate $(1 - \eta \lambda_{\min}(H)) \geq 1 -\frac{\sigma_p}{2\lambda} +o(\frac1\lambda)$. Hence, after $K$ iterations of gradient descent, the error is at least
$(1 -\frac{\sigma_p}{2\lambda})^K$, and having an error smaller than $\epsilon$ requires at least $K \geq \frac{\log(\epsilon^{-1})}{\log(1 - \frac{\sigma_p}{2\lambda})} = \frac{2\lambda}{\sigma_p}\log(\epsilon^{-1}) + o(\frac1\lambda)$ iterations: we need a number of iterations proportional to $\lambda$.

\revision{We want to insist that this behavior is in stark contrast with that of the landing methods presented above since they all provably converge \emph{to the manifold} regardless of the choice of $\lambda$.
In terms of computational overhead, they only require an additional computation of a Riemannian gradient (see \autoref{eq:riemannian_gradient}).
We argue that this might be a very small price to pay in front of the benefits of having a method that provably converges to the correct points.}

\revision{
    \subsection{Computational cost}
    We now analyze the computational cost of one iteration of the landing methods presented above and compare it to the cost of classical Riemannian methods.
    All methods require computing the (perhaps stochastic) gradient of $f$, which we denote for short as $G\in\mathbb{R}^{n\times p}$. 
    We denote $t_G$ as the cost of this computation, which depends on the complexity of the function $f$ itself. In the following, we use the fact that the cost of multiplying an $a \times b$ matrix with a $b\times c$ matrix is $O(abc)$.
    \paragraph{Penalty method}
    The penalty method needs to compute $\nabla(f(X) + \lambda \mathcal{N}(X)) = G + \lambda X(X^\top X - I_p)$. Hence, on top of computing $G$, it requires two matrix-matrix multiplications; and its overall cost is $t_G + O(np^2)$.
    We recall that this method does not converge to the stationary points of the correct problem as illustrated in \autoref{sec:penalty}.
    \paragraph{Landing methods} The landing methods presented above then compute the direction $\Lambda(X) = \mathrm{skew}(GX^\top)X + \lambda X(X^\top X - I_p)$.
    Since $n\geq p$, the ordering of operations to compute this quantity that leads to the optimal complexity is:
    \begin{itemize}
        \item \makebox[15em]{$A = X^\top X$\hfill}  \makebox[10em]{\texttt{Shape: p, p \hfill}} \makebox[10em]{\texttt{Cost: $O(np^2)$\hfill}}

        \item \makebox[15em]{$B = (\frac12 G + \lambda X)A$\hfill}  \makebox[10em]{\texttt{Shape: n, p \hfill}} \makebox[10em]{\texttt{Cost: $O(np^2)$\hfill}}

        \item \makebox[15em]{$C = G^{\top}X$\hfill}  \makebox[10em]{\texttt{Shape: p, p \hfill}} \makebox[10em]{\texttt{Cost: $O(np^2)$\hfill}}
        
        \item \makebox[15em]{$D = XC$\hfill}  \makebox[10em]{\texttt{Shape: n, p \hfill}} \makebox[10em]{\texttt{Cost: $O(np^2)$\hfill}}

        \item \makebox[15em]{Return $\Lambda = B - \frac12 D - \lambda X$\hfill}  \makebox[10em]{\texttt{Shape: n, p \hfill}} \makebox[10em]{\texttt{Cost: $O(np)$\hfill}}
    \end{itemize}
    This makes it clear that the cost of implementing the landing method is $t_G + O(np^2)$ in terms of time and $O(np)$ in terms of memory. It requires only four matrix multiplications, which are heavily parallelizable. Note that compared to the penalty method, it requires only two more matrix multiplications. 
    Interestingly, when $n \simeq p$ and we can afford to form $n\times n$ matrices, we can simply compute the field as $\Lambda = \left(\mathrm{skew}(GX^\top) +\lambda (XX^\top - I_n)\right)X$, which only requires three matrix multiplications.
    \paragraph{Retraction-based methods}
    Retraction-based methods have to compute costly retractions, as described in \autoref{subsec:riem_optim}.
    The cost of these computations is usually $O(np^2)$, but they are much slower than matrix multiplications when $n\simeq p$. Hence, the overall cost has the same order of magnitude as the landing methods, $t_G + O(np^2)$, but with much worse constants, making overall one landing iteration faster to compute. 
    The cost of computing the gradient plays a key role here: indeed, if $t_G$ is much greater than $np^2$, then the cost of both methods becomes very similar. 
}

\section{Experiments} \label{sec:experiments}

We numerically compare the landing method against the two main alternatives, the Riemannian gradient descent with QR retraction and the Euclidean gradient descent with added $\ell_2$ squared penalty norm, with stochastic gradients.\footnote{The code to reproduce the experiments in this section is publicly available at: \url{https://github.com/simonvary/landing-stiefel}.}
\finalrevision{In all experiments, we take the safe region parameter to be $\varepsilon = \frac12$. Unless specified otherwise, we use for the landing term $\lambda =1$, and choose the learning rate $\eta$ with a grid search, just like for all other methods.}

\subsection{Online PCA}
\begin{figure}[t]
    \centering
    \begin{subfigure}[b]{0.9\textwidth}
        \includegraphics[width=.49\textwidth]{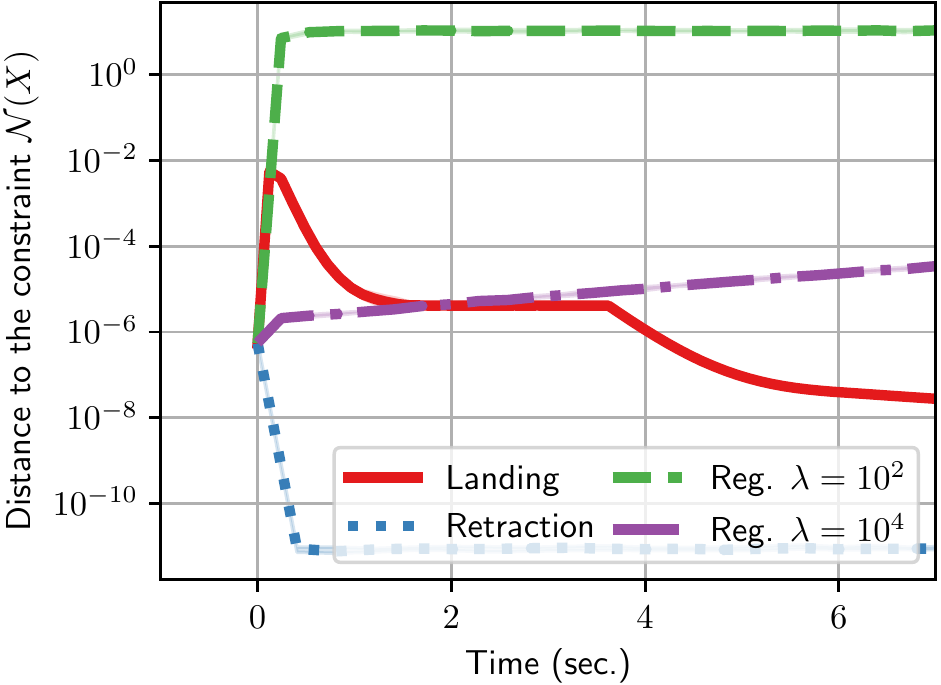}
        \includegraphics[width=.47\textwidth]{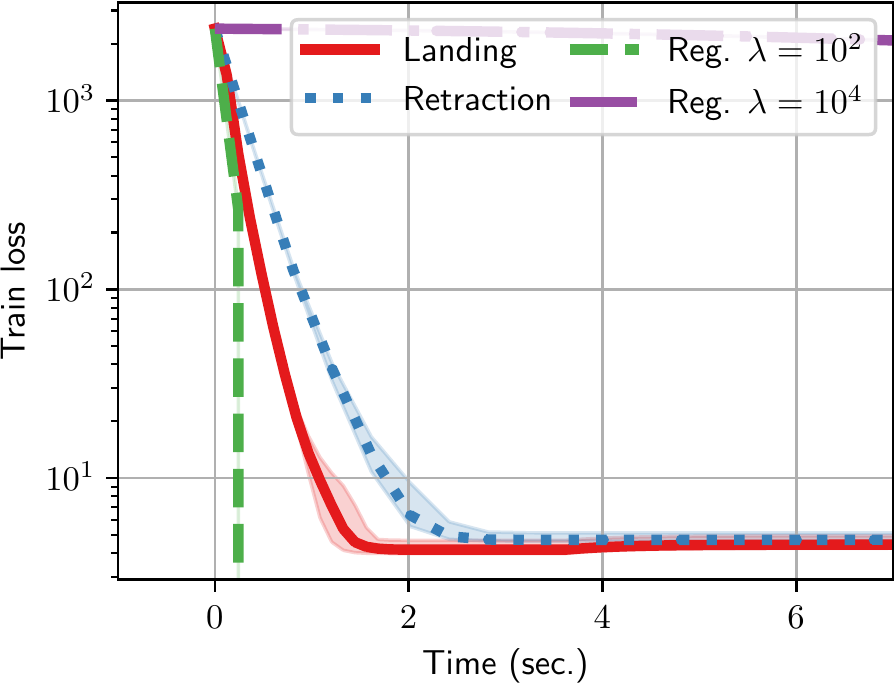}
        \caption{$n=5000, p=200$ \label{fig:pca_p500}}
    \end{subfigure}
    \begin{subfigure}[b]{0.9\textwidth}
        \includegraphics[width=.49\textwidth]{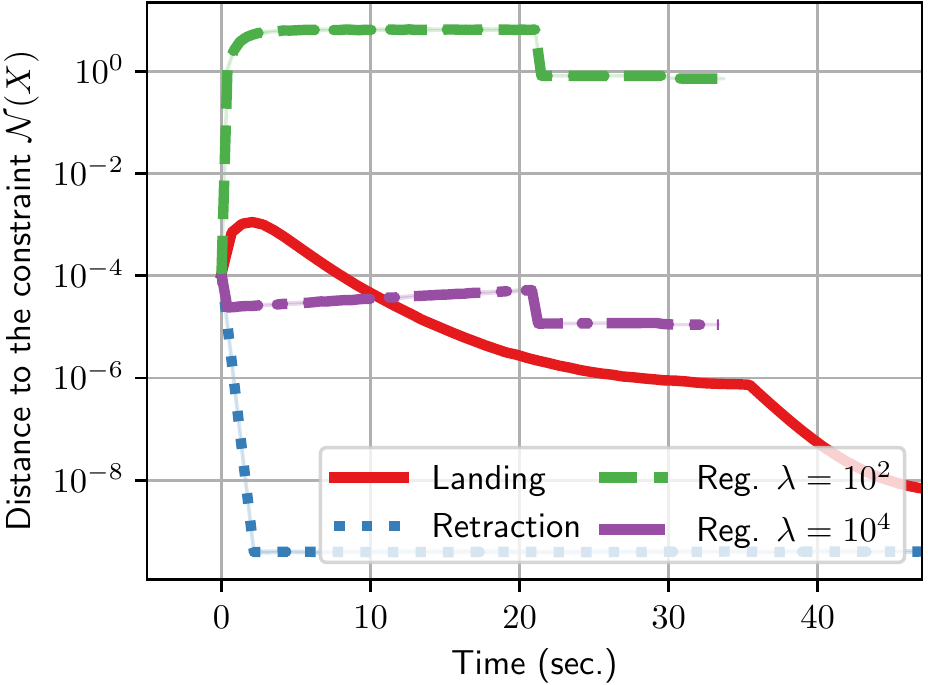}
        \includegraphics[width=.47\textwidth]{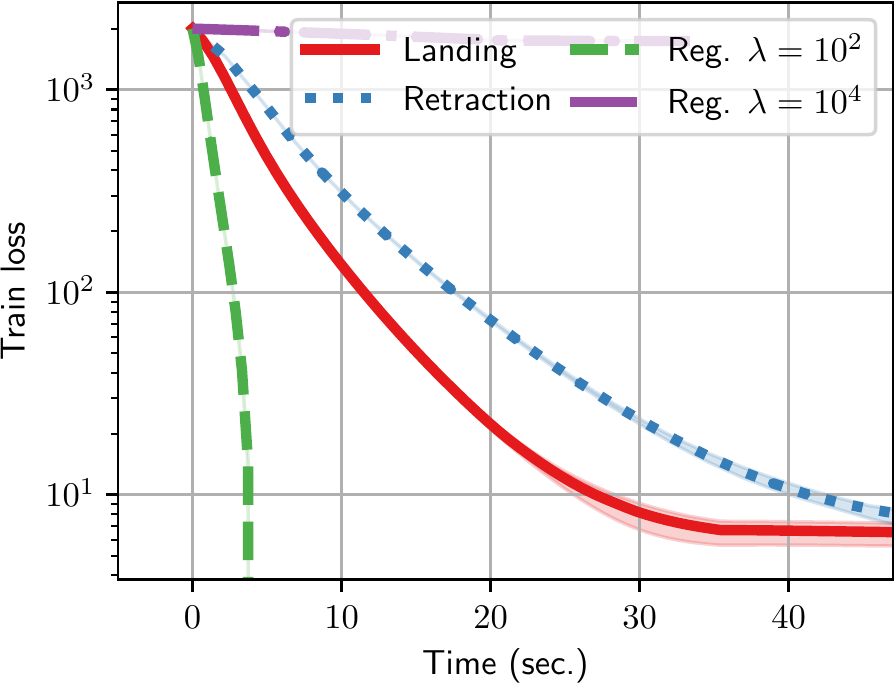}
        \caption{$n=5000, p=1000$ \label{fig:pca_p1000}}
    \end{subfigure}
    \caption{Experiments with online PCA. \label{fig:pca}}
\end{figure}
We test the methods performance on the online principal component analysis (PCA) problem 
\begin{equation}
    \min_{X\in\mathbb{R}^{n\times p}} -\frac{1}{2}\left\| AX \right\|_F^2, \quad \text{s.t.}\quad X\in\stiefel,
\end{equation}
where $A\in\mathbb{R}^{N\times n}$ is a synthetically generated data matrix with $N=15\,000$ being the number of samples each with dimension $n=5000$. The columns of $A$ are independently sampled from the normal distribution $\mathcal{N}(0, UU^\top + \sigma I_n)$, where $\sigma = 0.1$ and $U\in\mathbb{R}^{n\times p}$ is sampled from the Stiefel manifold with the uniform Haar distribution.

We compare the landing stochastic gradient method with the classical Riemannian gradient descent and with the ``penalty'' method which minimizes $f(X) + \lambda \cN(X)$, where $\lambda$ is now a regularization hyperparameter, using standard SGD.
Figure \ref{fig:pca} 
shows the convergence of the objective and the distance to the constraint against the computation time of the three methods using stochastic gradients with a batch size of $128$ and a fixed step size, which decreases after $30$ epochs. The training loss is computed as $f(X_k) - f(X_*)$, where $X_*$ is the matrix of $p$ right singular vectors of $A$. We see that in both cases of $p = 200$ and $p=1000$ the landing is able to reach a lower objective value faster compared to the Riemannian gradient descent, however, at the cost of not being precisely on the constraint but with $\cN(X)\leq 10^{-6}$. The distance is further decreased after 30 epochs as the fixed step size of the iterations is decreased as well. This test is implemented in PyTorch and performed using a single GPU. 

Euclidean gradient descent with $\ell_2$ regularization  performs poorly with both choices of regularizer $\lambda$ (``Reg.'' in the figure). In the first case when $\lambda=10^2$ is too small, the distance remains large, and as a result, the training loss becomes negative since we are comparing against $X_*$ which is on the constraint. In the second case of the large penalty term, when $\lambda = 10^4$, the iterates remain relatively close to the constraint, but the convergence rate is very slow. These experimental findings are in line with the theory explained in Section~\ref{sec:penalty}. 

In general, we see that the landing method outperforms Riemannian gradient descent in cases when the computational cost of the retraction is more expensive relative to computing the gradient. This occurs especially in cases when $p$ is large or the batch size of the stochastic gradient is small, as can be seen also in the additional experiments for $p=100$ and $p=500$ shown in Figure \ref{fig:pca_app} in Appendix~\ref{app:numerics}.

\subsection{Independent Component Analysis}
\begin{figure}[t]
    \centering
        \includegraphics[width=.99\textwidth]{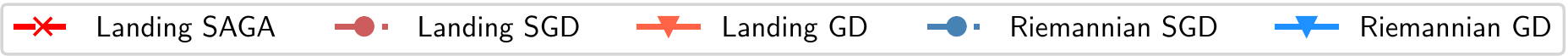}\\
    \includegraphics[width=.32\textwidth]{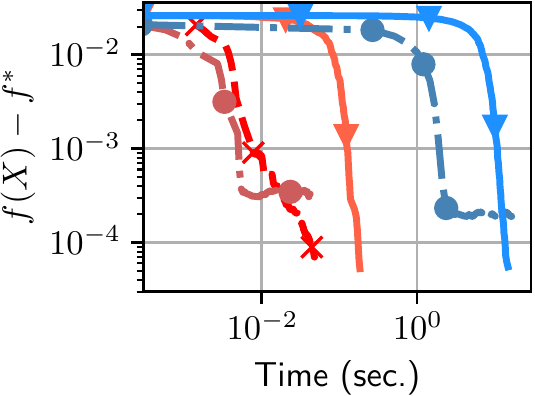}
    \includegraphics[width=.32\textwidth]{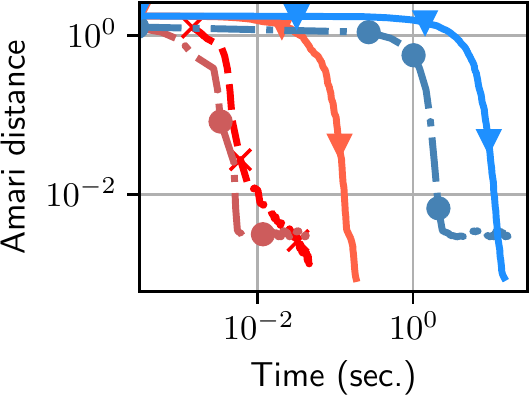}
    \includegraphics[width=.32\textwidth]{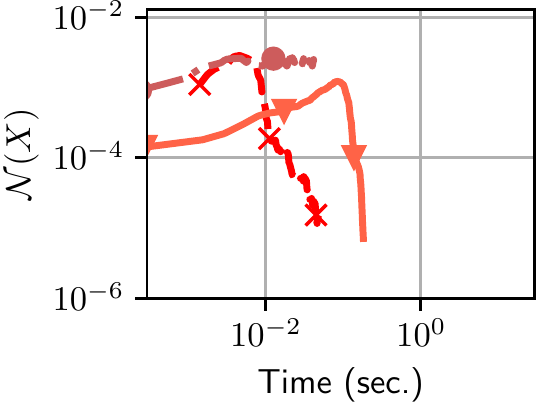}
    \caption{Experiments on ICA. Both scales are logarithmic. \label{fig:ica}}
\end{figure}
Given a data matrix $A = [a_1,\dots, a_N] \in \mathbb{R}^{N\times n}$, we perform the ICA of $A$ by solving~\citep{hyvarinen1999fast}
$$
\min_{X\in\mathbb{R}^{n\times n }} \frac 1 N \sum_{i=1}^N\sum_{j=1}^n \sigma([AX]_{ij}), \text{ such that } X\in\mathrm{St}(n, n)
$$
where $\sigma$ is a scalar function defined by $\sigma(x) = \log(\cosh(x))$, so that $\sigma'(x) = \tanh(x)$.

We generate $A$ as $A = S B^{\top}$, where $S\in\mathbb{R}^{N\times n}$ is the sources matrix, containing i.i.d. samples drawn from a Laplace distribution, and $B$ is a random orthogonal $n \times n$ mixing matrix. We take $n = 10$ and $N = 10^4$. We run the base landing algorithm, the landing SGD, and the landing SAGA algorithm, with a batch size of $100$ and compare them with Riemannian gradient descent and Riemannian SGD. 
Additionally to the loss and distance to the manifold, we also record the Amari distance, which measures how well $B^{\top}X$ is a scale and permutation matrix, i.e., how well we recover the sources.
Results are displayed in Figure~\ref{fig:ica}. We observe a fast convergence for the SGD algorithms followed by a plateau. The landing SAGA algorithm overcomes this plateau thanks to variance reduction. It is also much faster than the full-batch algorithm since it is a stochastic algorithm. We finally observe that its distance to the manifold reaches a very low value. Here, Riemannian methods are much slower than the landing because of the costly retractions.
This experiment was run on a CPU using Benchopt~\citep{benchopt}.
\subsection{Neural Networks with Orthogonality Constraints}
\begin{figure}[t]
    \centering
    \begin{subfigure}[b]{0.9\textwidth}
        \includegraphics[width=.49\textwidth]{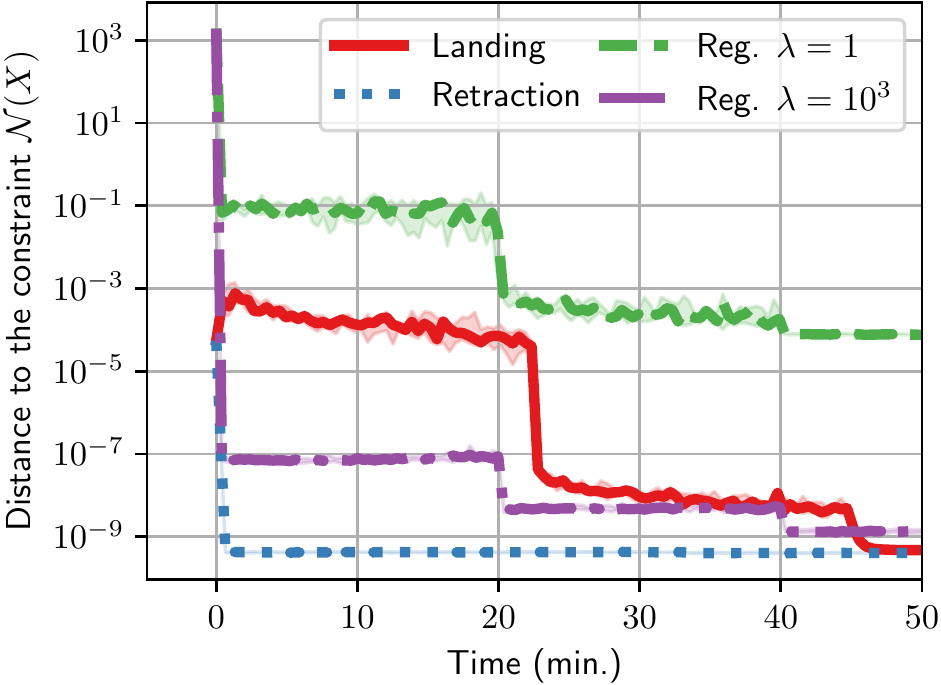}
        \includegraphics[width=.49\textwidth]{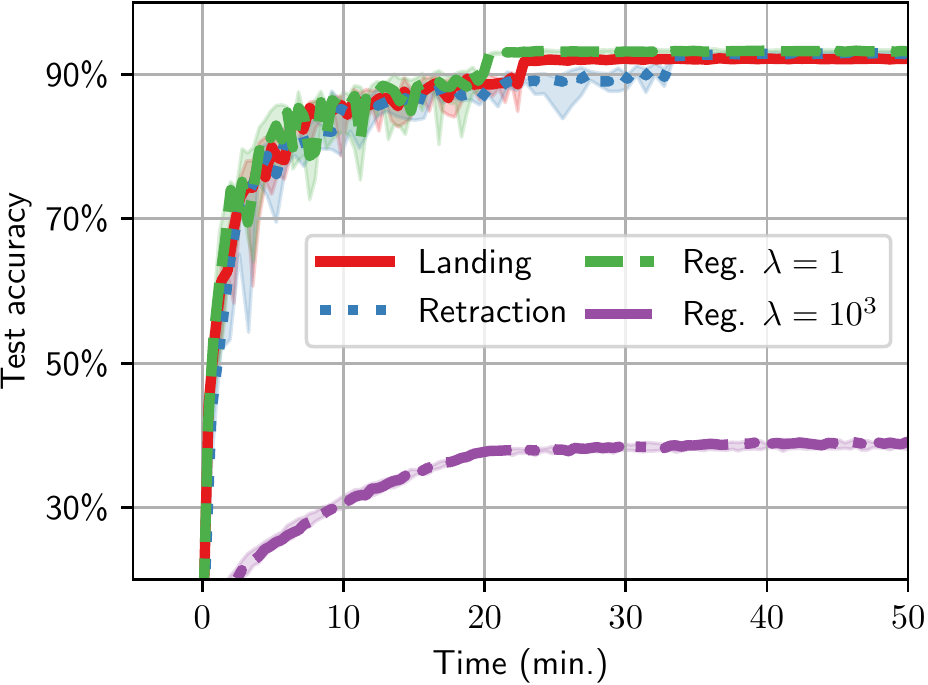}
        \caption{Resnet18 experiments \label{fig:ort_conv_resnet}}
    \end{subfigure}
    \begin{subfigure}[b]{0.9\textwidth}
        \includegraphics[width=.49\textwidth]{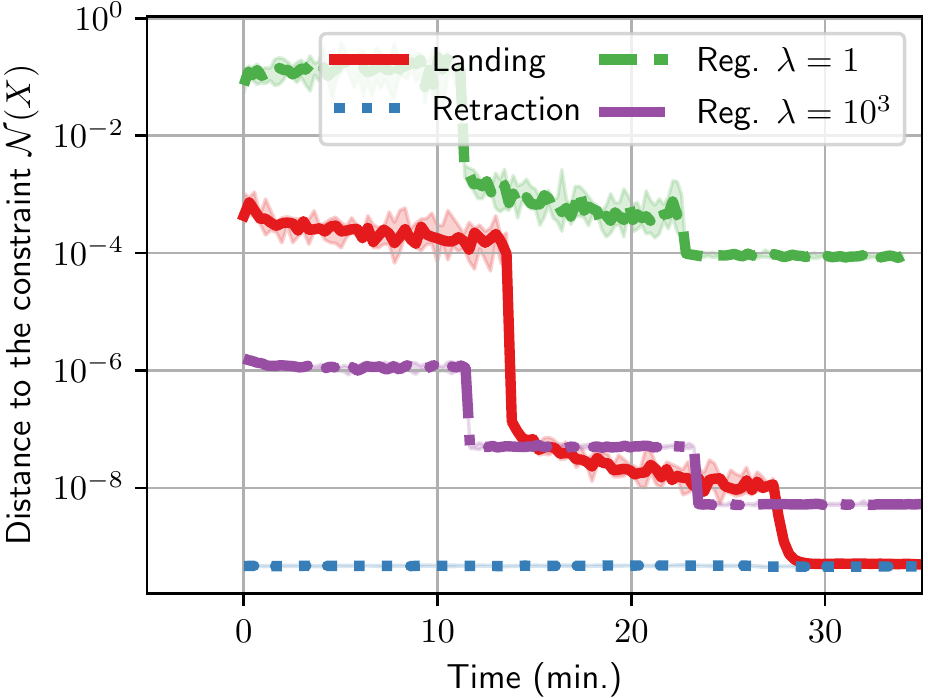}
        \includegraphics[width=.49\textwidth]{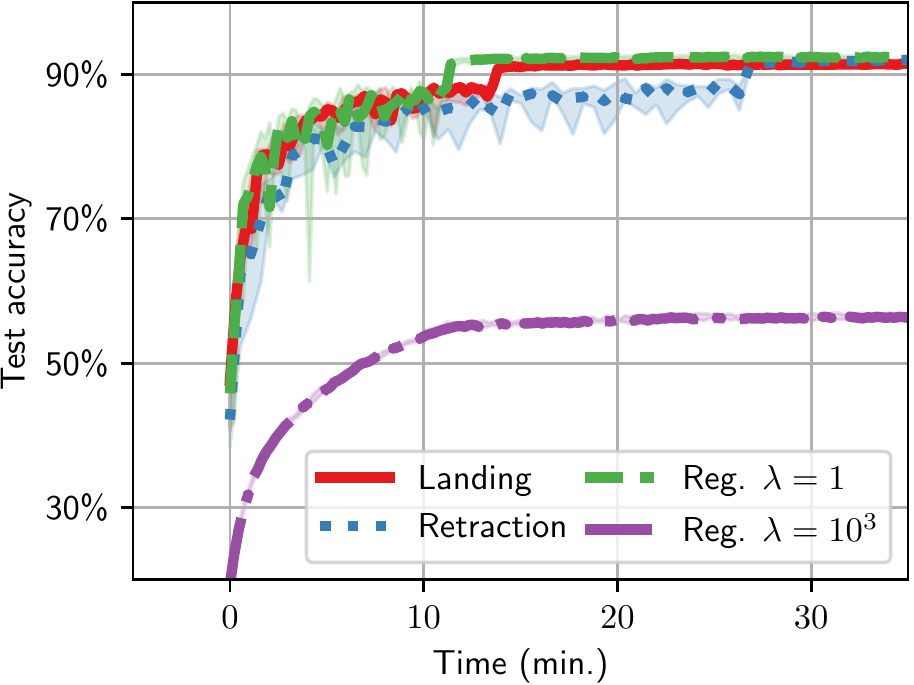}
        \caption{VGG16 experiments \label{fig:ort_conv_vgg16}}
    \end{subfigure}
    \caption{Experiments with orthogonal convolutions. \label{fig:ort_conv}}
\end{figure}
We test the methods for training neural networks whose weights are constrained to the Stiefel manifold. Orthogonality of weights plays a prominent role in deep learning, for example in the recurrent neural networks to prevent the problem of instable gradients~\citep{arjovsky2015unitary}, or in orthogonal convolutional neural networks  that impose kernel orthogonality to improve the stability of models~\citep{bansal2018can, wang2020orthogonal}.

We perform the test using two standard models, VGG16~\citep{simonyan2014very} and Resnet18~\citep{he2016deep}, while constraining the kernels of all convolution layers to be on the Stiefel manifold. We reshape the convolutional kernels to the size $n_\mathrm{out} \times n_\mathrm{in} n_\mathrm{x} n_\mathrm{y}$, where $n_\mathrm{in}, n_\mathrm{out}$ is the number of input and output channels respectively and $n_\mathrm{x}, n_\mathrm{y}$ is the filter size. In the case when the reshaping results in a wide instead of a tall matrix, we impose the orthogonality on its transposition. We train the models using Riemannian gradient descent, Euclidean gradient descent with $\ell_2$ regularization, and the landing method, with batch size of $128$ samples for $150$ epochs, and with a fixed step size that decreases as $\eta = \eta/10$ every 50 epochs. We repeat each training $5$ times for different random seed. This test is implemented in PyTorch and performed using a single GPU. 

Figure~\ref{fig:ort_conv} shows the convergence of the test accuracy and the sum of distances to the constraints against the computation time, with the light shaded areas showing minimum and maximum values of the $5$ runs. %
The figure shows the landing is a strict improvement over the Euclidean gradient descent with the added $\ell_2$ regularization, which, for the choice of $\lambda = 1$, achieves a good test accuracy, but at the cost of the poor distance to the constraint of $10^{-3}$, and for the choice of $\lambda = 10^3$ converges to a solution that has similar distance as the landing of $10^{-8}$, but has poor test accuracy. In comparison, training the models with the Riemannian gradient descent with QR retractions, achieves the lowest distance to the constraint, but also takes longer to reach the test accuracy of roughly $90\%$.

We also compared with the trivialization approach \citep{lezcano2019trivializations} using the Geotorch library, however this approach is not readily suitable for optimization over large Stiefel manifolds. See the experiments in Figure~\ref{fig:ort_conv_app} in the Appendix \ref{app:numerics}, which takes over 7 hours, i.e. approximately 14 times as long as the other methods, to reach the test accuracy of around $90\%$ with VGG16.
\section*{Conclusion}
We have extended the landing method of~\cite{ablin2022fast} from the orthogonal group to the Stiefel manifold, yielding an iterative method for smooth optimization problems where the decision variables take the form of a rectangular matrix constrained to be orthonormal. The iterative method is infeasible in the sense that orthogonality is not enforced at the iterates. We have obtained a computable bound on the step size ensuring that the next iterate stays in a safe region. This safeguard step size, along with the smooth merit function~\eqref{eq:merit_function}, has allowed for a streamlined complexity analysis in Section~\ref{sec:algorithms}, both for the deterministic and stochastic cases. The various numerical experiments have illustrated the value of the proposed approach.

\acks{}
The authors thank Gabriel Peyré for fruitful discussions.
This work was supported by the Fonds de la Recherche Scientifique -- FNRS and the Fonds Wetenschappelijk Onderzoek -- Vlaanderen under EOS Project no 30468160, and by the Fonds de la Recherche Scientifique -- FNRS under Grant no T.0001.23. Simon Vary is a beneficiary of the FSR Incoming Post-doctoral Fellowship.
\newpage

\vskip 0.2in
\bibliography{sample}

\newpage
\appendix

\section{Further numerical experiments}\label{app:numerics}
Figure~\ref{fig:pca_app} shows the convergence plots of the landing method, Riemannian stochastic gradient descent (RGD) with QR retractions, and the stochastic gradient descent with the $\ell_2$ penalty (marked as ``Reg'') applied to the online PCA as described in Section~\ref{sec:experiments}. The landing compared to the stochastic RGD converges faster to the critical point in terms of computational time, especially with larger $p=500$.
\begin{figure}[h]
    \centering
    \begin{subfigure}[b]{0.9\textwidth}
        \includegraphics[width=.49\textwidth]{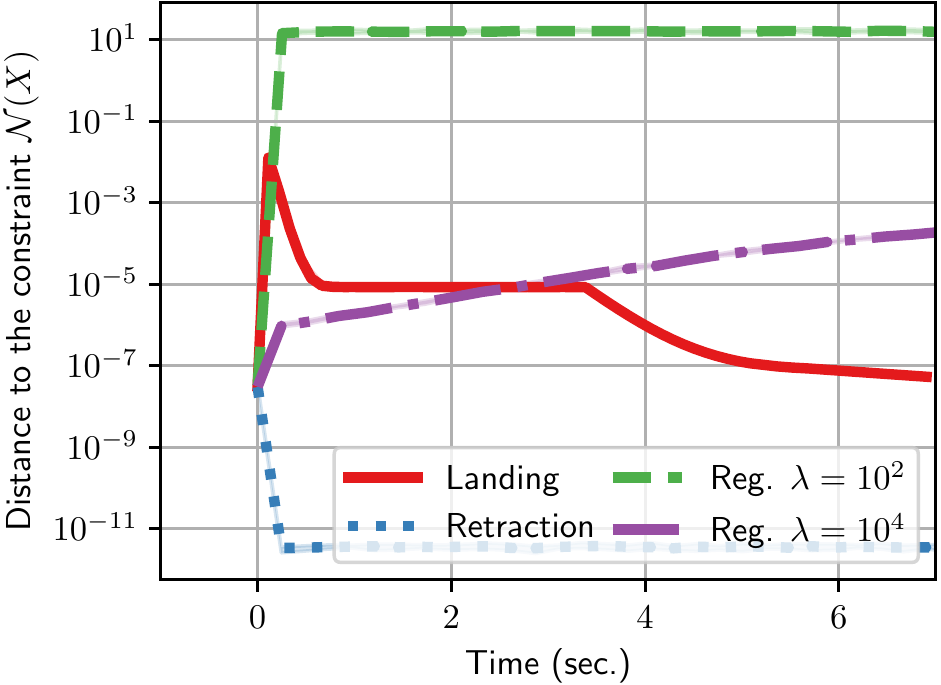}
        \includegraphics[width=.48\textwidth]{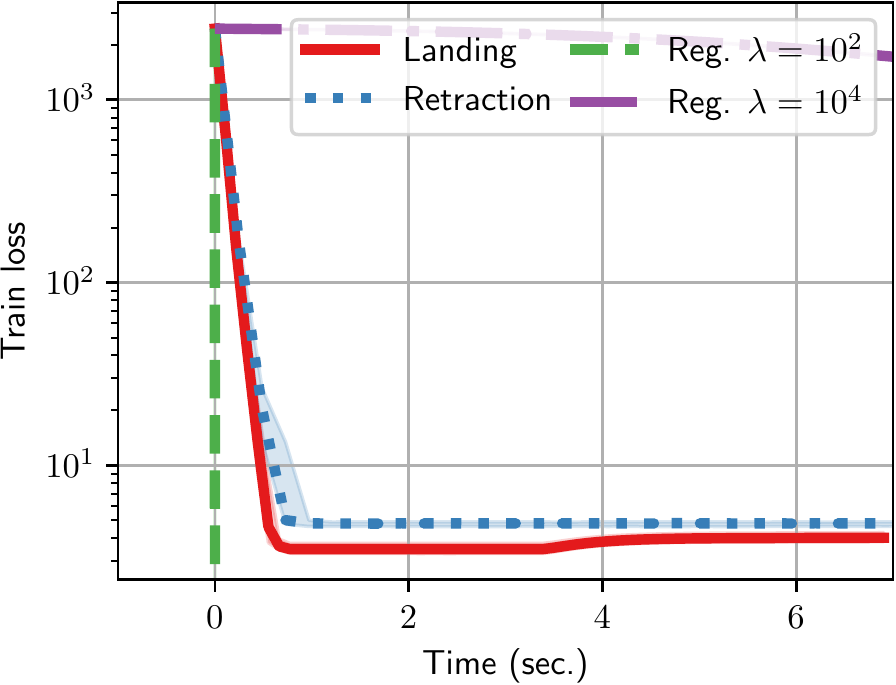}
        \caption{$n=5000, p=100$ \label{fig:pca_p100}}
    \end{subfigure}
    \begin{subfigure}[b]{0.9\textwidth}
        \includegraphics[width=.49\textwidth]{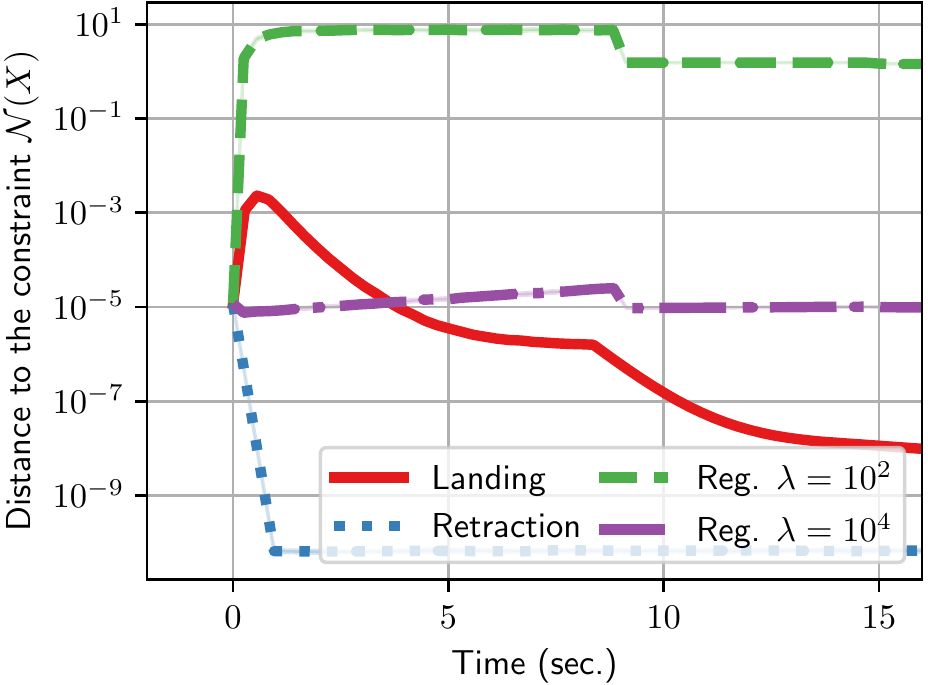}
        \includegraphics[width=.48\textwidth]{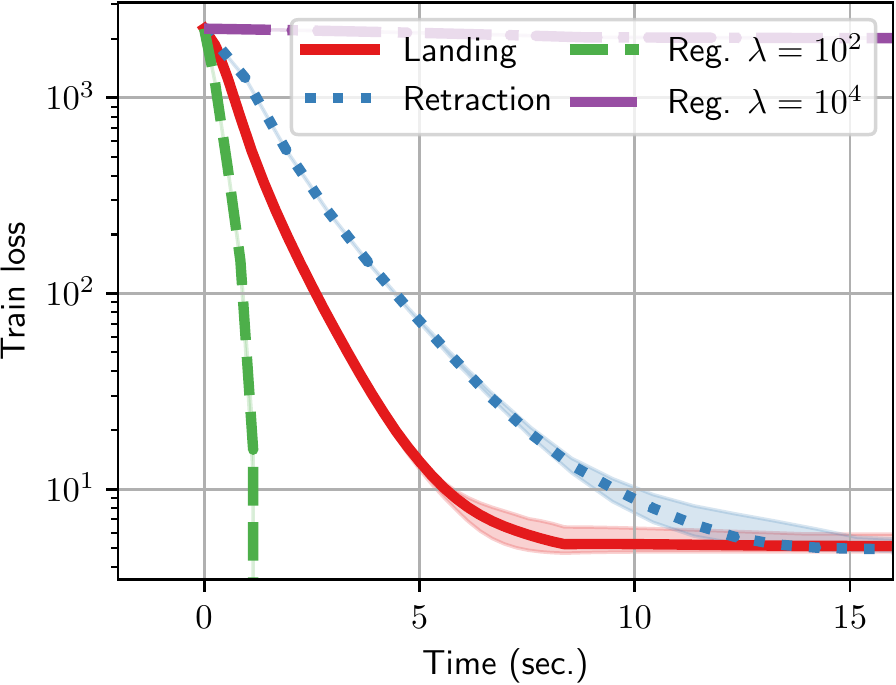}
        \caption{$n=5000, p=500$ \label{fig:pca_p500}}
    \end{subfigure}
    \caption{Convergence plots of the tested methods on online PCA.\label{fig:pca_app}}
\end{figure}

Figure~\ref{fig:ort_conv_app} shows the convergence of training of VGG16 on CIFAR-10 where each convolutional kernel is restricted to be orthogonal with the \texttt{GeoTorch} library using trivializations \citep{lezcano2019trivializations}. We see that the straightforward implementation of the method takes around~10 hours to complete, which is roughly 14 times slower compared to the other methods tested in Section~\ref{sec:algorithms}.
\begin{figure}[h]
    \centering
     \includegraphics[width=.45\textwidth]{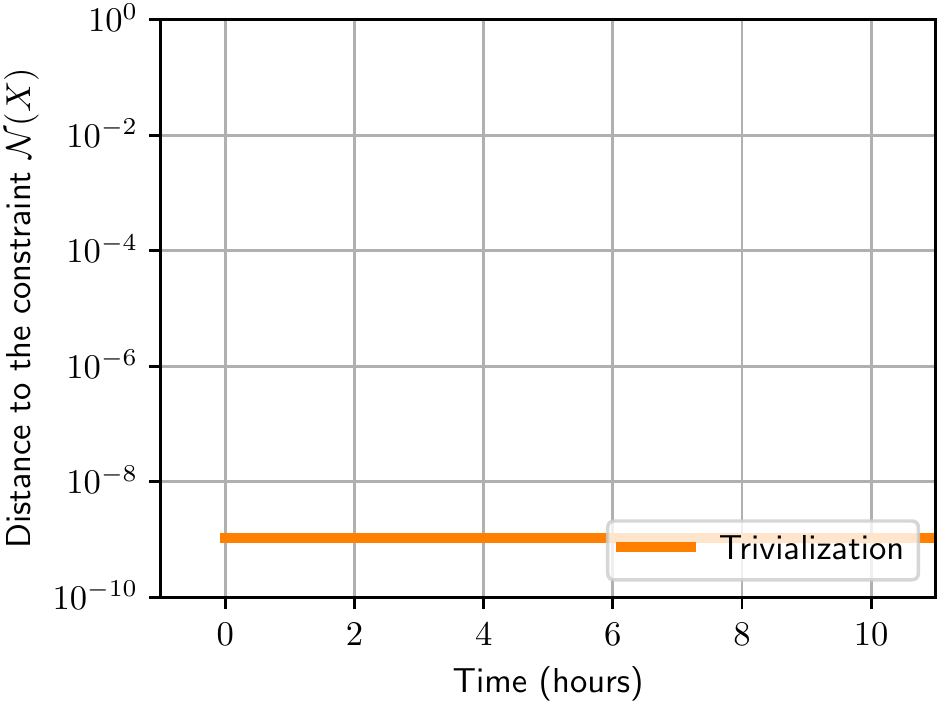}
    \includegraphics[width=.435\textwidth]{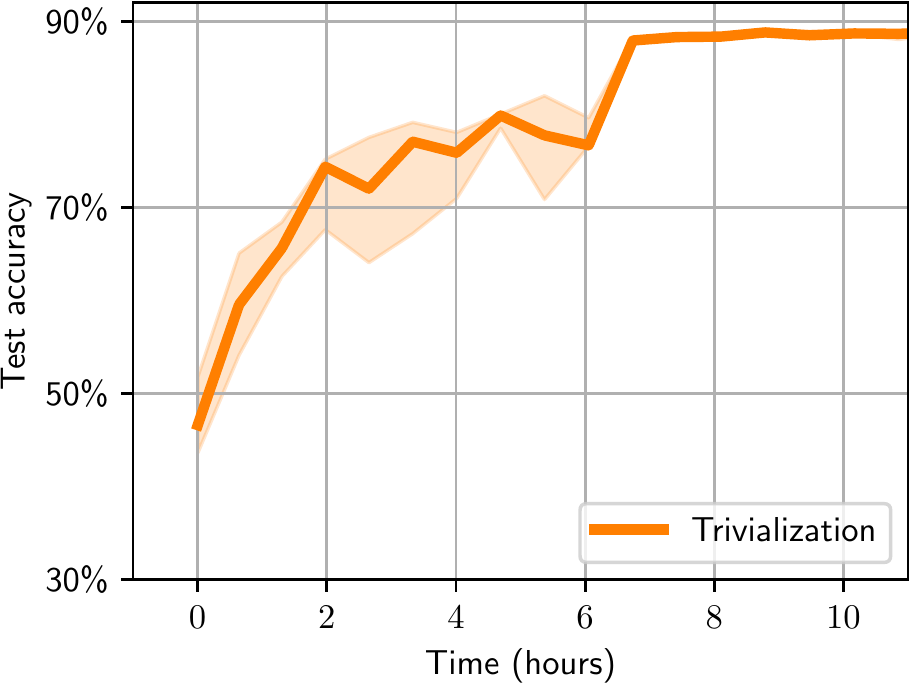}
    \caption{Training of VGG16 on CIFAR10 with orthogonal convolutional kernels with \texttt{GeoTorch} library using trivializations.\label{fig:ort_conv_app}}
\end{figure}

\section{Proofs}
\label{app:proofs}
\subsection{Proof of Lemma~\ref{lemma:singular_values}}
\begin{proof}
Letting $\sigma_1,\dots, \sigma_p$ the singular values of $X$, we have $\|X^{\top}X-I_p\|^2 = \sum_{i=1}^p(\sigma_i^2 - 1)^2$. Hence, if $\|X^{\top}X-I_p\|^2\leq \varepsilon^2$, we must have that all singular values satisfy $|\sigma^2 - 1| \leq \varepsilon$, which proves the result.    
\end{proof}

\subsection{Proof of Proposition~\ref{prop:bound_landing_norm}}
\begin{proof}
  Since the field \eqref{eq:skew_field} is the sum of two orthogonal terms, we have
  $\|F(X, A)\|^2 = \|AX\|^2 + \lambda^2\|X(X^\top X - I_p)\|^2$.
  Using the bound on the singular values of $X$ from Lemma~\ref{lemma:singular_values}, we have $4(1-\varepsilon) \mathcal{N}(X)\leq \|X(X^\top X - I_p)\|^2\leq 4(1+\varepsilon) \mathcal{N}(X)$, 
  which concludes the proof.
\end{proof}

\subsection{Proof of Lemma~\ref{lemma:safe_step}}
\begin{proof}
    Let the residual at $X$ be $\Delta = X^\top X - I_p$ and the landing field be $F = F(X, A)$. The residual $\tilde \Delta = {\tilde X}^\top {\tilde X} - I_p$ at the updated $\tilde X$ can be expressed as
    \begin{align}
        \tilde \Delta &= {\tilde X}^\top {\tilde X} - I_p = (X - \eta F)^\top (X - \eta F) - I_p \\
        &= X^\top X - \eta (X^\top F + F^\top X ) + \eta^2 F^\top F - I_p\\
        &= \Delta - \eta \left(X^\top (A X + \lambda X \Delta) + (-X^\top A + \lambda \Delta X^\top) X \right) + \eta^2 F^\top F \\
        &= \Delta - 2\eta\lambda (\Delta + \Delta^2) + \eta^2 F^{\top} F,
    \end{align}
    \revision{where for the last equality we use that $X^\top X = \Delta + I_p$}. By the triangle inequality and norm submultiplicativity, we can bound the Frobenius norm as
    \begin{align}
        \| \tilde \Delta \| &\leq (1-2\eta\lambda) \| \Delta \| + 2\eta\lambda\| \Delta \|^2 + \eta^2 \| F \|^2,
    \end{align}
    when $\eta < 1/(2\lambda)$. Rewriting the above using scalar notation of $d = \| \Delta \|$ and $g = \| F \|$ and fixed $\lambda >0$, we wish to find an interval for $\eta$ such that
    \begin{equation}
        \eta^2 g^2 + 2\eta\lambda d (d-1) + d \leq \varepsilon, \label{eq:safe_step_quad}
    \end{equation}
    which in turn ensures that $\| \tilde \Delta \| \leq \varepsilon$. For $\varepsilon \geq d$, i.e., when $X$ is in the safe region, we can discard the negative root of the quadratic inequality in \eqref{eq:safe_step_quad} resulting in the upper bound in \eqref{eq:safe_step size}.
    Note that if $d \geq 1$, the upper bound on the step size is not positive when on the boundary $d = \varepsilon$. To prevent this case, we need to have $\varepsilon < 1$.
\end{proof}

\subsection{Proof of Lemma~\ref{lemma:safe_step size_lower}}
\label{app:proof:safe_step}
\begin{proof}
    Let $X$ be with distance $d>0$, i.e.,
    \begin{equation*}
        \| X^\top X - I_p\|^2_F  = \sum_{i=1}^{p} (\sigma_i^2 - 1)^2 = d^2,
    \end{equation*}
    where $\sigma_i$ are the singular values of $X$, which, as a result, must be bounded as: $\sigma_i^2 \in [1-d, 1+d]$.
    Consider the bound on the normalizing component
    \begin{equation*}
        \| \nabla \cN(X) \|_F^2 = \| X (X^\top X - I_p)\|^2_F = \sum_{i=1}^p \sigma_i^2 (\sigma_i^2-1)^2,
    \end{equation*}
    implying that
    \begin{equation*}
        (1-d)d^2 \leq \| \nabla \cN(X) \|_F^2 \leq (1+d)d^2.
    \end{equation*}
    By the orthogonality of the two components in the landing field, we have that
    \begin{equation}
        \| \revision{F(X, A)} \|_F^2 = \|AX\|^2_F + \lambda^2 \| \nabla \cN(X)\|_F^2. \label{eq:safe_step size_flow_bound}
    \end{equation}

    We proceed to lower-bound the first term in the minimum stated in \eqref{eq:safe_step size} to make it independent of $X$. In the following, we denote $a = \|AX\|_F$ and $g = \| F(X, A) \|_F$
    \begin{align}
        &\frac{1}{g^2}\left( \lambda d (1-d) + \sqrt{\lambda^2 d^2 (1-d)^2 + g^2 (\varepsilon - d)}\right) \\
        &\geq \frac{1}{a^2 + \lambda^2 (1+d)d^2}\left(  \lambda d (1-d) + \sqrt{\lambda^2 d^2 (1-d)^2 + (a^2 + \lambda^2(1-d)^2 d^2) (\varepsilon - d)} \right) \label{eq:safe_step size_lower1} \\
        &\geq \frac{1}{a^2 + \lambda^2 (1+d)d^2}\left( \lambda d (1-d)\left(1 + \sqrt{\frac{1+\varepsilon-d}{2}}\right) + a\sqrt{\frac{\varepsilon - d}{2}}\right) \label{eq:safe_step size_lower2}\\
        &\geq \frac{\lambda (1-\varepsilon) d  + a\sqrt{(\varepsilon - d)/2}}{a^2 + \lambda^2 (1+\varepsilon)d^2} \finalrevision{:=K(\lambda, \varepsilon, a, d)} \label{eq:safe_step size_lower3}
    \end{align}
    where in \eqref{eq:safe_step size_lower1} we used the bound from \eqref{eq:safe_step size_flow_bound} to $g$ and that $(1-d)^2d^2 \leq (1-d)d^2$ for $d<1$, in \eqref{eq:safe_step size_lower2} we used the fact that for any $x,y \geq 0$ we have that $\sqrt{x^2 + y^2} \geq (x+y)/\sqrt{2}$, and in \eqref{eq:safe_step size_lower3} we used that $d \leq \varepsilon <1$.

\finalrevision{
We now define 
$$
Q(\lambda, \varepsilon, \tilde{a}) = \inf_{a\in]0, \tilde{a}], d\in ]0, \varepsilon]}K(\lambda, \varepsilon, a, d)\enspace.
$$
We have that for all $\lambda >0, \varepsilon>0$ and $\tilde{a}>0$,  $Q(\lambda,\varepsilon, \tilde{a})>0$ since $K>0$ and $0$ cannot be attained on the boundaries where $a\to 0$ or $d\to 0$. Indeed, as $a\to 0$, we have $K(\lambda, \varepsilon, a, d) = \frac{1-\varepsilon}{\lambda(1+\varepsilon)d} + O(a)$, and as $d\to 0$ we have $K(\lambda, \varepsilon, a, d) = \frac{\sqrt{\varepsilon}}{a\sqrt{2}} + O(d)$, both of these limits are bounded away from 0.
} 
    As a result, the upper bound on the safeguard step size for all $X$ with the $\varepsilon$ distance cannot decrease below
    \begin{equation*}
        \eta(X) \geq \min \eta^*(\tilde{a}, \varepsilon, \lambda) := \left\{Q(\lambda, \varepsilon, \tilde{a}),\, \frac{1}{2\lambda}\right\}
    \end{equation*}
\end{proof}

\subsection{Proof of Proposition~\ref{prop:recursive_step_size}}
\begin{proof}
    The proof is a simple recursion. The property that $X_0\in\stiefel^\varepsilon$ holds by assumption. Then, assuming that $X_0, \dots, X_k\in\stiefel^\varepsilon$, we have by assumption that $A_k$ is such that $\|A_kX_k\|\leq \tilde{a}$. It follows that $X_{k+1}\in\stiefel^\varepsilon$ following Lemma~\ref{lemma:safe_step size_lower}, which concludes the recursion.
\end{proof}

\begin{lemma}[Jacobian of $\Phi(X)$] \label{lemma:jacobian}
    Let $\Phi(X) = \sym(\nabla f(X)^{\top} X) :\bR^{n\times p} \rightarrow \bR^{p\times p}$ , 
    let $\Jacob{\Phi}{X}$ denote its derivative at $X$, and let $\Jacob{\Phi}{X}^*[\dot{X}]$ denote its adjoint in the sense of the Frobenius inner product at $X$ applied to $\dot{X}$. Let $\vect{\cdot}:\bR^{m \times n} \rightarrow \bR^{mn}$ denote the vectorization operation, and let $\vectHess{X}$ denote the matrix representation of the Hessian of $f$ at $X$; namely, $\vectHess{X} \in \mathbb{R}^{np\times np}$ such that, for all $\dot{X}\in\mathbb{R}^{n\times p}$, $\vectHess{X}[\vect{\dot{X}}] = \vect{\mathrm{D}(\nabla f)(X)[\dot{X}]}$. Then 
    \begin{equation*}
        \vect{\Jacob{\Phi}{X}^* [X^{\top}X - I_p]} = \vectHess{X} \vect{\nabla \cN (X)} + \vect{\nabla f(X)(X^{\top}X - I)}.
    \end{equation*}
\end{lemma}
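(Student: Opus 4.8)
The plan is to compute the Jacobian of $\Phi(X) = \sym(\nabla f(X)^\top X)$ directly, then identify its adjoint and evaluate it at the symmetric matrix $X^\top X - I_p$. First I would differentiate $\Phi$ along an arbitrary direction $\dot X$. Writing $\mathrm{D}(\nabla f)(X)[\dot X]$ for the directional derivative of the gradient (so that in vectorized form this is $\vectHess{X}\vect{\dot X}$), the product rule gives
\begin{equation*}
  \Jacob{\Phi}{X}[\dot X] = \sym\!\left( \mathrm{D}(\nabla f)(X)[\dot X]^\top X + \nabla f(X)^\top \dot X \right).
\end{equation*}
Since $\sym$ is self-adjoint with respect to the Frobenius inner product, for any symmetric $S$ we have $\inner{\Jacob{\Phi}{X}[\dot X]}{S} = \inner{\mathrm{D}(\nabla f)(X)[\dot X]^\top X + \nabla f(X)^\top \dot X}{S}$.

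Next I would move each term onto $\dot X$ using the cyclic/transpose identities for the trace inner product. For the second term, $\inner{\nabla f(X)^\top \dot X}{S} = \inner{\dot X}{\nabla f(X) S}$, and since $S$ is symmetric this equals $\inner{\dot X}{\nabla f(X) S}$ directly. For the first term, $\inner{\mathrm{D}(\nabla f)(X)[\dot X]^\top X}{S} = \inner{\mathrm{D}(\nabla f)(X)[\dot X]^\top}{X S} = \inner{\mathrm{D}(\nabla f)(X)[\dot X]}{S^\top X^\top} = \inner{\mathrm{D}(\nabla f)(X)[\dot X]}{X S}$ using $S = S^\top$. Now the Hessian $\vectHess{X}$ is symmetric (it is the Hessian of a scalar function), so in vectorized form $\inner{\mathrm{D}(\nabla f)(X)[\dot X]}{X S} = \vect{\dot X}^\top \vectHess{X} \vect{X S} = \inner{\dot X}{\mathrm{D}(\nabla f)(X)[X S]}$ — i.e. the adjoint of $\dot X \mapsto \mathrm{D}(\nabla f)(X)[\dot X]$ is itself, applied to $X S$. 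Collecting,
\begin{equation*}
  \Jacob{\Phi}{X}^*[S] \;=\; \mathrm{D}(\nabla f)(X)[X S] \;+\; \nabla f(X) S .
\end{equation*}

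Finally I would substitute $S = X^\top X - I_p$, which is symmetric. Then $X S = X(X^\top X - I_p) = \nabla \cN(X)$, so $\mathrm{D}(\nabla f)(X)[X S] = \mathrm{D}(\nabla f)(X)[\nabla\cN(X)]$, whose vectorization is exactly $\vectHess{X}\vect{\nabla\cN(X)}$ by the defining property of $\vectHess{X}$; and $\nabla f(X) S = \nabla f(X)(X^\top X - I_p) = \nabla f(X)(X^\top X - I)$. Vectorizing both sides yields the claimed identity. The main thing to be careful about — the only real obstacle — is the bookkeeping of adjoints: keeping track of which factor is symmetric, correctly transposing inside the Frobenius pairing, and using the self-adjointness of both $\sym$ and the Hessian operator $\vectHess{X}$; none of the steps is deep but a sign or transpose slip would break the identity. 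I would also note explicitly that no assumption beyond twice-differentiability of $f$ is needed, so the statement holds pointwise for every $X\in\bR^{n\times p}$.
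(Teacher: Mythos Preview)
Your argument is correct and follows essentially the same route as the paper: compute the directional derivative of $\Phi$, pass to the adjoint via the Frobenius pairing using self-adjointness of $\sym$ and of the Hessian operator, and then specialize to $Z = X^\top X - I_p$ so that $XZ = \nabla\cN(X)$. The only difference is cosmetic---the paper keeps a general $Z$ and obtains $\sym(Z)$ in the adjoint formula, whereas you restrict to symmetric $S$ from the outset; note also that your intermediate equality $\inner{\mathrm{D}(\nabla f)(X)[\dot X]^\top}{X S}$ has a dimension mismatch (the two matrices live in different spaces), but the endpoint $\inner{\mathrm{D}(\nabla f)(X)[\dot X]}{X S}$ is correct since $\Tr(X^\top A S)=\Tr(A^\top X S)$ for symmetric $S$.
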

\begin{proof}
    Let $\mathrm{D}F(X)[\dot{X}] = \lim_{t\to0} (F(X+t\dot{X})-F(X))/t$ denote the derivative of $F$ at $X$ along $\dot{X}$. We have $\mathrm{D}\Phi(X)[\dot{X}] = \sym(\left(\mathrm{D}(\nabla f)(X)[\dot{X}]\right)^{\top} X + \nabla f(X)^{\top} \dot{X})$. Hence, for all $\dot{X}\in\mathbb{R}^{n\times p}$ and $Z\in \mathbb{R}^{p\times p}$,
    it holds that 
    \begin{align}
        \langle \dot{X}, (\mathrm{D}\Phi)^*(X)[Z]\rangle 
        &= \inner{\mathrm{D}\Phi(X)[\dot{X}]}{Z} \\
        &= \inner{\sym(\left(\mathrm{D}(\nabla f)(X)[\dot{X}]\right)^{\top} X + \nabla f(X)^{\top} \dot{X})}{Z}\\
        &= \inner{\left(\mathrm{D}(\nabla f)(X)[\dot{X}]\right)^{\top} X + \nabla f(X)^{\top} \dot{X}}{\sym(Z)}\\
        &=\inner{\mathrm{D}(\nabla f)(X)[\dot{X}]}{ X\sym(Z)} + \inner{ \dot{X}}{\nabla f(X)\sym(Z)}\\
        &= \inner{\dot{X}}{\mathrm{D}(\nabla f)(X)[X\sym(Z)]} + \inner{\dot{X}}{\nabla f(X)\sym(Z)}.
    \end{align}
    Hence 
    \[
    (\mathrm{D}\Phi)^*(X)[Z] = \mathrm{D}(\nabla f)(X)[X\sym(Z)] + \nabla f(X)\sym(Z),
    \]
    where $(\mathrm{D}\Phi)^*(X)$ is the adjoint of the Jacobian of $\Phi$ at $X$, and $\mathrm{D}(\nabla f)(X)[\cdot]: \mathbb{R}^{n\times p} \to \mathbb{R}^{n\times p}$ is the Hessian operator of $f$ at $X$. For $Z = X^{\top} X-I_p$, this yields the result of Lemma~\ref{lemma:jacobian}.
\end{proof}

\subsection{Proof of Proposition~\ref{prop:lyapunov}}
\begin{proof}
    Computing the gradient of $\cL(X)$ has four terms
    \begin{equation}\label{eq:merit_grad}
        \nabla \cL(X) = \nabla f(X) -\frac12 \Jacob{\Phi}{X}^* [X^{\top}X - I_p] - X\sym(X^\top \nabla f(X)) + \mu \nabla \cN(X),
    \end{equation}
    where $\Jacob{\Phi}{X}^* [X^{\top}X - I_p]$ denotes the adjoint of the Jacobian in the sense of the Frobenius inner product of $\Phi(X) = \sym(X^\top \nabla f(X))$ in $X$ evaluated in the direction $X^\top X - I_p$. We proceed by expressing the inner product of the landing field $\Lambda(X)$ with each of the four terms of $\nabla \cL(X)$ in \eqref{eq:merit_grad} separately.
    
    The inner product between the first term of \eqref{eq:merit_grad} and the landing field $\Lambda(X)$ is
    \begin{align}
        \inner{\Lambda(X)}{\nabla f(X)} &= \inner{\sk(\nabla f(X) X^\top)X + \lambda X (X^\top X - I_p)}{\nabla f(X)}\\
        &= \inner{\sk(\nabla f(X) X^\top)}{\nabla f(X)X^\top} + \lambda\inner{X^\top X - I_p}{X^\top \nabla f(X)}\\
        & = \| \sk(\nabla f(X) X^\top) \|_F^2 + \lambda \inner{\sym(X^\top \nabla f(X))}{X^{\top}X -I_p}, \label{eq:merit_inner1}
    \end{align}
    where we used the fact that the inner product of a skew-symmetric and a symmetric matrix is zero.

    We can express the inner product between the Jacobian using Lemma \ref{lemma:jacobian} in the second term of \eqref{eq:merit_grad} as 
    \begin{align}
        \begin{split}
        \inner{\Lambda(X)}{-\frac12 \Jacob{\Phi}{X}^* [X^{\top}X - I_p]} = &-\frac12\lambda           \inner{\vectHess{X} \vect{ \nabla \cN(X)}}{ \vect{\nabla \cN(X)}} \\
            \qquad & -\frac12\inner{\vectHess{X} \vect{ \nabla \cN(X)}}{\vect{\grad f(X)}}\\
            \qquad & -\frac12\inner{\nfX^{\top} \grad f(X)}{X^{\top}X - I_p} \\
            \qquad & -\frac12\lambda \inner{\sym(X^\top \nabla f(X))}{(X^{\top}X - I_p)^2}, 
        \end{split}\label{eq:merit_inner2}
    \end{align}
    where $\vect{\cdot}:\mathbb{R}^{n\times p} \rightarrow \mathbb{R}^{np}$ vectorizes a matrix and $\vectHess{X}$ denote the matrix representation of the Hessian of $f$ at $X$; namely, $\vectHess{X} \in \mathbb{R}^{np\times np}$ such that, for all $\dot{X}\in\mathbb{R}^{n\times p}$, $\vectHess{X}[\vect{\dot{X}}] = \vect{\mathrm{D}(\nabla f)(X)[\dot{X}]}$.
    
    The third term is
    \begin{align}
        \inner{\Lambda(X)}{-X\sym(X^\top \nabla f(X))} &= \inner{\sk(\nabla f(X) X^\top)X + \lambda \nabla \cN(X)}{-X\sym(X^\top \nabla f(X))}\\
        &= -\lambda\inner{\sym(X^\top \nabla f(X))}{X^\top X (X^\top X - I_p)}, \label{eq:merit_inner3}
    \end{align} 
    where the inner product with the skew-symmetric matrix in the first term is zero.
    
    The inner product of the fourth term and the landing field is
    \begin{align}
        \inner{\Lambda(X)}{\mu \nabla \cN(X)} &= \lambda \mu \| \nabla \cN(X) \|^2_F \label{eq:merit_inner4}
    \end{align}
    by orthogonality of the two components of the landing field.

    Adding all the four terms expressed in \eqref{eq:merit_inner1}, \eqref{eq:merit_inner2},  \eqref{eq:merit_inner3}, and  \eqref{eq:merit_inner4} together gives
    \begin{align}
         \inner{\nabla \cL(X)}{\Lambda(X)} &= \| \sk(\nabla f(X) X^\top) \|_F^2 \label{eq:merit_innerB0}\\
            & + \lambda \inner{\left(\mu I_{np} - \frac12 \vectHess{X}\right) \vect{\nabla\cN(X)}}{\vect{\nabla\cN(X)}} \label{eq:merit_innerB1}\\
            & - \lambda \frac32 \inner{(X^{\top}X-I_p)^2}{\sym(X^\top \nabla f(X))} \label{eq:merit_innerB2}\\
            & -\frac12\inner{\vectHess{X} \vect{ \nabla \cN(X)}}{\vect{\grad f(X)}} \label{eq:merit_innerB3}\\
            & -\frac12\inner{\nfX^{\top} \grad f(X)}{X^{\top}X - I_p}, \label{eq:merit_innerB4}
    \end{align}
    where the first line in \eqref{eq:merit_innerB0} comes from the first term of \eqref{eq:merit_inner1}, the second line \eqref{eq:merit_innerB1} is a combination of the first term in \eqref{eq:merit_inner2} and \eqref{eq:merit_inner4}, the third line in \eqref{eq:merit_innerB2} comes from the second term in \eqref{eq:merit_inner1}, the third term in \eqref{eq:merit_inner2},  and \eqref{eq:merit_inner2}.

    We will bound lower bound each term separately. The term in \eqref{eq:merit_innerB1} is lower bounded as
    \begin{align}
        \lambda \inner{\left(\mu I_{np} - \frac12 \vectHess{X}\right) \vect{\nabla\cN(X)}}{\vect{\nabla\cN(X)}} &\geq \lambda \left(\mu-\frac{\Lc}{2}\right)\| \nabla\cN(X) \|^2_F \\
            &\geq 4 \lambda \left(\mu-\frac{\Lc}{2}\right) \sigma_p^2 \, \cN(X), \label{eq:merit_boundB1} 
    \end{align}
    where $\sigma_p$ is the smallest singular value of $X$ and $\Lc$ is the Lipschitz constant of $\nabla f(X)$.

    The term in \eqref{eq:merit_innerB2} is lower bounded using the Cauchy-Schwarz inequality as
    \begin{align}
        - \lambda \frac32 \inner{(X^{\top}X-I_p)^2}{\sym(X^\top \nabla f(X))} \geq - 6 \lambda \, \cN(X) \| \sym(X^\top \nabla f(X)) \|_F , \label{eq:merit_boundB2}
    \end{align}
    by the fact that $\|X^\top X - I_p\|^2_F = 4\cN(X)$.
    
    The term in \eqref{eq:merit_innerB3} is lower bounded as
    \begin{align}
         -\frac12\inner{\vectHess{X} \vect{ \nabla \cN(X)}}{\vect{\grad f(X)}} &\geq -\frac{\Lc}{2} \|  X (X^\top X - I_p) \|_F \|\grad f(X)\|_F \\ 
         & \geq -\Lc \sigma_1 \sqrt{\cN(X)} \|\grad f(X)\|_F, \label{eq:merit_boundB3}
    \end{align} 
    by the fact that the operator norm of $\|X\|_2 = \sigma_1 $ and $\| X^\top X - I_p\|_F = 2\sqrt{\cN(X)}$.
    
    The term in \eqref{eq:merit_innerB4} is lower bounded as
    \begin{align}
        -\frac12\inner{\nfX^{\top} \grad f(X)}{X^{\top}X - I_p} & \geq -\frac12\|\nabla f(X)\|_F \| \grad f(X) \|_F \| X^\top X - I_p \|_F\\
        &\geq - \gradc \sqrt{\cN(X)} \|\grad f(X)\|_F,\label{eq:merit_boundB4}
    \end{align}
    where $\gradc$ is such that $\|\nabla f(X)\| \leq \gradc$ for all $X\in\stiefel^\varepsilon$.

    Now we bound the terms in \eqref{eq:merit_boundB3} and \eqref{eq:merit_boundB4} together
    \begin{align}
         -\frac12\inner{\vectHess{X} \vect{ \nabla \cN(X)}}{\vect{\grad f(X)}}  &-\frac12\inner{\nfX^{\top} \grad f(X)}{X^{\top}X - I_p} \nonumber\\
         & \geq - (\gradc + \Lc\sigma_1)  \sqrt{\cN(X)} \| \grad f(X) \|  \\ 
         & \geq - \frac12 (\gradc + \Lc\sigma_1) \left( \beta \cN(X) + \beta^{-1} \| \grad f(X)\|^2 \right)    \label{eq:merit_boundB3and4},
    \end{align}
    where in the first inequality we used the previously derived bounds in \eqref{eq:merit_boundB3} and \eqref{eq:merit_boundB4}, and the second inequality is a consequence of the AG inequality $\sqrt{xy}\leq (x+y)/2$ with $x = \beta \cN(X)$ and $y = \beta^{-1} \| \grad f(X)\|^2$ for an arbitrary $\beta >0$, which will be specified later.
    
    Adding the bounds in \eqref{eq:merit_boundB1}, \eqref{eq:merit_boundB2}, and \eqref{eq:merit_boundB3and4} together, we have a lower bound as
    \begin{align}
        \inner{\nabla \cL(X)}{\Lambda(X)} &\geq \|\grad f(X)\|_F^2\left( \sigma_1^{-2} - \frac12(\gradc + \Lc\sigma_1) \beta^{-1} \right) \\
            &+\cN(X) \left(  4\lambda \left(\mu-\frac{\Lc}{2}\right) \sigma_p^2 - 6 \lambda \| \sym(X^\top \nabla f(X)) \|_F  - \frac12(\gradc + \Lc\sigma_1) \beta \right)
    \end{align}
    where we used that $\| \sk(\nabla f(X) X^{\top})\|_F \geq \sigma_1^{-1} \| \grad f(X)\|_F$. Choosing $\beta = \sigma_1^2 \frac{\gradc+\Lc\sigma_1}{2-\sigma_1^2}$ and bounding $\| \sym(X^\top \nabla f(X)) \|_F \leq s$ we have
    \begin{align}
         \inner{\nabla \cL(X)}{\Lambda(X)} &\geq \frac12 \|\grad f(X)\|_F^2 + \cN(X) \left( 4 \lambda \left(\mu-\frac{\Lc}{2}\right) \sigma_p^2 - 6 \lambda  s  - \frac12\sigma_1^2 \frac{(\gradc+\Lc\sigma_1)^2}{2-\sigma_1^2} \right)\\
         &\geq \frac12 \|\grad f(X)\|_F^2 + \cN(X) \left( 4 \lambda \left(\mu-\frac{\Lc}{2}\right) (1-\varepsilon) - 6 \lambda  s  - \frac12 \hat{L}^2(1+\varepsilon) \frac{(2\sqrt{1+\varepsilon})^2}{2-(1+\varepsilon)} \right) \\
         &\geq \frac12 \|\grad f(X)\|_F^2 + \cN(X) \left( 4 \lambda \left(\mu-\frac{\Lc}{2}\right) (1-\varepsilon) - 6 \lambda  s  - \finalrevision{2 \hat{L}^2 \frac{(1+\varepsilon)^2}{1-\varepsilon}} \right), \label{eq:merit_bound_final1}
    \end{align}
    where in the second line we define $\hat{L} = \max\{\Lc, \gradc\}$ and we used that $\sqrt{1+\varepsilon} \geq 1$. 
    The coefficient in front of the distance term $\cN(X)$ in \eqref{eq:merit_bound_final1} is lower bounded by $\lambda \mu$ for
    \begin{equation}  \label{eq:mu-lb}
        \mu \geq \frac{2}{3-4\varepsilon}\left( \Lc(1-\varepsilon) + 3s + \finalrevision{\hat{L}^2 \frac{(1+\varepsilon)^2}{\lambda(1-\varepsilon)}} \right).
    \end{equation}
\end{proof}

\subsection{Proof of Proposition~\ref{prop:bound_scalar_with_norm}}
\begin{proof}
    We have
    \begin{align}
         \frac12\|\grad f(X)\|^2 + \nu \mathcal{N}(X)&\geq \rho(\|\grad f(X)\|^2 + 4\lambda^2 (1+\varepsilon) \mathcal{N}(X))\\
         &\geq \rho \|\Lambda(X)\|^2\enspace,
    \end{align}
where the last inequality comes from Proposition~\ref{prop:bound_landing_norm}.
\end{proof}

\subsection{Proof of Proposition~\ref{prop:smoothness_merit}}
\begin{proof}
    We start by considering the smoothness constant of $\mathcal{N}$. We have $\nabla \mathcal{N}(X) = X(X^{\top}X-I_p)$, hence we find that its Hessian is such that in a direction $E\in\mathbb{R}^{n \times p}$:
    $$
    \nabla^2\mathcal{N}(X)[E] = E(X^{\top}X - I_p) +X(E^{\top}X + X^{\top}E)\enspace.
    $$
    We can then bound crudely using the triangular inequality
    \begin{align}
        \|\nabla^2\mathcal{N}(X)[E]\| &\leq \|E(X^{\top}X - I_p)\| +\|XE^{\top}X\| + \|XX^{\top}E\|\\
        &\leq \varepsilon\|E\| + 2(1+\varepsilon)\|E\|\\
        &\leq (2 + 3\varepsilon)\|E\|\enspace.
    \end{align}
    This implies that all the absolute values of the eigenvalues of $\nabla^2\mathcal{N}(X)$ are bounded by $2+3\varepsilon$, so that $\mathcal{N}$ is $(2+3\varepsilon)$-smooth.
    The result follows since the sum of smooth functions is smooth.
\end{proof}

\subsection{Proof of Proposition~\ref{prop:convergence_landing}}
\begin{proof}
Since we use the safeguard step size, the iterates remain in $\stiefel^\varepsilon$.
  Using the $L_g$-smoothness of $\mathcal{L}$ (Proposition~\ref{prop:smoothness_merit}), we get
  $$
  \mathcal{L}(X_{k+1})\leq \mathcal{L}(X_k) - \eta \langle \Lambda(X_k), \nabla \mathcal{L}(X_k)\rangle + \frac{L_g\eta^2}2\|\Lambda(X_k)\|^2 \enspace.
  $$
  Using Proposition~\ref{prop:lyapunov} and Proposition~\ref{prop:bound_landing_norm}, we get
  $$
  \mathcal{L}(X_{k+1}) \leq \mathcal{L}(X_k) - (\frac\eta2 - \frac{\eta^2L_g}2)\|\grad f(X_k)\|^2 -(\eta\nu -2\lambda^2\eta^2L_g(1+\varepsilon))\mathcal{N}(X_k)\enspace.
  $$
  Using the hypothesis on $\eta$, we have both $\frac\eta2 - \frac{\eta^2L_g}2 \geq \frac\eta4$ and $\eta\nu -2\lambda^2\eta^2L_g(1+\varepsilon) \geq \frac{\eta\nu}2$. We, therefore, obtain the inequality
  
  $$
  \frac{\eta}4\|\grad f(X_k)\|^2 + \frac{\eta\nu}2 \mathcal{N}(X_k) \leq \mathcal{L}(X_k) - \mathcal{L}(X_{k+1})\enspace.
  $$
  
  Summing these terms gives
  \begin{equation}
       \frac{\eta}4\sum_{k=1}^K\|\grad f(X_k)\|^2 + \frac{\eta\nu}2 \sum_{k=1}^K\mathcal{N}(X_k) \leq \mathcal{L}(X_0) - \mathcal{L}(X_{K+1}) \leq \mathcal{L}(X_0) - \mathcal{L}^*,
  \end{equation}
which implies that we have both
\begin{equation}
    \frac{\eta}4\sum_{k=1}^K\|\grad f(X_k)\|^2 \leq \mathcal{L}(X_0) - \mathcal{L}^*\enspace \text{and} \enspace
    \frac{\eta\nu}2 \sum_{k=1}^K\mathcal{N}(X_k)  \leq \mathcal{L}(X_0) - \mathcal{L}^*\enspace.
\end{equation}
These two inequalities then directly provide the result.
\end{proof}

\subsection{Proof of Proposition~\ref{prop:decrease_stochastic}}
\begin{proof}We use once again the smoothness of $\mathcal{L}$ and unbiasedness of $\Lambda_i(X_k)$ to get

\begin{align}
    \mathbb{E}_i[\mathcal{L}(X_{k+1})]&\leq\mathcal{L}(X_k) - \eta_k\langle \Lambda(X_k), \nabla\mathcal{L}(X_k)\rangle+\frac{\eta_k^2L_g}2\mathbb{E}[\|\Lambda_i(X_k)\|^2]\\
    &\leq \mathcal{L}(X_k) - \eta_k\langle \Lambda(X_k), \nabla\mathcal{L}(X_k)\rangle+\frac{\eta_k^2L_g}2(B + \|\Lambda(X_k)\|^2) \\
    &\leq \mathcal{L}(X_k) - \eta_k(\frac12\|\grad f(X_k)\|^2 + \nu \mathcal{N}(X_k))\\
    &+ \frac{\eta_k^2L_g}2(\|\grad f(X_k)\|^2 + 4\lambda^2(1+\varepsilon)\mathcal{N}(X_k))+ \frac{\eta_k^2L_gB}2 \\
    &\leq  \mathcal{L}(X_k) - \frac{\eta_k - \eta_k^2L_g}2\|\grad f(X_k)\|^2 - (\eta_k\nu - 2\eta_k^2L_g\lambda^2(1+\varepsilon))\mathcal{N}(X_k )+\frac{\eta_k^2L_gB}2\enspace.
\end{align}
Taking $\eta_k\leq \min(\frac1{2L_g}, \frac{\nu}{4L_g\lambda^2(1+\varepsilon)})$ simplifies the inequality to 
$$
\mathbb{E}_i[\mathcal{L}(X_{k+1})]\leq \mathcal{L}(X_k) - \frac{\eta_k}4 \|\grad f(X_k)\|^2 - \frac{\eta_k\nu}2\mathcal{N}(X_k)+\frac{\eta_k^2L_gB}2\enspace.
$$
\end{proof}
\subsection{Proof of Proposition~\ref{prop:convergence_stochastic_decreasing_step}}
\begin{proof}
Taking expectations with respect to the past, and summing up the inequality in Proposition~\ref{prop:decrease_stochastic} gives, using $\sum_{k=0}^K (1 + k)^{-1}\leq \log(K +1)$:
$$
\frac14\sum_{k=0}^K\eta_k\mathbb{E}[\|\grad f(X_k)\|^2] \leq \mathcal{L}(X_0) - \mathcal{L}^* + \frac{\eta_0^2L_g B}2 \log(K+1)
$$
and 
$$
\frac\nu2\sum_{k=0}^K\eta_k\mathbb{E}[\mathcal{N}(X_k)] \leq \mathcal{L}(X_0) - \mathcal{L}^* + \frac{\eta_0^2L_g B}2 \log(K+1)\enspace.
$$

Next, we use the bound $\inf_{k\leq K}\mathbb{E}[\|\grad f(X_k)\|^2]\leq \sum_{k=0}^K\eta_k\|\grad(X_k)\|^2 \times (\sum_{k=0}^K\eta_k)^{-1}$ and the fact that $\sum_{k=0}^K\eta_k\geq \eta_0\sqrt{K}$
to get
$$
\inf_{k\leq K}\mathbb{E}[\|\grad f(X_k)\|^2] \leq 4\frac{ \mathcal{L}(X_0) - \mathcal{L}^* + \frac{\eta_0^2L_g B}2 \log(K+1)}{\eta_0\sqrt{K}}
$$
and 
$$
\inf_{k\leq K}\mathbb{E}[\mathcal{N}(X_k)] \leq 2\frac{ \mathcal{L}(X_0) - \mathcal{L}^* + \frac{\eta_0^2L_g B}2 \log(K+1)}{\nu\eta_0\sqrt{K}} \enspace.
$$
\end{proof}

\subsection{Proof of Proposition~\ref{prop:convergence_stochastic_fixed_step}}
\begin{proof}
    Just like for the previous proposition, we get by summing the descent lemmas:

    $$\frac{1}{4K}\sum_{k=1}^K\mathbb{E}[\|\grad f(X_k)\|^2] + \frac{\nu}{2K} \sum_{k=1}^K\mathbb{E}[\mathcal{N}(X_k)]\leq \frac{\mathcal{L}(X_0) - \mathcal{L}^*}{\eta K} + \frac{\eta L_gB}2\enspace .$$

Taking $\eta = \eta_0K^{-1/2}$ yields as advertised:
$$
\inf_{k\leq K}\mathbb{E}[\|\grad f(X_k)\|^2] \leq \frac{4}{\sqrt{K}}\left( \frac{\mathcal{L}(X_0) - \mathcal{L}^*}{\eta_0} + \frac{\eta_0 L_gB}2\right)
$$
and 
$$
\inf_{k\leq K}\mathbb{E}[\mathcal{N}(X_k)] \leq \frac{2}{\nu \sqrt{K}}\left( \frac{\mathcal{L}(X_0) - \mathcal{L}^*}{\eta_0} + \frac{\eta_0 L_gB}2\right)\enspace.
$$
\end{proof}
\subsection{Proof of Proposition~\ref{prop:convergence_saga}}
\begin{proof}
    We define $X_k^i$ as the matrix such that $\Phi_k^i = \nabla f(X_k^i)$, i.e., the last iterate for which the memory corresponding to sample $i$ has been updated. As in the classical SAGA analysis, we will form a merit function combining the merit function $\mathcal{L}$ and the distance to the memory, defined as 
    $$S_k = \frac1N\sum_{j=1}^N\mathbb{E}[\|X_k - X_k^j\|^2]\enspace.$$
We also define the variance of the landing direction as 
$$
V_k = \frac1N\sum_{j=1}^N\|\Lambda_k^j\|^2\enspace.
$$
For short, we let $\Lambda_k = \Lambda(X_k)$.
    \paragraph{Control of the distance to the memory}
    Looking at an individual term $j$ in the sum of $S_k$, we have
    \begin{align}
        \mathbb{E}_i[\|X_{k+1} - X_{k+1}^j\|^2] &= \mathbb{E}_i[\|X_{k} - \eta \Lambda_k^i -  X_{k+1}^j\|^2]\\
        &= \frac1N (\sum_{i\neq j}^N\|X_{k} - \eta \Lambda_k^i -  X_{k}^j\|^2 + \eta^2 \|\Lambda_k^j\|^2)\\
        &=\mathbb{E}_i[\|X_{k} - \eta \Lambda_k^i -  X_{k}^j\|^2] -\frac1N(\|X_{k} - \eta \Lambda_k^j -  X_{k}^j\|^2 -\eta^2 \|\Lambda_k^j\|^2)\enspace.
    \end{align}
    On the one hand, we have for the first term:
    \begin{align}
        \mathbb{E}_i[\|X_{k} - \eta \Lambda_k^i -  X_k^j\|^2] &= \|X_{k}  -  X_k^j\|^2 -2\eta \langle \Lambda_k, X_{k}  -  X_k^j\rangle +\eta^2V_k\\
        &\leq (1+\eta \beta)\|X_{k}  -  X_k^j\|^2 + \eta \beta^{-1}\|\Lambda_k\|^2 + \eta^2V_k,
    \end{align}
    where we introduce $\beta >0$ from Young's inequality to control the scalar product.
    For the second term, we obtain
    \begin{align}
        \|X_{k} - \eta \Lambda_k^j -  X_{k}^j\|^2 - \eta^2\|\Lambda_k^j\|^2 &= \|X_k - X_k^j\|^2 - 2\eta \langle  \Lambda_k^j, X_k - X_k^j\rangle\\
        &\geq (1 - \gamma \eta)\|X_k - X_k^j\|^2 - \gamma^{-1} \eta\|\Lambda_k^j\|^2,
    \end{align}
    where once again we introduce $\gamma>0$ from Young's inequality.
    Taking all of these inequalities together gives
    $$\mathbb{E}_i[\|X_{k+1} - X_{k+1}^j\|^2] \leq (1 - \frac1N +\eta\beta + N^{-1}\gamma \eta) \|X_k - X_k^j\|^2 + \eta \beta^{-1}\|\Lambda_k\|^2 + \eta^2V_k + N^{-1}\gamma^{-1}\eta \|\Lambda_k^j\|^2,
    $$
    and averaging these for $j=1\dots N$ gives
    \begin{equation}
        S_{k+1}\leq (1 - \frac1N +\eta\beta + N^{-1}\gamma \eta)S_k + \eta\beta^{-1}\|\Lambda_k\|^2 +(\eta^2 +N^{-1}\gamma^{-1}\eta)V_k \enspace.
    \end{equation}
    We choose $\beta = (4N\eta)^{-1}$ and $\gamma=(4\eta)^{-1}$, which finally gives, assuming $N>4$:
    \begin{equation}
    \label{eq:distance_memory_saga}
    S_{k+1}\leq (1 - \frac1{2N})S_k + 4N\eta^2\|\Lambda_k\|^2 +2\eta^2 V_k \enspace.
    \end{equation}

    \paragraph{Control of the merit function}
    The smoothness of the merit function gives once again
    \begin{align}
        \mathbb{E}_i[\mathcal{L}(X_{k+1})]&\leq \mathcal{L}(X_{k}) - \eta \langle \Lambda_k, \nabla \mathcal{L}(X_k)\rangle + \eta^2\frac{L_g}2V_k\\
        &\leq \mathcal{L}(X_{k}) - \eta \rho \| \Lambda_k\|^2+ \eta^2\frac{L_g}2V_k,
    \end{align}
    where the last inequality comes from Proposition~\ref{prop:bound_scalar_with_norm}.
    \paragraph{Variance control}
    We then control $V_k$. By the bias-variance decomposition, using the unbiasedness of $\Lambda_k^j$, we get
    \begin{align}
    V_k &= \|\Lambda_k\|^2 +\frac1N\sum_{j=1}^N\|\Lambda_k^j - \Lambda_k\|^2\\
    &\leq \|\Lambda_k\|^2 +\frac1N\sum_{j=1}^N\|\grad f_i(X_k) - \sk(\Phi_iX_k^\top)X_k\|^2\\
    &\leq\|\Lambda_k\|^2 +\frac1N\sum_{j=1}^N\|\sk((\nabla f(X_k) - \nabla f(X_k^i))X_k^\top)X_k\|^2\\
    &\leq \|\Lambda_k\|^2 + L^2_f(1+\varepsilon)S_k,
    \end{align}
    where $L_f$ is the smoothness constant of $f$.
    \paragraph{Putting it together}
    We get the two inequalities
    \begin{align}
            S_{k+1}&\leq (1 - \frac1{2N} + 2\eta^2L_f^2(1+\varepsilon)^2)S_k + (4N + 2)\eta^2\|\Lambda_k\|^2\\
            \mathbb{E}_i[\mathcal{L}(X_{k+1})]&\leq \mathcal{L}(X_{k}) - (\eta \rho -\eta^2\frac{L_g}2) \| \Lambda_k\|^2+ \eta^2\frac{L_g}2L_f^2(1+\varepsilon)^2S_k \enspace .
    \end{align}
    The hypothesis on the step size simplifies these inequalities to 

    \begin{align}  
S_{k+1}&\leq (1 - \frac1{4N})S_k + (4N + 2)\eta^2\|\Lambda_k\|^2\\
\mathbb{E}_i[\mathcal{L}(X_{k+1})]&\leq \mathcal{L}(X_{k}) - \frac12\eta \rho \| \Lambda_k\|^2+ \eta^2\frac{L_g}2L_f^2(1+\varepsilon)^2S_k\enspace .
    \end{align}
    We now look for a decreasing quantity of the form  $\mathcal{G}_k = \mathcal{L}(X_{k}) + cS_k$, and get
    $$
    \mathbb{E}[G_{k+1}] \leq G_k -(\frac12\eta\rho -c(4N + 2)\eta^2)\|\Lambda_k\|^2 -(\frac c{4N} - \eta^2\frac{L_g}2L_f^2(1+\varepsilon)^2)S_k\enspace.
    $$

    We take $c = 2N \eta^2L_gL_f^2(1+\varepsilon)^2$ in order to cancel the last term. 
    The term in front of $\|\Lambda_k\|^2$ becomes $\frac12\eta\rho - 2N(4N+2)L_gL_f^2(1+\varepsilon)^2\eta^4$, which is lower bounded by $\frac14\eta\rho$ with the last condition on the step size.
    We therefore get
    $$
    \mathbb{E}[G_{k+1}] \leq G_k -\frac14\eta\rho\|\Lambda_k\|^2\enspace.
    $$
    Taking expectations with respect to the whole past and summing yields
    $$
    \frac1K\sum_{k=1}^K\mathbb{E}[\|\Lambda_k\|^2]\leq \frac4{\eta\rho K}(\mathcal{L}(X_0) - \mathcal{L}^*),
    $$
    which concludes the proof.
\end{proof}

\subsection{Proof of Proposition~\ref{prop:linear_problem_with_constraint}}
\begin{proof}
    Let us write $X = QDW^\top$ a singular value decomposition of $X$. Then we find 
    $$g(X) = \langle U^\top QDW^{\top}V, \Sigma\rangle + \frac{\lambda}4\|D^2 - I_p\|^2$$
    The factors $Q$ and $W$ appear only in the first term. Hence we will first consider its minimization. 
    Letting $\tilde{U} = Q^{\top}U$ and $\tilde{V} =W^{\top}V$, Von Neumann's trace inequality gives that the first term verifies

    $$
    \langle \tilde{U}^{\top}D\tilde{V}, \Sigma\rangle \geq -\langle D, \Sigma\rangle
    $$
    This lower bound is attained e.g. for $\tilde{U} = - I_p$ and $\tilde{V}=I_p$, i.e. for $Q = -U$ and $W = V$. Note that any other choice of $\tilde{U}, \tilde{V}$ that minimizes the above inequality leads to the same solution $X^*$.
    We then fall back to the problem of minimizing
    $$
     -\langle D, \Sigma\rangle + \frac{\lambda}4\|D^2 - I_p\|^2
    $$
    with respect to $D$, which is a problem that is separable on the entries of $D$, so that the optimal $D$ is $D = \mathrm{diag}(\sigma_i^*)$ with $\sigma_i^*$ that minimizes the scalar function $x\to -\sigma_i x + \frac\lambda4(x^2 - 1)^2$. We then have $X^* = QDW^{\top} = -U DV^{\top}$.

    The function $x\to -\sigma_i x + \frac\lambda4(x^2 - 1)^2$ is minimized for the value $\sigma_i^*> 1$ such that (cancelling its derivative) it holds 
    $$
    (\sigma_i^*)^3 - \sigma_i^* = \frac{\sigma_i}{\lambda}\enspace.
    $$
    We see that as $\lambda$ goes to $+\infty$ we have $\sigma_i^*$ that goes to $1$ (we recover that $X^*$ tends to orthogonality), and using the implicit function theorem, we find the expansion
    $$
    \sigma_i^* = 1 + \frac{\sigma_i}{2\lambda} +o(\frac1\lambda) \text{ for }\lambda\to+\infty.
    $$
    Furthermore, letting $h(\sigma) = \sigma^3 - \sigma,$ we have 
    $$h(1 + \frac{\sigma_i}{2\lambda}) = \frac{\sigma_i}\lambda + \frac{3\sigma_i^2}{4\lambda^2} +  \frac{\sigma_i^3}{8\lambda^3} \geq \frac{\sigma_i}\lambda.$$
    Since the function $h$ is locally increasing around $1$, we deduce that  $\sigma_i^*  = h^{-1}(\frac{\sigma_i}\lambda) \leq 1 + \frac{\sigma_i}{2\lambda}$.
    Hence, we find $\|X^* - X_{\stiefel}\| = \sqrt{\sum_{i=1}^p(\sigma^*_i - 1)^2} = \frac1{2\lambda}\sqrt{\sum_{i=1}^p\sigma_i^2} + o(\frac{1}{\lambda})$, which goes to $0$ at a $1/\lambda$ rate.

    Finally, the Hessian of $g$ at the optimum is the linear operator $H$ such that 

    $$
    \text{For all } E\in\mathbb{R}^{n\times p}, \enspace H[E] = \lambda\left(E((X^*)^\top 
    X^* - I_p) + X^*(E^{\top}X^* + (X^*)^{\top}E)\right)\enspace.
    $$

    Note that $(X^*)^\top 
    X^* - I_p$ is a symmetric positive-definite matrix. Let us lower-bound the largest eigenvalue of $H$, by taking $E = X^*$. We find 

    \begin{align}
        \langle X^*, H[X^*]\rangle &= \lambda  \left(\langle X^*, X^*((X^*)^\top 
    X^* - I_p)\rangle + 2\langle X^*, X^* (X^*)^{\top}X^*\rangle \right)\\
    &\geq 2\lambda \langle X^*, X^* (X^*)^{\top}X^*\rangle\\
    &\geq 2\lambda (\|X^*\|^2 + \langle X^*, X^* ((X^*)^{\top}X^*-I_p)\rangle)\\ 
    &\geq 2\lambda \|X^*\|^2
    \end{align}
    where for the first and third inequality, we use the positive-definiteness of $(X^*)^\top 
    X^* - I_p$.

    We conclude that $\lambda_{\max}(H) \geq \frac{\langle X^*, H[X^*]\rangle}{\|X^*\|^2}\geq 2\lambda$.
    Similarly, let us upper-bound the smallest eigenvalue of $H$, by taking any $E$ such that $E^{\top}X^* + (X^*)^{\top}E = 0$. We find
     \begin{align}
        \langle E, H[E]\rangle &= \lambda  \langle E, E((X^*)^{\top}X^* - I_p)\rangle\\
        &\leq \lambda \|(X^*)^{\top}X^* - I_p\|_2\|E\|^2\\
        &\leq (\sigma_p + \frac{\sigma_p^2}{4\lambda}) \|E\|^2
    \end{align}
    where we use the inequality $1\leq \sigma_i^* \leq 1 + \frac{\sigma_i}{2\lambda}$ in order to upper bound $\|(X^*)^{\top}X^* - I_p\|_2 = (\sigma_p^*)^2 - 1$ by $\frac{\sigma_p}{\lambda} + \frac{\sigma_p^2}{4\lambda^2}$.
    As a consequence, we have $\lambda_{\min}(H)\leq \frac{\langle E, H[E]\rangle}{\|E\|^2}\leq \sigma_p + \frac{\sigma_p^2}{4\lambda}$.

    Combining the inequalities on the eigenvalues, we find that the conditioning of $H$ is lower-bounded by:
    $$
    \frac{\lambda_{\max}(H)}{\lambda_{\min}(H)}\geq \frac{2\lambda}{\sigma_p + \frac{\sigma_p^2}{4\lambda}} = \frac{2\lambda}{\sigma_p} + o(\lambda) \text{ for }\lambda \to +\infty\enspace.
    $$
\end{proof}
\end{document}